\title{A New Measure of Model Redundancy for Compressed Convolutional Neural Networks}
\author{Feiqing Huang$^\dagger$, Yuefeng Si$^\dagger$, Yao Zheng$^\ddagger$, Guodong Li$^\dagger$\\
	\textit{$\dagger$ Department of Statistics and Actuarial Science, University of Hong Kong}\\
	\textit{$\ddagger$ Department of Statistics, University of Connecticut}}
\newtheorem{assumption}{Assumption}
\newtheorem{lemma}{Lemma}
\newtheorem{theorem}{Theorem}
\newtheorem{corollary}{Corollary}
\newcolumntype{P}[1]{>{\centering\arraybackslash}p{#1}} %
\newcommand{\bm}{\boldsymbol}
\newcommand{\cm}[1]{\mbox{\boldmath$\mathscr{#1}$}}
\newcommand{\ts}{\textsuperscript}
\newcommand{\norm}[1]{\left\lVert#1\right\rVert}
\DeclareMathOperator*{\vectorize}{vec}
\DeclareMathOperator*{\trace}{tr}
\DeclareMathOperator*{\argmin}{arg\,min}
\begin{document}
\setlength{\parindent}{0pt}
\maketitle
\begin{abstract}
	While recently  many designs have been proposed to improve the model efficiency of convolutional neural networks (CNNs) on a fixed resource budget, theoretical understanding of these designs is still conspicuously lacking.
	This paper aims to provide a new framework for answering the question: Is there still any remaining model redundancy in a compressed CNN? We begin by developing a general statistical formulation of CNNs and compressed CNNs via the tensor decomposition, such that the weights across layers can be summarized into a single tensor.  Then, through a rigorous sample complexity analysis, we reveal an important discrepancy between the derived sample complexity and the naive parameter counting, which serves as a direct indicator of the model redundancy.  Motivated by this finding,  we  introduce a new model redundancy measure for compressed CNNs, called the $K/R$ ratio, which further allows for nonlinear activations. The usefulness of this new measure is supported by ablation studies on  popular block designs and datasets.
\end{abstract}

\section{Introduction}

The introduction of AlexNet \cite{krizhevsky2012imagenet}  spurred a line of research in $2$D CNNs, which has progressively achieved high levels of accuracy in the domain of image recognition \cite{simonyan2014very,szegedy2015going,he2016deep,huang2017densely}. The current state-of-the-art CNNs leave little room to achieve significant improvement on accuracy in learning still-images, and attention has hence been diverted towards two directions. The first is to deploy deep CNNs on mobile devices by removing redundancy from the over-parametrized network;  some representative models include MobileNetV1 \& V2, ShuffleNetV1 \& V2 \citep{howard2017mobilenets,sandler2018mobilenetv2,zhang2018shufflenet,ma2018shufflenet}. 
The second direction is to utilize CNNs to learn from higher-order inputs, for instance, video clips \citep{tran2018closer,hara2017learning} and electronic health records \citep{cheng2016risk,suo2017personalized}; this area has not yet seen a widely-accepted state-of-the-art network.
High-order kernel tensors are usually required to account for the multiway dependence of the input. However,
this notoriously leads to heavy computational burden, as the number of parameters to be trained grows exponentially with the dimension of inputs. 
In short, for both directions, model compression is the critical juncture for the success of training and deployment of CNNs.

\paragraph{Compressing CNNs via tensor decomposition}
\citet{denil2013predicting} showed that there is huge redundancy in network weights, as the entire network can be approximately recovered with a small fraction of parameters. Tensor decomposition has recently been widely used to compress the weights in CNNs \citep{lebedev2014speeding,kim2015compression, kossaifi2019factorized,hayashi2019exploring}. 
Specifically, the weights at individual layers are first rearranged into tensors, and then tensor decomposition, CP or Tucker, can be applied separately at each layer to reduce the number of parameters.
Different tensor decompositions for convolution layers lead to a variety of compressed CNN block designs.
For instance, the bottleneck block in ResNet \citep{he2016deep} corresponds to the convolution kernel with a special Tucker low-rank structure. The depthwise separable block in MobileNetV1 and ShuffleNetV2 and the inverted residual block in MobileNetV2 correspond to the convolution kernel with special CP forms.	
Some typical examples of CNN block designs are shown in Figure \ref{fig:bottleneck}; a detailed discussion is given in Section 4.
All the above studies concern 2D CNNs, while \citet{kossaifi2019factorized} and \citet{su2018tensorial} consider tensor decomposition to factorize convolution kernels for higher-order tensor inputs.

\begin{figure}[ht]
	\centering
	\includegraphics[width=0.9\linewidth]{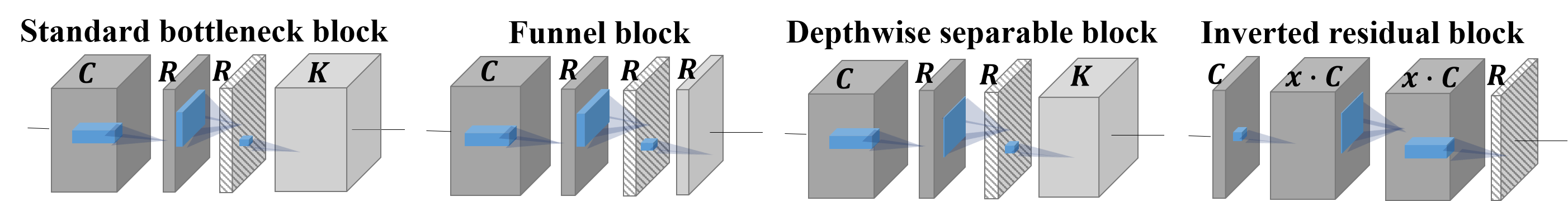}
	\caption{We use thickness of each layer to indicate its relative number of channels. The last 
		layer indicates the output of the CNN block. The dashed layer represents the "bottleneck" of the block.}
	\label{fig:bottleneck}
\end{figure}

Parameter efficiency of the aforementioned architectures has been heuristically justified by methods such as FLOPs counting, naive parameter counting and/or empirical running time.
However,  the theoretical mechanism through which the tensor decomposition compresses CNNs is still largely unknown, and so is  the question  of how to measure the degree of potential model redundancy.
This paper aims to provide a solution from a statistical perspective.
To begin with, we need to clearly specify the definition of model redundancy.

\paragraph{Sample complexity analysis and model redundancy} 
\citet{du2018many} first characterized the statistical sample complexity of a CNN; see also \citet{wang2019compact} for compact autoregressive nets.
Specifically, consider a CNN model, $y=F_{\textrm{CNN}}(\bm{x}, \cm{W})+\xi$, where $y$ and $\bm{x}$ are the output and input, respectively, $\cm{W}$ is a composite weight tensor and $\xi$ is an additive error.
Given the trained and true underlying networks $F_{\textrm{CNN}}(\bm{x}, \cm{\widehat{W}})$ and $F_{\textrm{CNN}}(\bm{x}, \cm{W}^*)$, the root-mean-square prediction error is defined as
\begin{equation}\label{predict}
	\cm{E}(\cm{\widehat{W}})=\sqrt{E_{\bm{x}}|F_{\textrm{CNN}}(\bm{x}, \cm{\widehat{W}})- F_{\textrm{CNN}}(\bm{x}, \cm{W}^*)|^2},
\end{equation}
where $\cm{\widehat{W}}$ and $\cm{W}^*$ are trained and true underlying weights, respectively, and $E_{\bm{x}}$ is the expectation on $\bm{x}$.
The sample complexity analysis, which determines how many samples are needed to guarantee a given tolerance on the prediction error,  provides a theoretical foundation for measuring model redundancy.

Specifically, given  the true underlying model $F_0$, consider two training models $F_1$ and $F_2$ such that $F_0\subseteq F_1\subset F_2$, i.e. $F_1$ is more compressed than $F_2$.
If $F_1$ and $F_2$ achieve the same prediction error on any given training sets, we can then argue that $F_2$ has some redundant parameters compared with $F_1$. Enabled by recent advances in non-asymptotic high-dimensional statistics, this paper provides a rigorous statistical approach to measuring model redundancy. Note that existing theoretical tools are applicable mainly to linear models, or at least is reliant on some form of convexity of the loss function; see \cite{wainwright2019high} for a comprehensive review.

\begin{figure}[h]
	\centering
	\resizebox{0.9\columnwidth}{!}{%
		\begin{tikzpicture}[thick, scale=0.6,
			arrow/.style={-{Stealth[]}}]
			\node[draw,
			rectangle,
			minimum width=0.8cm,
			minimum height=0.8cm, align=center, font=\linespread{0.8}\selectfont] (n1) {Compressed CNN \\ formulation};
			
			\node[draw,
			right=of n1,
			minimum width=0.8cm,
			minimum height=0.8cm, align=center, font=\linespread{0.8}\selectfont
			] (n2) { Sample complexity \\ \small{$d_{\mathcal{M}}^c=R\prod_{i=1}^{N}R_i+\sum_{i=1}^{N}l_iR_i+{\color{blue}RP}$} };
			
			\node[draw,
			below=of n2,
			minimum width=0.3cm,
			minimum height=0.3cm, align=center, font=\linespread{0.8}\selectfont
			] (n3) {Discrepancy};
			
			\node[draw,
			below=of n3,
			minimum width=0.8cm,
			minimum height=0.8cm, align=center, font=\linespread{0.8}\selectfont
			] (n4) {Naive parameter count \\ \small{$d_{\#}^c=R\prod_{i=1}^{N}R_i+\sum_{i=1}^{N}l_iR_i+{\color{blue}RK+KP}$}};
			
			\node[draw,
			right=of n3,
			minimum width=0.8cm,
			minimum height=0.8cm, align=center, font=\linespread{0.8}\selectfont
			] (n5) {$K/R$};
			
			\node[coordinate, 
			right=1cm of n5] (n6) {};
			
			\node[draw,
			above right=0.7cm and 3.7cm of n6,
			minimum width=0.8cm,
			minimum height=0.8cm, align=center, font=\linespread{0.8}\selectfont
			] (n7) {Have redundancy};
			
			\node[draw,
			right=5cm of n6,
			minimum width=0.8cm,
			minimum height=0.8cm, align=center, font=\linespread{0.8}\selectfont
			] (n8) {Possible redundancy};
			
			\node[draw,
			below right=0.7cm and 3.7cm of n6,
			minimum width=0.8cm,
			minimum height=0.8cm, align=center, font=\linespread{0.8}\selectfont
			] (n9) {No redundancy};
			
			\draw[-latex] (n1) edge (n2)
			(n2) edge (n3)
			(n4) edge (n3)
			(n3) edge (n5);
			
			\draw (n5) -- (n6);
			
			\draw[-latex] (n6) |- (n7)
			node[pos=0.8,above]{$K/R\gg 1$};
			
			\draw[-latex] (n6) -- (n8)
			node[midway,above]{$K/R$ slighly larger than 1};
			
			\draw[-latex] (n6) |- (n9)
			node[pos=0.8,above]{$K/R=1$};
			
		\end{tikzpicture}
	}
	\caption{Proposed general framework for model redundancy quantification, where $K= $ \# of channels in the output layer, and $R= $ \# of channels in the bottleneck layer; see Sections 2 \& 3 for detailed definitions of notations.}
	\label{fig:flowchart}
\end{figure}
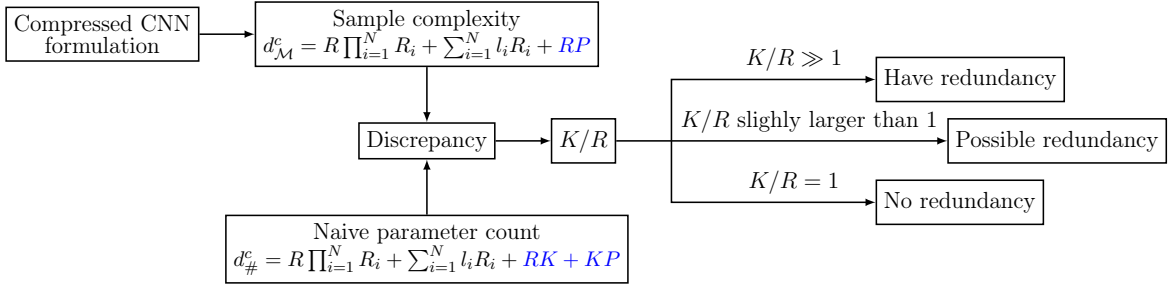

\paragraph{A general framework and a new  measure of model redundancy} 
For  compressed CNNs, we introduce a general sample complexity based framework and a new measure to quantify the model redundancy; see Figure \ref{fig:flowchart} for an illustration. Specifically:

1. For high-order inputs, we first develop statistical formulations for CNNs and compressed CNNs via tensor decomposition, which exactly replicates the operations in a multilayer CNN.

2. Then based on the sample complexity analysis of compressed CNNs with linear activations, we  discover an important discrepancy between the derived sample complexity ($d_{\mathcal{M}}^c$) and the naive parameter counting ($d_{\#}^c$), an indicator of potential model redundancy.

3. By further taking into account the nonlinearity in both models and data, we introduce a new measure, called the $K/R$ ratio, to quantify any inefficient redundancy in a CNN block design.

It is  worth noting that the above framework is not limited to CNNs, but is potentially extensible to other network models: RNNs, GNNs, etc.

\subsection{Comparison with other existing works}
Since our theory is based on
upper bounding the prediction error in \eqref{predict}, it is necessary to point out how it differs from the vast literature on the generalization ability of deep neural networks \citep{arora2018stronger,li2020understanding}. Studies on generalization bounds focus on how well a network generalizes to new test samples, whereas we aim to quantify the remaining redundancy in a compressed network architecture.  
The distinctive  objectives naturally leads to different methodologies. Note that the discrepancy between the sample complexity and the naive parameter counting gives a direct measure of model redundancy, while our approach is the only one that enables an exact analysis of this quantity.

A comprehensive review by \citet{valle2020generalization} shows that, most types of generalization bounds are characterized by the norm of network weights or other related quantities such as the Lipschitz constant of the network; some examples include the margin-based bounds \citep{bartlett2017spectrally}, the sensitivity-based bounds \citep{neyshabur2017pac}, the NTK-based bounds \citep{cao2019tight} or the compression-based bounds \citep{li2020understanding}.
They are model-agnostic in the sense that, only a generic functional form of the network is required and the weights are then implicitly regularized by some learning algorithms. As a result, their derived sample complexity cannot be staightforwardly used to pinpoint the redundancy in a well-specified network architecture. The VC bounds \citep{vapnik2013nature}, though not norm-based per se, are easily corrupted by redundant parameters, and hence are also unsuitable for our target.

Other existing works on  theoretical understanding of neural networks include 
parameter recovery with gradient-based algorithms for deep neural networks \citep{fu2020guaranteed}; the development of other provably efficient algorithms \citep{,du2018improved}; and the investigation of convergence in an over-parameterized regime \citep{allen2018convergence}. Our work also differs greatly from these in both aim and methodology, and we do not consider computational complexity or algorithmic convergence. 

\subsection{Notations}
We follow the notations in \citet{kolda2009tensor} to denote vectors by lowercase boldface letters, e.g. $\bm{a}$; matrices by capital boldface letters, e.g. $\bm{A}$; tensors of order 3 or higher by Euler script boldface letters, e.g. $\cm{A}$. 
For an $N$th-order tensor $\cm{A}\in\mathbb{R}^{l_1\times\cdots\times l_N}$, denote its elements by $\cm{A}(i_1,i_2,\dots,i_N)$ and the $n$-mode unfolding by $\cm{A}_{(n)}$, where the columns of $\cm{A}_{(n)}$ are the $n$-mode vectors of $\cm{A}$, for $1\leq n\leq N$.
The vectorization of the tensor $\cm{A}$ is denoted by $\text{vec}(\cm{A})$. 
The inner product of two tensors $\cm{A},\cm{B}\in\mathbb{R}^{l_1\times\cdots\times l_N}$ is defined as $\langle\cm{A},\cm{B}\rangle=\sum_{i_1}\cdots\sum_{i_N}\cm{A}(i_1,\dots,i_N)\cm{B}(i_1,\dots,i_N)$, and the Frobenius norm is $\|\cm{A}\|_{\text{F}}=\sqrt{\langle\cm{A},\cm{A}\rangle}$.
The mode-$n$ multiplication $\times_n$ of a tensor $\cm{A}\in\mathbb{R}^{l_1\times\cdots\times l_N}$ and a matrix $\bm{B}\in\mathbb{R}^{p_n\times l_n}$ results in a tensor $\cm{C}$ of size $\mathbb{R}^{l_1\times\cdots\times p_n\times\cdots\times l_N}$, where
$\cm{C}(i_1,\dots,j_n,\dots,i_N)
=\sum_{i_n=1}^{l_n}\cm{A}(i_1,\dots,i_n,\dots,i_N)\bm{B}(j_n,i_n)$,
for $1\leq j_n\leq p_n$ and $1\leq n\leq N$. 
We use the symbol ``$\otimes$" to denote the Kronecker product.
For any positive sequences $\{a_n\}$ and $\{b_n\}$, $a_n\lesssim b_n$ and $a_n\gtrsim b_n$ denote that there exists a positive constant $C$ such that $a_n\leq Cb_n$ and $a_n\geq Cb_n$, respectively.

There are two commonly used methods for tensor decomposition. The first one is Canonical Polyadic (CP) decomposition \citep{kolda2009tensor}: it factorizes the tensor $\cm{A}\in\mathbb{R}^{l_1\times\cdots\times l_N}$ into a sum of rank-1 tensors, i.e.
$\cm{A} = \sum_{r=1}^{R}\alpha_r\bm{h}^{(1)}_r\circ \bm{h}^{(2)}_r\circ\cdots\circ\bm{h}^{(N)}_r$, where $\bm{h}^{(j)}_r$ is a unit-norm vector of size $\mathbb{R}^{l_j}$ for all $1\leq j\leq N$.
The CP rank, denoted by $R$, is the smallest number of rank-1 tensors.
The other one is Tucker decomposition: the Tucker ranks of an $N$th-order tensor $\cm{A}\in\mathbb{R}^{l_1\times\cdots\times l_N}$ are defined as the matrix ranks of the unfoldings of $\cm{A}$ along all modes, i.e. $\text{rank}_i(\cm{A})=\text{rank}(\cm{A}_{(i)})$, $1\leq i\leq N$. 
If the Tucker ranks of $\cm{A}$ are $(R_1,\dots,R_N)$, 
then there exist a core tensor $\cm{G}\in\mathbb{R}^{R_1\times\cdots\times R_N}$ and matrices $\bm{H}^{(i)}\in\mathbb{R}^{l_i\times R_i}$ such that
$\cm{A}=\cm{G}\times_1\bm{H}^{(1)}\times_2\bm{H}^{(2)}\cdots\times_N\bm{H}^{(N)}$, 
known as Tucker decomposition \citep{tucker1966some}.

\section{Statistical formulation for (compressed) CNNs}

\subsection{Model for basic three-layer CNNs}\label{ses:3-layer}

Consider a three-layer CNN with one convolution, one average pooling and one fully-connected layer. Specifically, for a general tensor-structured input $\cm{X}\in\mathbb{R}^{d_1\times d_2\times\cdots\times d_N}$, we first perform its convolution with an $N$th-order kernel tensor $\cm{A}\in\mathbb{R}^{l_1\times l_2\times\cdots\times l_N}$ to get an intermediate output $\cm{X}_c\in\mathbb{R}^{m_1\times m_2\times\cdots\times m_N}$, and then use average pooling with pooling sizes $(q_1,\cdots,q_N)$ to obtain another intermediate output $\cm{X}_{cp}\in\mathbb{R}^{p_1\times p_2\times\cdots\times p_N}$. Finally, $\cm{X}_{cp}$ goes through a fully-connected layer, with weight tensor $\cm{B}\in\mathbb{R}^{p_1\times p_2\times\cdots\times p_N}$ to produce a scalar output; see Figure \ref{fig:CNN}. 
\begin{figure}[ht]
	\centering
	\includegraphics[width=0.98\linewidth]{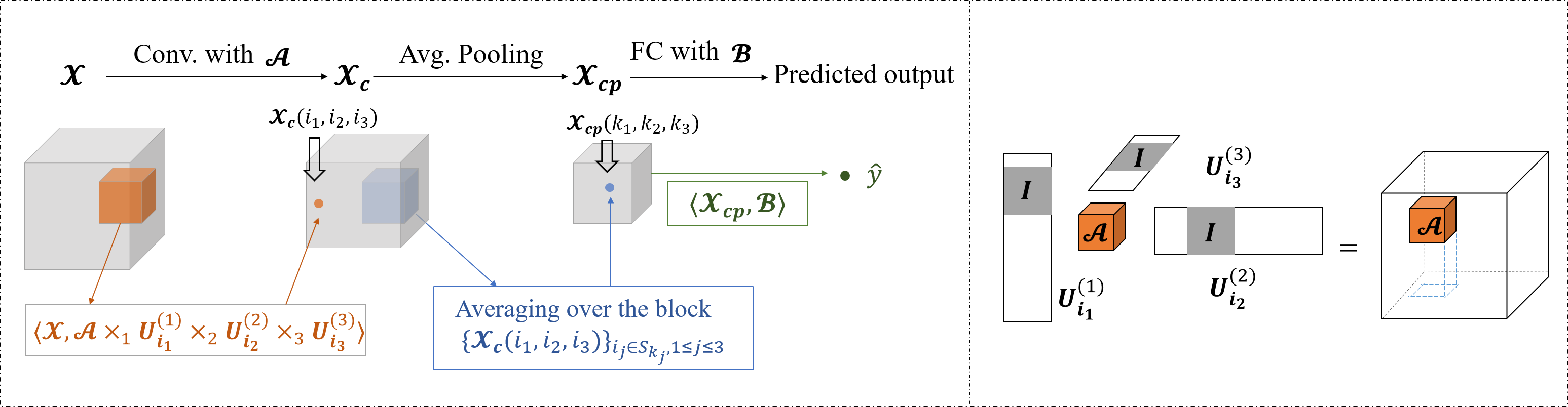}
	\caption{When $N=3$, a 3-layer CNN for an input $\cm{X}$ with one kernel tensor $\cm{A}$, average pooling, and fully-connected weights $\cm{B}$. The right panel shows that the matrices $\bm{U}_{i_j}^{(j)}$ act as positioning factors to stretch the kernel tensor $\cm{A}$ into the same size as $\cm{X}$. The white spaces indicate zero entries.}
	\label{fig:CNN}
\end{figure}

We first consider the convolution layer with  stride size  $s_c$ along each dimension. Assume that $m_j=(d_j-l_j)/s_c+1$ are integers for $1\leq j\leq N$; otherwise zero-padding will be needed. For $1\leq i_j\leq m_j$, $1\leq j\leq N$, let
\begin{equation}\label{eq:U}
	\bm{U}_{i_j}^{(j)} = [\underbrace{\bm{0}}_{(i_j-1)s_c} \underbrace{\bm{I}}_{l_j} \underbrace{\bm{0}}_{d_j-(i_j-1)s_c-l_j}]^\prime \in\mathbb{R}^{d_j\times l_j},
\end{equation}
which acts as positioning factors to stretch the kernel tensor $\cm{A}$ 
into a tensor of the same size as $\cm{X}$, while the rest of entries are filled with zeros; see Figure \ref{fig:CNN}. As a result, $\cm{X}_c$ has entries
$
\cm{X}_c(i_1,\ldots,i_N)= g\left(\langle\cm{X},\cm{A}\times_1\bm{U}_{i_1}^{(1)}\times_2\cdots \times_N\bm{U}_{i_N}^{(N)} \rangle\right),
$
where $g(\cdot)$ is an activation function.

For the pooling layer with  stride size  $s_p$ along each dimension, we assume the pooling sizes $\{q_j\}_{j=1}^N$ satisfy $m_j=q_j + (p_j-1)s_p$, where the sliding windows can be overlapped. But for ease of notation, we can simply take $q_j = m_j/p_j$.
There are a total of $P=p_1\cdots p_N$ pooling blocks in $\cm{X}_c$, with the $(k_1,\cdots,k_N)$th block formed by $\{\cm{X}_c(i_1,\cdots,i_N)\}_{i_j\in S_{k_j},1\leq j\leq N}$, where the index set
$
S_{k_j} = \{(k_j-1)q_j+1\leq i\leq k_jq_j:1\leq j\leq N\}.
$
By taking the average per block, the resulting tensor $\cm{X}_{cp}$ has entries of
$\cm{X}_{cp}(k_1,\ldots,k_N)=(q_1\cdots q_N)^{-1}\sum_{i_j\in S_{k_j},1\leq j\leq N}\cm{X}_c(i_1,\ldots,i_N)$, and the predicted output has the form of 
$\widehat{y}= \langle\cm{X}_{cp},\cm{B}\rangle$ .

Similarly, for a CNN with $K$ kernels, denote by $\{\cm{A}_k,\cm{B}_k\}_{k=1}^K$  the set of kernels and fully-connected weights, where $\cm{B}_k\in\mathbb{R}^{p_1\times p_2\times\cdots\times p_N}$ and 
$\cm{A}_k\in\mathbb{R}^{l_1\times l_2\times\cdots\times l_N}$.
Further taking into account an additive random error  $\xi^i$, the CNN model can be mathematically formulated into
\begin{equation}
	\label{eq:model-nonlinear}
	y^i = \widehat{y}^i+\xi^i = \sum_{k=1}^K\langle\cm{X}_{cp}(\cm{X}_i,\cm{A}_k),\cm{B}_k\rangle + \xi^i, \hspace{2mm}1\leq i\leq n,
\end{equation}
where the notation of $\cm{X}_{cp}(\cm{X}_i,\cm{A}_k)$ is used to emphasize its dependence on $\cm{X}_i$ and $\cm{A}_k$.

When the activation function  $g(\cdot)$ is linear, the entries of $\cm{X}_{cp}$ have a simplified form of 
$
\cm{X}_{cp}(k_1,\ldots,k_N) = \langle\cm{X},\cm{A}\times_1\bm{U}^{(1)}_{\mathcal{F},k_1}\times_2\cdots \times_N\bm{U}^{(N)}_{\mathcal{F},k_N}\rangle,
$
where $\bm{U}^{(j)}_{\mathcal{F},k_j} = q_j^{-1}\sum_{i_j\in S_{k_j}}\bm{U}_{i_j}^{(j)}$.
The intuition behind  is that, the averaging operation on subtensors of $\cm{X}_c$ can be transfered to the positioning matrices $\{\bm{U}_{i_j}^{(j)}\}_{i_j\in S_{k_j},1\leq j\leq N}$.
Denote $\bm{U}^{(j)}_{\mathcal{F}} = (\bm{U}^{(j)}_{\mathcal{F},1},\cdots,\bm{U}^{(j)}_{\mathcal{F},p_j})$ for $1\leq j\leq N$, and then
$
\langle\cm{X}_{cp},\cm{B}\rangle =\langle\cm{X},(\cm{B}\otimes\cm{A})\times_1\bm{U}^{(1)}_{\mathcal{F}}\times_2\cdots \times_N\bm{U}^{(N)}_{\mathcal{F}}\rangle.
$
As a result, the CNN model in \eqref{eq:model-nonlinear} has the linear form of $y^i = \langle\cm{X}^i, \cm{W}_X\rangle + \xi^i$, where  $\cm{X}^i\in\mathbb{R}^{d_1\times d_2\times\cdots\times d_N}$, and the composite weight tensor is 
\begin{equation}\label{tucker-coe}
	\cm{W}_X = (\sum_{k=1}^{K}\cm{B}_k\otimes\cm{A}_k)\times_1\bm{U}^{(1)}_\mathcal{F}\times_2\cdots\times_N\bm{U}^{(N)}_\mathcal{F}.
\end{equation}

Let $P = \prod_{i=1}^{N}p_i$ and $L=\prod_{i=1}^{N}l_i$.  Note that the naive count of the number of parameters in model \eqref{eq:model-nonlinear} is given by 
\begin{equation}\label{firstparamno}
	d_{\#}^u = K(P+L),
\end{equation}
where the superscript $u$ stands for the uncompressed CNN.
Other methods for measuring model efficiency include FLOPs counting and empirical running time, yet they are all heuristic without theoretical justification.

The  weight tensor in \eqref{tucker-coe} has a compact ``Tucker-like" structure characterizing the weight-sharing pattern of the CNN.
The factor matrices $\{\bm{U}^{(j)}_\mathcal{F}\in\mathbb{R}^{d_j\times l_jp_j}\}_{j=1}^N$ are fixed and solely determined by CNN operations on the inputs. They have full column ranks since $p_jl_j\leq d_j$ always holds. The core tensor is a special Kronecker product that depicts the layer-wise interaction between 
weights.

\subsection{Model for compressed CNNs via tensor decomposition} 

For high-order inputs, a deep CNN with a large number of kernels may involve heavy computation, which renders it difficult to train on portable devices with limited resources.
In real applications, many compressed CNN block designs have been proposed to improve the efficiency,  most of which are based on either matrix factorization or tensor decomposition \citep{lebedev2014speeding,kim2015compression,astrid2017cp,kossaifi2019factorized}.

Tucker decomposition can be used to compress CNNs. This is equivalent to introducing a multilayer CNN block; see Figure 3 in \citet{kim2015compression}.
Specifically, we stack the kernels $\cm{A}_k\in \mathbb{R}^{l_1\times l_2\times\cdots\times l_{N}}$ with $1\leq k\leq K$ into a higher order tensor $\cm{A}_\mathrm{stack}\in\mathbb{R}^{l_1\times l_2\times\cdots\times l_{N}\times K}$. The $(N+1)$th dimension is commonly referred to in the literature as the output channels. Assume that $\cm{A}_\mathrm{stack}$ has Tucker ranks of $(R_1,R_2,\cdots,R_N,R)$. Then we can write
\begin{equation}\label{Tucker}
	\cm{A}_\mathrm{stack} = \cm{G}\times_1 \bm{H}^{(1)}\times_2 \bm{H}^{(2)}\cdots\times_{N+1} \bm{H}^{(N+1)},
\end{equation}
where $\cm{G}\in \mathbb{R}^{R_1\times\cdots\times R_N\times R}$ is the core tensor, and $\bm{H}^{(j)}\in \mathbb{R}^{l_j\times R_j}$ for $ 1\leq j\leq N$ and $\bm{H}^{(N+1)}\in\mathbb{R}^{K\times R}$ are factor matrices. The naive count of the number of parameters in this model is
\begin{equation}\label{secondparamno}
	d_{\#}^c = R\prod_{i=1}^{N}R_i+\sum_{i=1}^{N}l_iR_i+RK+KP.
\end{equation}
The CP decomposition is also popular in compressing CNNs \citep{astrid2017cp,kossaifi2019factorized,lebedev2014speeding}, in which case the naive parameter counting becomes $R(\sum_{i=1}^{N}l_i + K + 1) + KP$, where $R$ is the CP rank.

\section{Sample complexity analysis}

We begin with the sample complexity analysis of the CNN model in \eqref{eq:model-nonlinear} with linear activation functions. Let ${\cm{Z}}^i = \cm{X}^i\times_1\bm{U}^{(1)\prime}_\mathcal{F}\times_2\bm{U}^{(2)\prime}_\mathcal{F}\times_3\cdots\times_N\bm{U}^{(N)\prime}_\mathcal{F}\in\mathbb{R}^{l_1p_1\times l_2p_2\times\cdots\times l_Np_N}$. By the derivations in Section 2.1, the CNN model can be written equivalently as 
\begin{equation}\label{eq:model-x}
	y^i = \langle\cm{Z}^i, \cm{W}\rangle + \xi^i=\sum_{k=1}^{K}\langle\cm{Z}^i, \cm{B}_k\otimes\cm{A}_k\rangle + \xi^i.
\end{equation}
where $\cm{W}=\sum_{k=1}^{K}\cm{{B}}_k\otimes\cm{{A}}_k$. 
Then  $\cm{W}^*=\sum_{k=1}^{K}\cm{B}_k^*\otimes\cm{A}_k^*$ is the true weight, and the trained weight is given by $\cm{\widehat{W}}=\sum_{k=1}^{K}\cm{\widehat{B}}_k\otimes\cm{\widehat{A}}_k$, where
\begin{equation}\label{lse}
	\{\cm{\widehat{B}}_k,\cm{\widehat{A}}_k\} _{1\leq k\leq K}
	= \argmin_{\cm{B}_k,\cm{A}_k,1\leq k\leq K}\frac{1}{n}\sum_{i=1}^{n}\left( y^i -  \sum_{k=1}^{K}\langle\cm{Z}^i, \cm{B}_k\otimes\cm{A}_k\rangle\right)^2.
\end{equation}

Denote $\bm{U}_G = \bm{U}_\mathcal{F}^{(1)}\otimes\{\bm{U}_\mathcal{F}^{(N)}\otimes[\bm{U}_\mathcal{F}^{(N-1)}\otimes \cdots \otimes (\bm{U}_\mathcal{F}^{(3)} \otimes \bm{U}_\mathcal{F}^{(2)})] \}$. It can be verified that $\vectorize(\cm{Z}^{i}) = \bm{U}_G^{\prime}\vectorize(\cm{X}^{i})$, i.e., $\bm{U}_G$ is the fixed operation on the inputs.

\begin{assumption} \label{assum1}
	(i)  $\bar{\bm{x}} := (\vectorize(\cm{X}^{1})^{\prime},\vectorize(\cm{X}^{2})^{\prime},\ldots,\vectorize(\cm{X}^{n})^{\prime})^{\prime}$ is normally distributed with mean zero and variance $\bm{\Sigma}=\mathbb{E}(\bar{\bm{x}}\bar{\bm{x}}^{\prime}) $, where $c_x\bm{I}\leq\bm{\Sigma}\leq C_x\bm{I}$ for some universal constants $0<c_x<C_x$.  (ii)
	$\{\xi^i\}$ are independent $\sigma^2$-sub-Gaussian random variables with mean zero, and is independent of $\{\cm{X}^j, 1\leq j\leq i\}$ for all $1\leq i\leq n$.  (iii)
	$c_u\bm{I} \leq \bm{U}_G'\bm{U}_G\leq C_u\bm{I}$ for some universal constants  $0<c_u<C_u$.
\end{assumption}

\begin{theorem}[Sample complexity of CNN]
	\label{thm:errorbound-1}
	Under Assumption \ref{assum1}, if 
	$n\gtrsim d_{\mathcal{M}}^u$,  then 
	with probability at least $1-c\exp\{-(c_1n-c_2d_{\mathcal{M}}^u)\}-\exp\{-cd_{\mathcal{M}}^u\}$, it holds 
	$
	\cm{E}(\cm{\widehat{W}})\lesssim  \sqrt{{d_{\mathcal{M}}^u}/{n}} 
	$, where $c, c_1, c_2>0$ are universal constants and 
	\begin{equation}\label{samcom1}
		d_{\mathcal{M}}^u=
		K(P+L+1)\quad \text{if }K<\min(P,L) \quad\text{or}\quad
		d_{\mathcal{M}}^u=PL\quad \text{if }K\geq\min(P,L)
	\end{equation}
\end{theorem}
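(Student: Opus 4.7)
The plan is to recognize \eqref{eq:model-x} as an instance of low-rank matrix sensing and apply the standard three-step machinery: a basic inequality, a restricted eigenvalue lower bound, and a gradient upper bound via covering. I would first reformulate the parametrization. After an appropriate permutation of indices, the composite weight $\cm{W}=\sum_{k=1}^{K}\cm{B}_k\otimes\cm{A}_k$ corresponds to a matrix $\bm{W}\in\mathbb{R}^{P\times L}$ with $\bm{W}=\sum_{k=1}^{K}\vectorize(\cm{B}_k)\vectorize(\cm{A}_k)^{\prime}$, so $\rank(\bm{W})\leq K$, and $\langle\cm{Z}^i,\cm{W}\rangle=\langle\bm{Z}^i,\bm{W}\rangle$ for a corresponding matrix $\bm{Z}^i\in\mathbb{R}^{P\times L}$. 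Hence \eqref{lse} is least squares over the non-convex cone of matrices of rank at most $K$, and $\bm{\Delta}:=\widehat{\bm{W}}-\bm{W}^*$ has rank at most $2K$.

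Next, I would establish two concentration ingredients for the anisotropic Gaussian design $\vectorize(\bm{Z}^i)=\bm{U}_G^{\prime}\vectorize(\cm{X}^i)$, whose covariance spectrum is pinched between $c_xc_u$ and $C_xC_u$ by Assumption \ref{assum1}(i) and (iii). The first is a \emph{restricted eigenvalue}: $n^{-1}\sum_i\langle\bm{Z}^i,\bm{\Delta}\rangle^2\geq\kappa\|\bm{\Delta}\|_F^2$ uniformly over rank-$(2K)$ matrices. I would prove it by $\epsilon$-netting the unit-Frobenius rank-$(2K)$ sphere via its SVD factorization; the two Stiefel factors and the singular-value vector contribute $2K(P+L)$ and $2K$ covering dimensions respectively, yielding a net of size $(C/\epsilon)^{2K(P+L+1)}$, which explains the ``$+1$'' appearing in $d_{\mathcal{M}}^u$. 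On each net point Hanson--Wright controls the Gaussian quadratic form; a union bound plus the standard net-to-sphere discretization yields the uniform lower bound once $n\gtrsim K(P+L+1)$. The second ingredient is a \emph{noise--design} bound $\sup\{n^{-1}\sum_i\xi^i\langle\bm{Z}^i,\bm{\Delta}\rangle:\rank(\bm{\Delta})\leq 2K,\,\|\bm{\Delta}\|_F=1\}\lesssim\sigma\sqrt{K(P+L+1)/n}$, obtained by sub-Gaussian concentration on the same net via Assumption \ref{assum1}(ii).

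Combining these with the basic inequality $n^{-1}\sum_i\langle\bm{Z}^i,\bm{\Delta}\rangle^2\leq 2n^{-1}\sum_i\xi^i\langle\bm{Z}^i,\bm{\Delta}\rangle$ (coming from optimality of $\widehat{\bm{W}}$) gives $\|\bm{\Delta}\|_F\lesssim\sqrt{K(P+L+1)/n}$. To pass to the prediction error I would use \eqref{tucker-coe} together with Assumption \ref{assum1}(i) to express $\cm{E}(\widehat{\cm{W}})^2$ as a quadratic form in $\vectorize(\bm{\Delta})$ whose kernel has spectrum bounded above by $C_xC_u$, yielding $\cm{E}(\widehat{\cm{W}})\lesssim\|\bm{\Delta}\|_F$. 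In the regime $K\geq\min(P,L)$ the rank constraint is vacuous and the same argument without any low-rank covering — ordinary least squares in $\mathbb{R}^{PL}$ — gives the $\sqrt{PL/n}$ bound.

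The main obstacle is the restricted eigenvalue step. Because $\bm{Z}^i$ has non-identity covariance arising from $\bm{U}_G$ entangling the entries of $\cm{X}^i$, off-the-shelf isotropic matrix-sensing RSC results do not apply directly, and an anisotropic version is required. The saving grace is that Assumption \ref{assum1}(i)--(iii) together pin the design covariance on both sides, so Hanson--Wright combined with the $\epsilon$-net over the SVD parameters suffices. The delicate bookkeeping is to track the covering constants and sub-Gaussian deviation exponents so that the resulting probability statement exactly matches the form $1-c\exp\{-(c_1n-c_2d_{\mathcal{M}}^u)\}-\exp\{-cd_{\mathcal{M}}^u\}$ stated in the theorem.
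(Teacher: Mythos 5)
Your proposal follows essentially the same route as the paper's proof: the reduction to rank-$K$ matrix sensing via $\sum_k\vectorize(\cm{B}_k)\vectorize(\cm{A}_k)^{\prime}$ (the paper's map $T_K$ in its partition lemma), the $\varepsilon$-net of the rank-$2K$ unit-Frobenius sphere with covering exponent $2K(P+L+1)$, Hanson--Wright for the anisotropic restricted eigenvalue bounds under Assumption 1(i)\&(iii), a net-based bound on the noise--design term, the basic inequality from optimality of the least-squares estimator, and the final passage $\cm{E}(\cm{\widehat{W}})\leq\sqrt{\kappa_U}\|\widehat{\bm{\Delta}}\|_{\mathrm{F}}$. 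The only cosmetic differences are that the paper cites the low-rank covering number rather than rederiving it from the SVD factors, and states its noise bound as a conditional martingale concentration, but these coincide in substance with what you describe.
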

The above theorem states that the prediction error $\cm{E}(\cm{\widehat{W}})$ scales as $\sqrt{d_{\mathcal{M}}^u/n}$ with high probability, 
which implies that the number of samples required to achieve prediction error $\varepsilon$ is of order $d_{\mathcal{M}}^u/\varepsilon^2$; henceforth, we may regard $d_{\mathcal{M}}^u$ as the  sample complexity of the uncompressed CNN.
Note that $d_{\mathcal{M}}^u$ in \eqref{samcom1} is roughly equal to the naive parameter counting in \eqref{firstparamno} when $K$ is small. However, when $K\geq \min(P,L)$, $d_{\mathcal{M}}^u=PL$ can be much smaller than the naive parameter counting $K(P+L)$.

Next we consider the compressed CNN in Section 2.2. Training this model is equivalent to searching for the least-square estimator in \eqref{lse} with $\cm{A}_\mathrm{stack}$ subject to the structural constraint in the form of \eqref{Tucker}.
The following theorem provides sample complexities for compressed CNNs via the Tucker or CP decomposition, where the trained wieghts are denoted by $\cm{\widehat{W}}_{\mathrm{TU}}$ and $\cm{\widehat{W}}_{\mathrm{CP}}$, respectively.

\begin{theorem}[Sample complexity of compressed CNN]\label{thm:tucker}
	Under Assumption \ref{assum1}, if $n\gtrsim d_{\mathcal{M}}^c$, then
	with probability at least $1-c\exp\{-[c_1n-c_2d_{\mathcal{M}}^c]\}-\exp\{ -cd_{\mathcal{M}}^c\}$, it holds 
	$
	\cm{E}(\cm{\widehat{W}}_{\mathrm{TU}})\lesssim \sqrt{{d_{\mathcal{M}}^c}/{n}}$, where $c, c_1, c_2>0$ are universal  constants and 
	\begin{align}\label{compress-cnn}
		d_{\mathcal{M}}^c=
		R\prod_{i=1}^{N}R_i+\sum_{i=1}^{N}l_iR_i+RP.
	\end{align}
	Moreover, since CP is a special case of Tucker, if $n\gtrsim \widetilde{d}_{\mathcal{M}}^c$, then  	with probability at least $1-c\exp\{-[c_1n-c_2\widetilde{d}_{\mathcal{M}}^c]\}-\exp\{ -c\widetilde{d}_{\mathcal{M}}^c\}$, it holds $\cm{E}(\cm{\widehat{W}}_{\mathrm{CP}})\lesssim \sqrt{{\widetilde{d}_{\mathcal{M}}^c}/{n}}$, where $\widetilde{d}_{\mathcal{M}}^c$ is defined by setting  $R_i=R$ in $d_{\mathcal{M}}^c$ above, for all $1\leq i\leq N$.
\end{theorem}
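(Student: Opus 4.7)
The plan is to follow the template of Theorem \ref{thm:errorbound-1} but exploit the Tucker structural constraint on $\cm{A}_{\mathrm{stack}}$ to obtain the tighter effective dimension $d_{\mathcal{M}}^c$. The first step is a reparameterization of the composite weight tensor. Writing the Tucker decomposition \eqref{Tucker} slicewise along the output-channel mode yields $\cm{A}_k = \sum_{r=1}^R \bm{H}^{(N+1)}(k,r)\,\widetilde{\cm{G}}_r$, where $\widetilde{\cm{G}}_r \in \mathbb{R}^{l_1 \times \cdots \times l_N}$ is the $r$th mode-$(N{+}1)$ slice of $\cm{G}\times_1 \bm{H}^{(1)}\times_2 \cdots \times_N \bm{H}^{(N)}$. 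Substituting into $\cm{W} = \sum_{k=1}^K \cm{B}_k \otimes \cm{A}_k$ and exchanging the order of summation gives
\[
\cm{W} = \sum_{r=1}^R \widetilde{\cm{B}}_r \otimes \widetilde{\cm{G}}_r, \qquad \widetilde{\cm{B}}_r := \sum_{k=1}^K \bm{H}^{(N+1)}(k,r)\,\cm{B}_k,
\]
so the constrained manifold $\mathcal{W}_{\mathrm{TU}}$ on which $\cm{W}$ lives can be parameterized by the $R$ tensors $\widetilde{\cm{B}}_r \in \mathbb{R}^{p_1\times\cdots\times p_N}$ together with the factors $\{\bm{H}^{(i)}\}_{i=1}^N$ and the core $\cm{G}$. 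A direct count gives $\dim \mathcal{W}_{\mathrm{TU}} \leq R\prod_i R_i + \sum_i l_i R_i + RP = d_{\mathcal{M}}^c$. This is the source of the discrepancy against the naive count: the $RK + KP$ contributions in $d_{\#}^c$ collapse to $RP$ once the output-channel factor $\bm{H}^{(N+1)}$ is absorbed into the $\widetilde{\cm{B}}_r$'s.

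Next I would invoke the basic inequality from the least-squares optimality of $\cm{\widehat{W}}_{\mathrm{TU}}$: with $\cm{\Delta} := \cm{\widehat{W}}_{\mathrm{TU}} - \cm{W}^*$,
\[
\frac{1}{n}\sum_{i=1}^n \langle \cm{Z}^i, \cm{\Delta}\rangle^2 \leq \frac{2}{n}\sum_{i=1}^n \xi^i \langle \cm{Z}^i, \cm{\Delta}\rangle,
\]
and observe that Assumption \ref{assum1}(i,iii) makes $\cm{E}(\cm{\widehat{W}}_{\mathrm{TU}})^2$ equivalent, up to universal constants, to $\mathbb{E}\langle \cm{Z}, \cm{\Delta}\rangle^2$. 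It therefore suffices to (a) lower-bound the empirical quadratic form uniformly over $\cm{\Delta}$ in the difference set $\mathcal{W}_{\mathrm{TU}} - \mathcal{W}_{\mathrm{TU}}$ (a restricted-eigenvalue type step), and (b) upper-bound the noise process $\sup_{\cm{\Delta}} \frac{1}{n}\sum_i \xi^i \langle \cm{Z}^i, \cm{\Delta}\rangle / \|\cm{\Delta}\|_{\mathrm{F}}$. Dividing (b) by (a) then yields the target rate.

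Both (a) and (b) reduce to controlling sub-Gaussian empirical processes indexed by $\mathcal{W}_{\mathrm{TU}}$. Since $\mathcal{W}_{\mathrm{TU}}$ is an algebraic variety rather than a subspace, I would use an $\epsilon$-net argument: standard covering bounds on each factor (the $\widetilde{\cm{B}}_r$'s, the core $\cm{G}$, and the $\bm{H}^{(i)}$'s) combine multiplicatively so that the intersection of $\mathcal{W}_{\mathrm{TU}}$ with the Frobenius unit ball admits a net of log-cardinality $\lesssim d_{\mathcal{M}}^c \log(1/\epsilon)$. A standard union-bound plus sub-Gaussian tail inequality gives (b) at rate $\sigma\sqrt{d_{\mathcal{M}}^c/n}$ with probability at least $1 - \exp(-c d_{\mathcal{M}}^c)$, while a Hanson--Wright argument for the quadratic form of $\bar{\bm{x}}$, combined with the lower isometry from Assumption \ref{assum1}(i,iii), delivers (a) with probability at least $1 - c\exp\{-(c_1 n - c_2 d_{\mathcal{M}}^c)\}$. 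Combining these two bounds produces $\cm{E}(\cm{\widehat{W}}_{\mathrm{TU}}) \lesssim \sqrt{d_{\mathcal{M}}^c/n}$ with the claimed probability.

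The main obstacle, in my view, is the uniform lower bound (a) on the nonconvex manifold $\mathcal{W}_{\mathrm{TU}} - \mathcal{W}_{\mathrm{TU}}$: a difference of two Tucker-$(R_1,\dots,R_N,R)$-structured tensors need not itself be Tucker-low-rank of the same multilinear ranks, so the covering must be carried out on a superset of at most doubled ranks, and one has to verify that the log-covering number still scales like $d_{\mathcal{M}}^c$ rather than inflating. Once this is handled, the CP statement follows as a corollary, since CP-rank-$R$ tensors form a subset of Tucker tensors with $R_1 = \cdots = R_N = R$; substituting into $d_{\mathcal{M}}^c$ yields $\widetilde{d}_{\mathcal{M}}^c$ and the same argument transfers verbatim.
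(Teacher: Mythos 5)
Your proposal is correct and follows essentially the same route as the paper: the same reparameterization absorbing $\bm{H}^{(N+1)}$ into the fully-connected weights so that $\cm{W}=\sum_{r=1}^{R}\widetilde{\cm{B}}_r\otimes\widetilde{\cm{A}}_r$ lives in a Tucker-rank-$(R_1,\dots,R_N,R)$ set of dimension $d_{\mathcal{M}}^c$, the same $\varepsilon$-net covering of that set (with the difference set handled by doubling the ranks, which only inflates the log-covering number by an $N$-dependent constant), the same Hanson--Wright restricted strong convexity/smoothness step and sub-Gaussian union-bound control of the noise process, and the same treatment of CP as Tucker with $R_i=R$. The obstacle you flag (covering the difference set at doubled ranks) is exactly what the paper resolves via the $c_N=2^{N+1}\log(N+2)$ constant, so no gap remains.
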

Comparing  the sample complexity of the uncompressed CNN, $d_{\mathcal{M}}^u$, to that of the compressed CNN, $d_{\mathcal{M}}^c$,  the term $L=\prod_{i=1}^{N}l_i$ is shrunk to $\sum_{i=1}^{N}l_iR_i$. This verifies that when the kernel sizes for high-order convolution are large, the compressed CNN  indeed has a much smaller number of parameters than the uncompressed one.

However, more importantly, comparing the sample complexity of the compressed CNN,  $d_{\mathcal{M}}^c$, to the naive parameter counting $d_{\#}^c$ in \eqref{secondparamno}, an interesting discrepancy can be observed. Specifically, 
when $K>R$, i.e. the low rank constraint is imposed on the output channels, it is easy to see that $RK+KP>RP$ and hence, $d_{\#}^c > d_{\mathcal{M}}^c$. This implies that,  for a given prediction error $\varepsilon$, the number of  parameters in the compressed CNN is actually larger than the number of parameters that is necessary.

\section{Proposed approach to measuring model redundancy}
\subsection{A new measure for model redundancy}

Motivated by the discrepancy between $d_{\mathcal{M}}^c$ and $d_{\#}^c$, we establish theoretically the existence of the model redundancy in the compressed CNN in the following corollary.

\begin{corollary}[Redundancy in compressed CNN]\label{cor:CP&Tucker}
	When the activation function is linear, if $\cm{A}_\mathrm{stack}$ has a Tucker decomposition with the ranks $(R_1,\cdots,R_N, R)$, then
	the compressed CNN can be reparameterized into one with $R$ kernels, 
	each of which has
	Tucker ranks of $(R_1,\cdots,R_{N})$.
	Similarly,	if $\cm{A}_\mathrm{stack}$ has a CP decomposition with rank $R$, then the compressed CNN is equivalent to one with $R$ kernels, 
	each of which has a CP 
	rank  $R$.
\end{corollary}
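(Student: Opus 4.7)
The plan is to exploit the fact that, under a linear activation, the three-layer operator $\cm{X}_{cp}(\cm{X},\cdot)$ is linear in the kernel argument, so the full model output $\sum_{k=1}^{K}\langle\cm{X}_{cp}(\cm{X},\cm{A}_k),\cm{B}_k\rangle$ (equivalently $\sum_{k=1}^{K}\langle\cm{Z},\cm{B}_k\otimes\cm{A}_k\rangle$, by Section 2.1) is bilinear in the pair $(\cm{A}_k,\cm{B}_k)$. The strategy for both Tucker and CP is the same: express each $\cm{A}_k$ as a linear combination of $R$ ``building block'' tensors that do not depend on $k$, substitute into the bilinear form, and absorb the mixing coefficients into redefined fully-connected weights. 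The output-channel factor $\bm{H}^{(N+1)}$ in the decomposition of $\cm{A}_\mathrm{stack}$ is precisely what plays the role of these mixing coefficients, and it is absorbed into the $\cm{B}_k$'s rather than kept on the kernel side; this is the algebraic mechanism by which $K$ is replaced by $R$.

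For the Tucker case, I would define $\widetilde{\cm{A}}_\mathrm{stack}=\cm{G}\times_1\bm{H}^{(1)}\times_2\cdots\times_N\bm{H}^{(N)}\in\mathbb{R}^{l_1\times\cdots\times l_N\times R}$ and let $\widetilde{\cm{A}}_r$ be its $r$-th slice along the $(N+1)$-th mode, so $\widetilde{\cm{A}}_r=\cm{G}_r\times_1\bm{H}^{(1)}\times_2\cdots\times_N\bm{H}^{(N)}$ with $\cm{G}_r\in\mathbb{R}^{R_1\times\cdots\times R_N}$. A one-line look at the $i$-mode unfolding of $\widetilde{\cm{A}}_r$ shows its Tucker ranks are bounded by $(R_1,\dots,R_N)$. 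From the slice formula for the Tucker form, $\cm{A}_k=\sum_{r=1}^{R}\bm{H}^{(N+1)}(k,r)\,\widetilde{\cm{A}}_r$. Plugging into the model output and swapping the order of summation gives $\sum_{k=1}^{K}\langle\cm{Z},\cm{B}_k\otimes\cm{A}_k\rangle=\sum_{r=1}^{R}\langle\cm{Z},\widetilde{\cm{B}}_r\otimes\widetilde{\cm{A}}_r\rangle$ with $\widetilde{\cm{B}}_r:=\sum_{k=1}^{K}\bm{H}^{(N+1)}(k,r)\,\cm{B}_k$, which is a compressed CNN with $R$ kernels of Tucker ranks $(R_1,\dots,R_N)$ and $R$ fully-connected weights $\widetilde{\cm{B}}_r$, as claimed.

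The CP case follows the identical template. Writing $\cm{A}_\mathrm{stack}=\sum_{r=1}^{R}\alpha_r\,\bm{h}^{(1)}_r\circ\cdots\circ\bm{h}^{(N+1)}_r$, set $\widetilde{\cm{A}}_r=\alpha_r\,\bm{h}^{(1)}_r\circ\cdots\circ\bm{h}^{(N)}_r$; each $\widetilde{\cm{A}}_r$ is itself rank-one, hence a fortiori of CP rank at most $R$. Slicing yields $\cm{A}_k=\sum_{r=1}^{R}(\bm{h}^{(N+1)}_r)_k\,\widetilde{\cm{A}}_r$, and the same bilinearity-plus-swap argument produces the equivalent $R$-kernel representation with $\widetilde{\cm{B}}_r=\sum_{k=1}^{K}(\bm{h}^{(N+1)}_r)_k\,\cm{B}_k$. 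Note that CP is a special Tucker (with $R_1=\cdots=R_N=R$ and a superdiagonal core), so this can alternatively be recovered as a corollary of the Tucker derivation.

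There is no serious obstacle here: the argument is a slicing identity combined with bilinearity, and both pieces are purely algebraic. The only conceptual point worth flagging is that linearity of the activation is essential, since otherwise $\cm{X}_{cp}(\cm{X},\cdot)$ is nonlinear in the kernel and the sum $\cm{A}_k=\sum_{r}\bm{H}^{(N+1)}(k,r)\widetilde{\cm{A}}_r$ cannot be pulled through the convolution-plus-activation composition. This is exactly why the nonlinear setting in Section 4 must be handled by the separate $K/R$-ratio analysis rather than by this clean reparameterization.
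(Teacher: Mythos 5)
Your proposal is correct and follows essentially the same route as the paper's own proof: both form the partial Tucker (resp.\ CP) product over the first $N$ modes, slice it along the output-channel mode to get $R$ kernel building blocks, write $\cm{A}_k=\sum_r h^{(N+1)}_{k,r}\cm{\widetilde{A}}_r$, and swap the order of summation to absorb $\bm{H}^{(N+1)}$ into redefined fully-connected weights $\cm{\widetilde{B}}_r=\sum_k h^{(N+1)}_{k,r}\cm{B}_k$. Your version even states the absorbed weights more carefully than the paper's CP display (which omits the sum over $k$ in a typo), so no changes are needed.
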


To understand the above corollary, suppose that $F_2$ is a compressed CNN model via the Tucker decomposition with arbitrary $K>R$. By Theorem \ref{thm:tucker}, its sample complexity is $d_{\mathcal{M}}^c$; i.e.,  to achieve a given prediction error $\varepsilon$,  the number of samples required scales as $d_{\mathcal{M}}^c/\varepsilon^2$.  Holding $R$ fixed, by Corollary \ref{cor:CP&Tucker},  we can always find another model $F_1$ by setting $K=R$,  such that $F_1\subset F_2$, yet the corresponding sample complexity remains to be  $d_{\mathcal{M}}^c$. In other words, $F_1$ and $F_2$ can achieve the same prediction performance  with the same number of samples, despite that $F_1\subset F_2$.  This indicates that $F_2$ contains redundant parameters relative to $F_1$. Indeed, $F_1$, with the choice of $K=R$, is the most efficient compressed CNN model among the equivalent class of models with $K\geq R$. However, if we further take into account the nonlinearity in the model and data, the most efficient model may be the one with $K$ slightly larger than $R$.

The above finding inspires us to propose an easy-to-use measure of the model redundancy in compressed CNNs, calculated as the ratio  of $K/R$. This new measure provides a useful guidance for designing CNN design in practice.  Specifically, we give the following empirical recommendations:

(i) If a CNN model is built from scratch, one can first choose $R$, i.e. the channel sizes for the bottleneck layers.  Then, to minimize the possible model redundancy, it is recommended to choose $K$, i.e. the channel sizes for the output channels such that  the $K/R$ ratio is close to 1. 

(ii) If a CNN model with $K/R\gg 1$ is already employed, then it is recommended to reduce $K$ and try a smaller CNN model. If the prediction error for both models are roughly comparable over different datasets, then  the smaller model is preferred, as it enjoys higher parameter efficiency.

\subsection{Applications to mainstream compressed CNNs}
The idea of $K/R$ ratio can be applied to many block designs; see also Figure \ref{fig:bottleneck}. In what follows, we illustrate how to use it to evaluate model redundancy.

\paragraph{Standard bottleneck block}
The basic building block in ResNet \citep{he2016deep} can be exactly replicated by a Tucker decomposition on $\cm{A}_\mathrm{stack}\in\mathbb{R}^{l_1\times l_2\times C\times K}$, with ranks $(l_1, l_2, R, R)$, where $C$ is the number of input channels \citep{kossaifi2019factorized}. As we will show in the ablation studies in Section 4, when $K/R\gg 1$, this design may suffer from inefficient model redundancy.

\paragraph{Funnel block}
As a straightforward revision
of the standard bottleneck block, the funnel block maintains an output channel size of $K=R$ and thus is  an efficient block design. 

\paragraph{Depthwise separable block}
With the light-weight depthwise convolution, this block requires much fewer parameters than the standard bottleneck and is hence quite popular.  
It is equivalent to assuming a CP decomposition on $\cm{A}_\mathrm{stack}\in\mathbb{R}^{l_1\times l_2\times C\times K}$ with rank $R$ \citep{kossaifi2019factorized}. 
\citet{ma2018shufflenet} included it into the basic unit of ShuffleNetV2, sandwiched between channel splitting and concatenation. By removing the first $1\times 1$ pointwise convolution layer and setting $R = C$, this block then corresponds to the basic module in MobileNetV1 \citep{howard2017mobilenets}.
Our theoretical analysis suggests that model redundancy may exist in this block design when $K/R\gg 1$.

\paragraph{Inverted residual block}
\citet{sandler2018mobilenetv2} later proposed this design in MobileNetV2. It
ncludes expansive layers between the input and output layers, with the channel size of $x\cdot C (x\geq 1)$, where $x$ represents the expansion factor. 
As discussed in \citet{kossaifi2019factorized}, it  heuristically corresponds to a CP decomposition on $\cm{A}_{stack}$, with CP rank equals to $x\cdot C$. 
Since, the rank of output channel dimension can be at most $x\cdot C$, as long as $K \leq x\cdot C$, it is theoretically efficient and provides leeway in exploring thicker layers within blocks.

While our theoretical analysis is conducted under a simple framework, 
we will show through ablation studies in the next section that our finding indeed applies to a wide range of realistic scenarios.

\section{Ablation studies on the model redundancy for compressed block designs}

\begin{figure}[ht]
	\centering
	\includegraphics[width=0.9\linewidth]{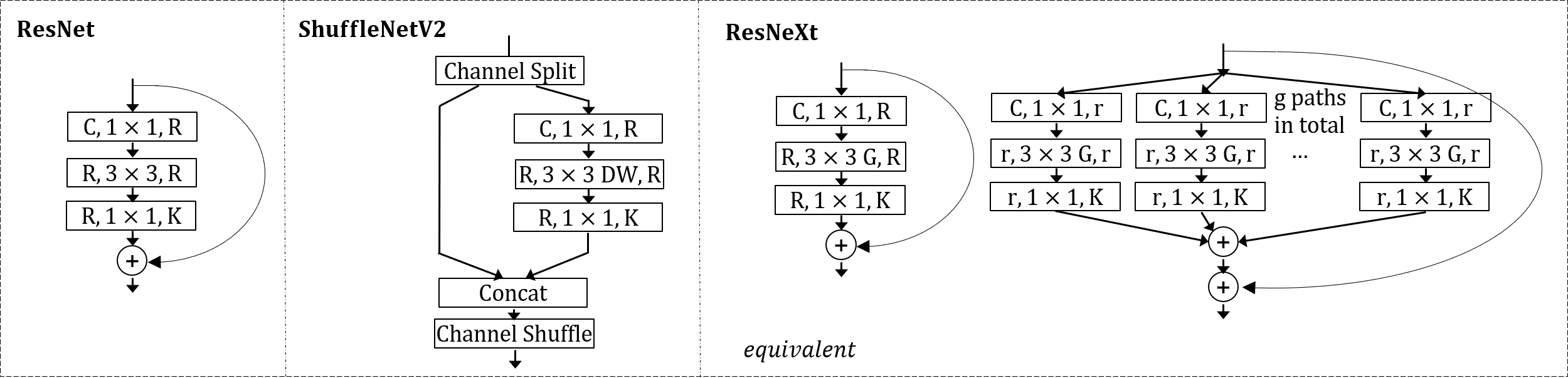}
	\caption{
		The bottleneck blocks in three popular networks. A layer is denoted as (\# input channels, filter size, \# ouput channels), where "G", "DW" denote group and depthwise convolution, respectively. Two equivalent forms of ResNeXt block structure is provided, where $r = R/g$ and $g$ is the number of groups.
	}
	\label{fig:real}
\end{figure}

In this section, we study how the $K/R$ ratio may influence the accuracy vs. parameter efficiency trade-off in three popular networks, namely  ResNet \citep{he2016deep}, ResNeXt \citep{xie2017aggregated} and ShuffleNetV2 \citep{ma2018shufflenet}. In Figure \ref{fig:real}, we present the basic bottleneck block structures in these networks. 
The block structures in ResNet and ShuffleNetV2 correspond to the standard bottleneck block and the depthwise separable block, both of which are discussed in detail in Section 4. We will show 
below that the block structure in ResNeXt is a variation of the standard bottleneck block and its $K/R$ ratio can be similarly defined. 

\paragraph{Bottleneck with group convolution}
As shown in the right panel of Figure \ref{fig:real}, the bottleneck block of ResNeXt performs group convolution in the middle layer. Essentially, it divides its input channels into $g$ groups, each of size $r = R/g$, and performs regular convolution in each group. The outputs are then concatenated into a single output of this middle layer. Like the standard bottleneck block in ResNet, 
this block is equivalent to assuming a Tucker decomposition \eqref{Tucker} on $\cm{A}_\mathrm{stack}\in\mathbb{R}^{l_1\times l_2\times C\times K}$, with ranks of $(l_1, l_2, R, R)$. However, the core tensor $\cm{G}$ has at most $g$ non-overlapping sub-blocks, each of size $\mathbb{R}^{l_1\times l_2\times r\times r}$, which have non-zero entries. Hence, the theoretical implications can be applied.

\begin{figure*}[t]
	\centering
	\includegraphics[width=1.\linewidth]{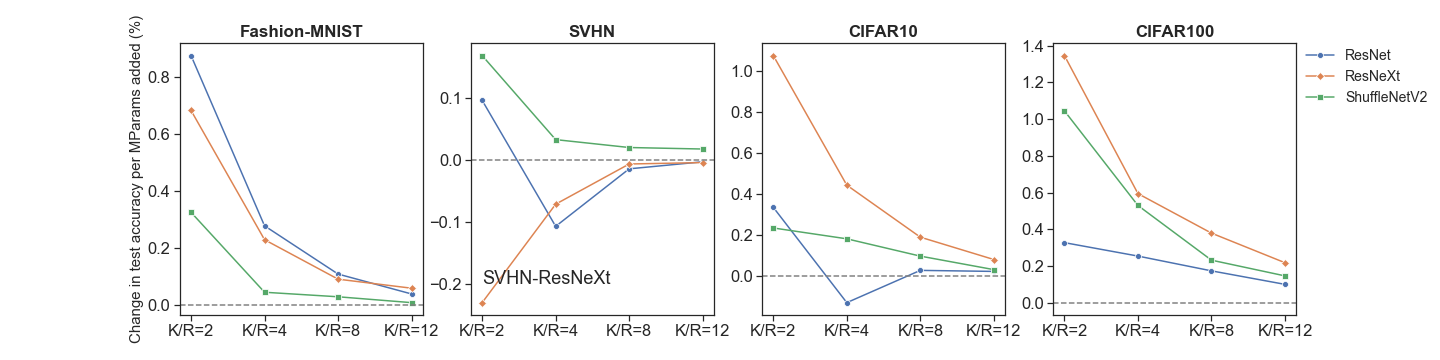}
	\caption{The change in test accuracy (\%) per millions of parameters added for different $K/R$ ratios, comparing to the baseline ($K/R=1$).}
	\label{fig:accuracy}
\end{figure*}

\paragraph{Data}
We analyze four image recognition datasets, Fashion-MNIST \citep{xiao2017fashion}, CIFAR-10 \& CIFAR-100 \citep{krizhevsky2009learning} and Street View House Numbers (SVHN) \citep{netzer2011reading}.
For CIFAR-10 \& CIFAR-100 and SVHN datasets, we adopt the data pre-processing and augmentation techniques in \citet{he2016deep}. For Fashion-MNIST dataset, we simply include random horizontal flipping for data augmentation.

\paragraph{Network architecture}
The networks we adopt are based on the standard ResNet-50 in \citet{he2016deep}, ResNeXt-50 in \citet{xie2017aggregated} and ShuffleNetV2-50 in \citet{ma2018shufflenet}. We uniformly use $3\times 3$ kernels in the first convolution layer and delete the "conv5" layers to avoid overfitting. Hence, each network has $41$ layers, with $13$ bottleneck blocks with structures presented in Figure \ref{fig:real}. The bottleneck blocks can be sequentially divided into three groups of sizes 3, 4 and 6, according to the different values of $R$. For both ResNet and ResNeXt, we set the value of $R$ to be $\{64, 128, 256\}$ for the three groups, respectively, and downsampling is performed by 1st, 4th and 8th bottleneck blocks in the same way as in \citet{he2016deep} and \citet{xie2017aggregated}. The group convolution in ResNeXt is performed with $32$ groups. For the ShuffleNetV2, we let the number of kernels for the first convolution layer  be $24$, and then set the value of $R$ to  $\{58, 116, 232\}$ for the three groups, respectively. For our ablation studies, the $K/R$ ratio takes values in $\{1, 2, 4, 8, 12\}$. Then the size of $K$ in each bottleneck block is determined by $(K/R)\cdot R$.

\paragraph{Implementation details}
All experiments are conducted in PyTorch  on Tesla V100-DGXS.
We follow the common practice for ResNet, ResNeXt and ShuffleNetV2 to adopt batch normalization \citep{ioffe2015batch} after convolution and before the ReLU activation. 
The weights are initialized as in \citet{he2015delving}. We use stochastic gradient descent with weight decay $10^{-4}$, momentum 0.9 and mini-batch size 128. The learning rate starts from 0.1 and is divided by 10 for every 100 epochs.
We stop training after 350 epochs, since the training accuracy hardly changes.
We repeat the experiment under each model and dataset by setting random seeds 1--3 and report the worst case scenario as our Top-1 test accuracy.

\paragraph{Results}
The results are presented in Figure \ref{fig:accuracy}, where we use the following criterion to evaluate the efficiency of the added parameters under different $K/R$ ratios against the baseline ($K/R=1$):
\begin{align*}
	\text{Change in test accuracy per MParams added} = \frac{\text{Change in test accuracy}}{\text{\# Parameters added in millions}}.
\end{align*}
For each given dataset and model, this criterion basically shows that, as the $K/R$ ratio changes from 2 to 12, for every 1 million parameters we add onto the baseline model, how much change in the test accuracy can be expected.

The overall observation is that the change in test accuracy per MParam added converges to zero as the $K/R$ ratio grows, with the ``elbow" cutoff point appearing at $K/R=4$ or sometimes 2. When the lines are in the negative region, it indicates that the test accuracy is worse than the baseline model. For the case of ``SVHN-ResNeXt", we can see that its best test accuracy is, in fact, achieved at the baseline with $K/R=1$.

The detailed results of the test accuracy and the number of parameters and FLOPs under each setting are deferred to the supplementary material.

\paragraph{Practical implications}
Our theoretical results and ablation studies imply jointly that, the $K/R$ ratio is an important measure for model redundancy in CNN block designs. In practice, it is recommended to adopt $K/R$ close to 1 to achieve the optimal accuracy vs. parameter efficiency trade-off. In most cases, the tuning for $K/R$ only needs to be performed over the interval 1--4.

\section{Conclusion and discussion}

In this paper, a general statistical framework is introduced to answer the question on the existence of remaining model redundancy in a compressed CNN model. Then a new measure, the $K/R$ ratio, is proposed to quantify the model redundancy and provide empirical guidance on CNN designs.  Numerical studies further suggest that the optimal region for the ratio lies between the interval 1--4.

It is worthwhile to extend the proposed methodology in the following three directions. First, we can consider CNNs with more layers. For instance, for a 5-layer CNN with ``convolution $\rightarrow$ pooling $\rightarrow$ convolution $\rightarrow$ pooling $\rightarrow$ fully connected" structure, it can be readily shown that the corresponding model has the linear form of $y^i = \langle\cm{X}^i, \cm{W}_D\rangle + \xi^i$, with
\[
\cm{W}_D = (\sum_{k_1=1}^{K_1}\sum_{k_2=1}^{K_2}\cm{B}_{k_1,k_2}\otimes\cm{\widetilde{A}}_{k_2}\otimes\cm{A}_{k_1})\times_1\bm{U}_\mathcal{DF}^{(1)}\times_2\cdots\times_N\bm{U}_\mathcal{DF}^{(N)},
\] 
where $\{\cm{B}_{k_1,k_2}\}$ are the fully-connected weights, and $\{\cm{A}_{k_1}\}$ and $\{\cm{\widetilde{A}}_{k_2}\}$ are the kernel tensors for the first and second convolution layers, respectively;  the detailed derivation of the above is provided in the supplementary file. Then the sample complexity analysis can be conducted similarly to the deeper CNN models.	
Secondly, in this paper, in order to leverage existing technical tools from high-dimensional statistics, our sample complexity analysis is conducted under the assumption of linear activations. The extension to nonlinear activations is highly nontrivial yet of great interest to theorists. 

Lastly, it is interesting to generalize the proposed  framework to other network structures, such as  variants of the compressed CNN block design, e.g., one that incorporates the compression of  fully-connected layers via the tensor decomposition \citep{kossaifi2017tensor, kossaifi2020tensor}, as well as  other more complex network architectures such as  RNN and GNN. For the latter,  development of corresponding statistical formulations and analysis of the discrepancy between sample complexities and naive parameter countings similarly to the present paper would be necessary, which  we leave  for future research.

\medskip
\small
\bibliography{CNN}

\newpage
\renewcommand{\thesection}{Appendix A}
\renewcommand{\thesubsection}{A\arabic{subsection}}
\renewcommand{\theequation}{A\arabic{equation}}
\renewcommand{\thelemma}{A\arabic{lemma}}
\setcounter{lemma}{0}
\setcounter{equation}{0}

\section{CNN formulation}

\subsection{One-layer CNN formulation}

For a tensor input $\cm{X}\in\mathbb{R}^{d_1\times\cdots\times d_N}$, it first convolutes with an $N$-dimensional kernel tensor $\cm{A}\in\mathbb{R}^{l_1\times\cdots\times l_N}$ with stride sizes equal to $s_c$, and then performs an average pooling with pooling sizes equal to $(q_1,\cdots,q_N)$. It ends with a fully-connected layer with the weight tensor $\cm{B}$ and produces a scalar output. We assume that $m_j=(d_j-l_j)/s_c+1$ are integers for $1\leq j\leq N$, otherwise zero-padding will be needed. For ease of notation, we take the pooling sizes $\{q_j\}_{j=1}^N$ to satisfy the relationship $m_j=p_jq_j$. 

To duplicate the operation of the convolution layer using a simple mathematical expression, we first need to define a set of matrices $\{\bm{U}_{i_j}^{(j)}\in\mathbb{R}^{d_j\times l_j}\}$, where
\begin{equation}
	\bm{U}_{i_j}^{(j)} = [\underbrace{\bm{0}}_{(i_j-1)s_c} \underbrace{\bm{I}}_{l_j} \underbrace{\bm{0}}_{d_j-(i_j-1)s_c-l_j}]^\prime \in\mathbb{R}^{d_j\times l_j}.
\end{equation}
for $1\leq i_j\leq m_j$, $1\leq j\leq N$. $\bm{U}_{i_j}^{(j)}$ acts as a positioning factor to transform the kernel tensor $\cm{A}$ into a tensor of same size as the input $\cm{X}$, with the rest of the entries equal to zero.

We are ready to construct our main formulation for a 3-layer tensor CNN. To begin with, we first illustrate the process using a vector input $\bm{x}\in\mathbb{R}^{d}$, with a kernel vector $\bm{a}\in\mathbb{R}^{l}$; see Figure \ref{fig:cnn-vec}. Using the $i\ts{th}$ positioning matrix $\bm{U}_{i}^{(1)}$, we can propagate the small kernel vector $\bm{a}$, into a vector $\bm{U}_{i}^{(1)}\bm{a}$ of size $\mathbb{R}^d$, by filling the rest of the entries with zeros. 
The intermediate output vector has entries given by $\bm{x}_c(i) = \langle\bm{x},\bm{U}^{(1)}_i\bm{a}\rangle$, for $1\leq i\leq m$.
There are a total of $p$ pooling blocks in $\bm{x}_c$, with the $k$th block formed by $\{\bm{x}_c(i)\}_{i\in S_{k}}$, where the index set
\[
S_{k} = \{(k-1)q+1\leq i\leq kq\}, 1\leq k\leq p.
\]
By taking the average per block, the resulting vector $\bm{x}_{cp}$ is of size $\mathbb{R}^{p}$, with $i$th entry equal to $\bm{x}_{cp}(k) = q^{-1}\sum_{i\in S_k}\bm{x}_c(i) = \langle\bm{x},q^{-1}\sum_{i\in S_k}\bm{U}_{i}^{(1)}\bm{a}\rangle$. 
The fully-connected layer performs a weighted summation over the $p$ vectors, with weights given by entries of the vector $\bm{b}\in\mathbb{R}^p$.
This gives us the predicted output $\widehat{y} = \langle\bm{b},\bm{x}_{cp}\rangle = \langle\bm{x},\bm{w}_X\rangle$, where
\begin{equation*}
	\begin{split}
		\bm{w}_X &= \sum_{k=1}^{p}b_k\frac{1}{q}\sum_{i\in S_k}\bm{U}^{(1)}_i\bm{a}= \bm{U}^{(1)}_\mathcal{F}(\bm{b}\otimes\bm{a}) = (\bm{b}\otimes\bm{a}) \times_1 \bm{U}^{(1)}_\mathcal{F},
	\end{split}
\end{equation*}
and $\bm{U}^{(1)}_\mathcal{F} = q^{-1}(\sum_{i\in S_1}\bm{U}_{i}^{(1)},\cdots,\sum_{i\in S_p}\bm{U}_{i}^{(1)})$, and "$\times_1$" represents the mode-1 product.

For matrix input $\bm{X}$ with matrix kernel $\bm{A}$, however, we need 2 sets of positioning matrices, $\{\bm{U}_{i_1}^{(1)}\}_{i_1=1}^{m_1}$ and $\{\bm{U}_{i_2}^{(2)}\}_{i_2=1}^{m_2}$, one for the height dimension and the other for width. Then, the intermediate output from convolution has entries given by $\bm{X}_c(i_1,i_2) = \langle\bm{X},\bm{U}^{(1)}_{i_1}\bm{A}\bm{U}^{(2)\prime}_{i_2}\rangle$, for $1\leq i_1\leq p_1$, $1\leq i_2\leq p_2$.
For the average pooling, we form $p_1p_2$ consecutive matrices from $\bm{X}_c$, each of size $\mathbb{R}^{q_1\times q_2}$ and take the average.
This results in $\bm{X}_{cp}\in\mathbb{R}^{p_1\times p_2}$, with 
\[
\bm{X}_{cp}(i_1,i_2) = \langle\bm{X},(q_{1}^{-1}\sum_{i_1\in S_{k_1}}\bm{U}^{(1)}_{i_1})\bm{A}(q_{2}^{-1}\sum_{i_2\in S_{k_2}}\bm{U}^{(2)}_{i_2})^\prime\rangle,
\] 
where, for $1\leq k_j\leq p_j$,
\[
S_{k_j} = \{(k_j-1)q_j+1\leq i\leq k_jq_j: 1\leq j\leq 2\}.
\]
The output $\bm{X}_{cp}$ goes through fully-connected layer with weight matrix $\bm{B}$ and gives the predicted output $\widehat{y} = \langle\bm{X},\bm{W}_X\rangle$, where
\begin{equation*}
	\begin{split}
		\bm{W}_X &= \sum_{k_2=1}^{p_2}\sum_{k_1=1}^{p_1}b_{k_1}b_{k_2}\frac{1}{q_1q_2}(\sum_{i_1\in S_{k_1}}\bm{U}^{(1)}_{i_1})\bm{A}(\sum_{i_2\in S_{k_2}}\bm{U}^{(2)\prime}_{i_2})\\
		& = (\bm{B}\otimes\bm{A}) \times_1 \bm{U}^{(1)}_\mathcal{F}\times_2 \bm{U}^{(2)}_\mathcal{F},	
	\end{split}
\end{equation*}
with $\bm{U}^{(j)}_\mathcal{F} = q_j^{-1}(\sum_{i\in S_1}\bm{U}_{i}^{(j)},\cdots,\sum_{i\in S_{p_j}}\bm{U}_{i}^{(j)})$ for $j = 1$ or 2. And "$\times_1$", "$\times_2$" represent the mode-1 and mode-2 product.

\begin{figure*}[t]
	\centering
	\includegraphics[width=\linewidth]{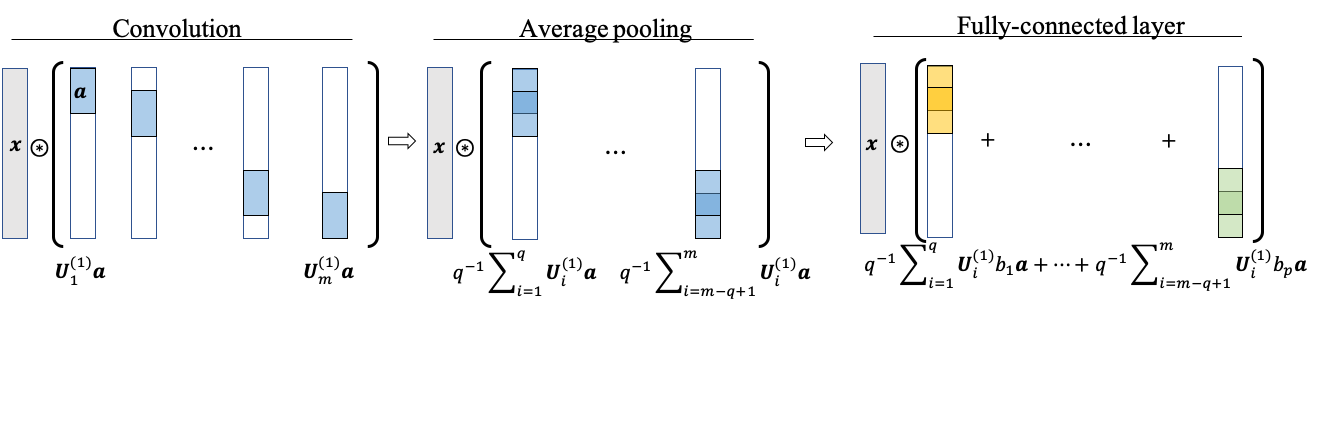}
	\caption{Formulating the process with an input vector $\bm{x}$. Note that we combine the convolution, pooling and fully connected layers into the composite weight vector $\bm{w}_X$, where $\bm{w}_X= q^{-1}\sum_{i\in S_1}\bm{U}_{i}^{(1)}\bm{a}+\cdots+q^{-1}\sum_{i\in S_p}\bm{U}_{i}^{(1)}\bm{a}$. The $\circledast$ represents the convolution operation.}
	\label{fig:cnn-vec}
\end{figure*}

From here, together with the case of high-order tensor input discussed in Session 2, we can then derive the form of predicted outcome as $\widehat{y} = \langle\cm{X},\cm{W}_X^\mathrm{single}\rangle$, where
\begin{equation*}
	\cm{W}_X^\mathrm{single} = (\cm{B}\otimes\cm{A}) \times_1 \bm{U}^{(1)}_\mathcal{F}\times_2 \bm{U}^{(2)}_\mathcal{F}\times\cdots\times \bm{U}^{(N)}_\mathcal{F},
\end{equation*}
with $\bm{U}^{(j)}_\mathcal{F} = q_j^{-1}(\sum_{i\in S_{1}}\bm{U}_{i}^{(j)},\cdots,\sum_{i\in S_{p_j}}\bm{U}_{i}^{(j)})$, and "$\times_j$" represents the mode-$j$ product for $1\leq j\leq N$.

With $K$ kernels, we denote the set of kernels and the corresponding fully-connected weight
tensors as $\{\cm{A}_k, \cm{B}_k\}_{k=1}^K$. Since the convolution and pooling operations are identical across kernels, we can use a summation over kernels to derive the weight tensor for multiple kernels, which is
\begin{equation*}
	\cm{W}_X = (\sum_{k=1}^{K}\cm{B}_k\otimes\cm{A}_k) \times_1 \bm{U}^{(1)}_\mathcal{F}\times_2 \bm{U}^{(2)}_\mathcal{F}\times\cdots\times \bm{U}^{(N)}_\mathcal{F},
\end{equation*}
and we arrive at the formulation for 3-layer tensor CNN.

\subsection{Five-layer CNN formulation}\label{app:cnn}

Now, we can consider a 5-layer CNN with "convolution$\rightarrow$pooling$\rightarrow$convolution$\rightarrow$pooling$\rightarrow$fully connected" layers and a 3D tensor input $\cm{X}\in\mathbb{R}^{d_1\times d_2\times d_3}$.
Here, we denote the intermediate output from the first convolution by ${\cm{X}}_c\in\mathbb{R}^{{m}_1\times{m}_2\times{m}_3}$, from the second convolution by $\cm{\widetilde{X}}_c\in\mathbb{R}^{\tilde{m}_1\times\tilde{m}_2\times\tilde{m}_3}$, and the output from the first pooling by ${\cm{X}}_{cp}\in\mathbb{R}^{{p}_1\times{p}_2\times{p}_3}$
and from the second pooling by $\cm{\widetilde{X}}_{cp}\in\mathbb{R}^{\tilde{p}_1\times\tilde{p}_2\times\tilde{p}_3}$.
We can first see that the predicted output from the 5-layer CNN is the same as directly feeding $\cm{X}_{cp}$ to the second convolution layer, followed by an average pooling layer and a fully-connected layer. 

Denote the first convolution kernel tensor by $\cm{A}\in\mathbb{R}^{l_1\times l_2\times l_3}$, 
the second convolution kernel tensor by $\cm{A}\in\mathbb{R}^{\tilde{l}_1\times \tilde{l}_2\times \tilde{l}_3}$,
and the fully-connected weight tensor by
$\cm{B}\in\mathbb{R}^{\tilde{p}_1\times\tilde{p}_2\times\tilde{p}_3}$. Define the set of matrices $\{\bm{U}_{i_j}^{(j)}\in\mathbb{R}^{d_j\times l_j}\}$, for $1\leq i_j\leq m_j, 1\leq j\leq N$ as (2) in the paper. And similarly, define $\{\widetilde{\bm{U}}_{i_j}^{(j)}\in\mathbb{R}^{p_j\times \tilde{l}_j}\}$, with
\begin{equation}
	\widetilde{\bm{U}}_{i_j}^{(j)} = [\underbrace{\bm{0}}_{(i_j-1)\tilde{s}_c} \underbrace{\bm{I}}_{\tilde{l}_j} \underbrace{\bm{0}}_{p_j-(i_j-1)\tilde{s}_c-\tilde{l}_j}]^\prime,
\end{equation}
where $\tilde{s}_c$ is the stride size for the second convolution. Now, let $\widetilde{\bm{U}}^{(j)}_{\mathcal{F}} = (\widetilde{\bm{U}}^{(j)}_{\mathcal{F},1},\cdots,\widetilde{\bm{U}}^{(j)}_{\mathcal{F},p_j}) = (\tilde{q}_j^{-1}\sum_{k=1}^{\tilde{q}_j}\widetilde{\bm{U}}_{k}^{(j)},\cdots,\tilde{q}_j^{-1}\sum_{k=\tilde{m}_j-\tilde{q}_j+1}^{\tilde{m}_j}\widetilde{\bm{U}}_{k}^{(j)})$.
We further stack the matrices $\{\bm{U}_{\mathcal{F},i_j}^{(j)}\in\mathbb{R}^{d_j\times l_j}\}_{1\leq i_j\leq p_j}$ and $\{\widetilde{\bm{U}}_{\mathcal{F},i_j}^{(j)}\in\mathbb{R}^{p_j\times\tilde{l}_j}\}_{1\leq i_j\leq \tilde{p}_j}$ into 3D tensors $\cm{U}^{(j)}\in\mathbb{R}^{d_j\times l_j\times p_j}$ and $\cm{\widetilde{U}}^{(j)}\in\mathbb{R}^{p_j\times \tilde{l}_j\times\tilde{p}_j}$.

We have that the predicted output
\begin{equation*}
	\begin{split}
		\widehat{y} &= \langle\cm{B}\otimes\cm{\widetilde{A}},
		[\![\cm{X}_{cp};\widetilde{\bm{U}}_\mathcal{F}^{(1)\prime},\widetilde{\bm{U}}_\mathcal{F}^{(2)\prime},\widetilde{\bm{U}}_\mathcal{F}^{(3)\prime}]\!]\rangle\\
		&=\Big\langle\cm{B}\otimes\cm{\widetilde{A}}, \sum_{j=1}^{3}\sum_{i_j=1}^{p_j}\langle\cm{A},\cm{T}_1(i_1,i_2,i_3)\rangle\widetilde{\bm{u}}_{i_1}\circ\widetilde{\bm{u}}_{i_2}\circ\widetilde{\bm{u}}_{i_3}\Big\rangle\\
		&=\Big\langle\cm{B}\otimes\cm{\widetilde{A}}\otimes\cm{A}, \sum_{j=1}^{3}\sum_{i_j=1}^{p_j}\cm{T}_2(i_1,i_2,i_3)\Big\rangle\\
		& =\Big\langle\cm{B}\otimes\cm{\widetilde{A}}\otimes\cm{A}, \cm{T}_3\Big\rangle\\
		&=\Big\langle\cm{X},\cm{T}_4\Big\rangle\\
		&=\Big\langle\cm{X},
		(\cm{B}\otimes\cm{\widetilde{A}}\otimes\cm{A})
		\times_1\bm{U}_\mathcal{DF}^{(1)}
		\times_2\bm{U}_\mathcal{DF}^{(2)}
		\times_3\bm{U}_\mathcal{DF}^{(3)}
		\Big\rangle,
	\end{split}
\end{equation*}
where
\begin{align*}
	&\cm{T}_1(i_1,i_2,i_3) = [\![\cm{X};\bm{U}_{\mathcal{F},i_1}^{(1)\prime},\bm{U}_{\mathcal{F},i_2}^{(2)\prime},\bm{U}_{\mathcal{F},i_3}^{(3)\prime}]\!],\\
	&\cm{T}_2(i_1,i_2,i_3) = 
	[\![\cm{X};
	\widetilde{\bm{u}}_{i_1}\otimes\bm{U}_{\mathcal{F},i_1}^{(1)\prime},
	\widetilde{\bm{u}}_{i_2}\otimes\bm{U}_{\mathcal{F},i_2}^{(2)\prime},
	\widetilde{\bm{u}}_{i_3}\otimes\bm{U}_{\mathcal{F},i_3}^{(3)\prime}]\!],\\
	&\cm{T}_3 = [\![\cm{X},
	(\cm{U}^{(1)}\times_3\cm{\widetilde{U}}^{(1)})_{(1)}^\prime,
	(\cm{U}^{(2)}\times_3\cm{\widetilde{U}}^{(2)})_{(1)}^\prime,
	(\cm{U}^{(3)}\times_3\cm{\widetilde{U}}^{(3)})_{(1)}^\prime]\!],\\
	&\cm{T}_4 =
	[\![\cm{B}\otimes\cm{\widetilde{A}}\otimes\cm{A},
	(\cm{U}^{(1)}\times_3\cm{\widetilde{U}}^{(1)})_{(1)},
	(\cm{U}^{(2)}\times_3\cm{\widetilde{U}}^{(2)})_{(1)},
	(\cm{U}^{(3)}\times_3\cm{\widetilde{U}}^{(3)})_{(1)}]\!],
\end{align*}
and $\widetilde{\bm{u}}_{i_j} = \vectorize(\cm{\widetilde{U}}^{(1)}(i_j,:,:))\in\mathbb{R}^{\tilde{l}_j\tilde{p}_j}$ and $\bm{U}_\mathcal{DF}^{(j)} = (\cm{U}^{(j)}\times_3\cm{\widetilde{U}}^{(j)})_{(1)}$, for $1\leq i_j\leq p_j$ and $1\leq j\leq 3$.

\newpage

\renewcommand{\thesection}{Appendix B}
\renewcommand{\thelemma}{B.\arabic{lemma}}
\renewcommand{\thesubsection}{B.\arabic{subsection}}
\renewcommand{\theequation}{B.\arabic{equation}}
\setcounter{lemma}{0}
\section{Additional Experiments}

\subsection{Detailed results of the ablation studies}
In this section, we provide the detailed results for our ablation studies on ResNet, ResNeXt, ShuffleNetV2 on four datasets, namely Fashion-MNIST, SVHN, CIFAR10 and CIFAR100. Note that all datasets and models we adopted are under the MIT license.
\begin{table}[ht]
	\centering
	\caption{Results on Fashion-MNIST.}
	\label{tab:Fashion}
	\resizebox{0.55\textwidth}{!}{
		\begin{tabular}{ccccc}
			\toprule
			Model &$K/R$ &top-1 accuracy(\%) &GMacs &\#MParams\\
			\midrule
			\multirow{4}{*}{\makecell{ResNet}} 
			&1&94.26&0.16&5.20\\
			&2&94.27&0.19&6.24\\
			&4&94.29&0.26&8.55\\
			&8&94.34&0.43&14.15\\
			&12&93.99&0.63&21.06\\
			\cmidrule[0.5pt](lr{0.125em}){1-5}
			\multirow{4}{*}{\makecell{ResNeXt}}
			&1&93.97&0.04&1.09\\
			&2&94.07&0.07&2.13\\
			&4&94.13&0.14&4.44\\
			&8&94.18&0.31&10.04\\
			&12&94.31&0.52&16.95\\
			\cmidrule[0.5pt](lr{0.125em}){1-5}
			\multirow{4}{*}{\makecell{ShuffleNet}} 
			&1&93.36&0.04&0.78\\
			&2&93.65&0.08&1.67\\
			&4&93.50&0.16&3.84\\
			&8&93.63&0.38&9.81\\
			&12&93.52&0.65&17.93\\
			\bottomrule
		\end{tabular}
	}
\end{table}
\begin{table}[ht]
	\centering
	\caption{Results on SVHN.}
	\label{tab:SVHN}
	\resizebox{0.55\textwidth}{!}{
		\begin{tabular}{ccccc}
			\toprule
			Model &$K/R$ &top-1 accuracy(\%) &GMacs &\#MParams\\
			\midrule
			\multirow{4}{*}{\makecell{ResNet}} & 1 &96.45&0.16&5.20\\
			&2&96.55&0.19&6.24\\
			&4&96.09&0.26&8.55\\
			&8&96.32&0.43&14.15\\
			&12&96.40&0.63&21.06\\
			\cmidrule[0.5pt](lr{0.125em}){1-5}
			\multirow{4}{*}{\makecell{ResNeXt}} & 1 &96.44&0.04&1.09\\
			&2&96.20&0.07&2.13\\
			&4&96.20&0.14&4.44\\
			&8&96.38&0.31&10.04\\
			&12&96.37&0.52&16.95\\
			\cmidrule[0.5pt](lr{0.125em}){1-5}
			\multirow{4}{*}{\makecell{ShuffleNet}} & 1 &95.93&0.04&0.78\\
			&2&96.08&0.08&1.67\\
			&4&96.03&0.16&3.84\\
			&8&96.11&0.38&9.81\\
			&12&96.23&0.65&17.93\\
			\bottomrule
		\end{tabular}
	}
\end{table}
\begin{table}[ht]
	\centering
	\caption{Results on CIFAR10.}
	\label{tab:CIFAR10}
	\resizebox{0.55\textwidth}{!}{
		\begin{tabular}{ccccc}
			\toprule
			Model &$K/R$ &top-1 accuracy(\%) &GMacs &\#MParams\\
			\midrule
			\multirow{4}{*}{\makecell{ResNet}} &1&93.53&0.16&5.20\\
			&2&93.88&0.19&6.24\\
			&4&93.10&0.26&8.55\\
			&8&93.79&0.43&14.15\\
			&12&93.91&0.63&21.06\\
			\cmidrule[0.5pt](lr{0.125em}){1-5}
			\multirow{4}{*}{\makecell{ResNeXt}} & 1&91.89&0.04&1.09\\
			&2&93.01&0.07&2.13\\
			&4&93.38&0.14&4.44\\
			&8&93.60&0.31&10.04\\
			&12&93.19&0.52&16.95\\
			\cmidrule[0.5pt](lr{0.125em}){1-5}
			\multirow{4}{*}{\makecell{ShuffleNet}} &1&90.77&0.04&0.78\\
			&2&90.98&0.08&1.67\\
			&4&91.33&0.16&3.84\\
			&8&91.66&0.38&9.81\\
			&12&91.33&0.65&17.93\\
			\bottomrule
		\end{tabular}
	}
\end{table}
\begin{table}[ht]
	\centering
	\caption{Results on CIFAR100.}
	\label{tab:CIFAR100}
	\resizebox{0.55\textwidth}{!}{
		\begin{tabular}{ccccc}
			\toprule
			Model &$K/R$ &top-1 accuracy(\%) &GMacs &\#MParams\\
			\midrule
			\multirow{4}{*}{\makecell{ResNet}}&1 &74.46&0.16&5.20\\
			&2&74.80&0.19&6.24\\
			&4&75.31&0.26&8.55\\
			&8&76.01&0.43&14.15\\
			&12&76.04&0.63&21.06\\
			\cmidrule[0.5pt](lr{0.125em}){1-5}
			\multirow{4}{*}{\makecell{ResNeXt}} & 1&72.28&0.04&1.09\\
			&2&73.68&0.07&2.13\\
			&4&74.27&0.14&4.44\\
			&8&75.67&0.31&10.04\\
			&12&75.73&0.52&16.95\\
			\cmidrule[0.5pt](lr{0.125em}){1-5}
			\multirow{4}{*}{\makecell{ShuffleNet}} &1 &68.78&0.04&0.78\\
			&2&69.71&0.08&1.67\\
			&4&70.40&0.16&3.84\\
			&8&70.87&0.38&9.81\\
			&12&71.28&0.65&17.93\\
			\bottomrule
		\end{tabular}
	}
\end{table}

\newpage
\subsection{Numerical analysis for theoretical results}
\begin{table}[t]
	\caption{Different settings for verifying Theorem 1 (left), and Theorem 2 (right). }
	\label{tab:simulation}
	\centering
	\resizebox{1.0\textwidth}{!}{
		\begin{tabular}{lllll|lllll}
			\toprule
			&Input sizes&Kernel sizes&Pooling sizes&\# Kernels&Input sizes&Kernel sizes&Pooling sizes&Tucker ranks&\# Kernels\\
			\midrule
			Setting 1 (S1)& (7, 5, 7) & (2, 2, 2)& (3, 2, 3)& 1& (10, 10, 8, 3) & (5, 5, 3)& (3, 3, 3)& (2, 2, 2, 1)&2\\
			Setting 2 (S2)& (7, 5, 7) & (2, 2, 2)& (3, 2, 3)& 3& (10, 10, 8, 3) & (5, 5, 3)& (3, 3, 3)& (2, 2, 2, 1)&3\\
			Setting 3 (S3)& (8, 8, 3)& (3, 3, 3)& (3, 3, 1)& 1& (12, 12, 6, 3)& (7, 7, 3)& (3, 3, 2)& (2, 3, 2, 1)&2\\
			Setting 4 (S4)& (8, 8, 3)& (3, 3, 3)& (3, 3, 1)& 3& (12, 12, 6, 3)& (7, 7, 3)& (3, 3, 2)& (2, 3, 2, 1)&3\\
			\bottomrule
	\end{tabular}}
\end{table}

We choose four settings to verify the sample complexity in Theorem 1; see Table \ref{tab:simulation}. 
The parameter tensors $\{\cm{A}_k,\cm{B}_k\}_{k=1}^K$ are generated to have standard normal entries. We consider two types of inputs:
the independent inputs are generated with entries being standard normal random variables, and the time-dependent inputs $\{\bm{x}^i\}$ are generated from a stationary VAR(1) process. The random additive noises are generated from the standard normal distribution. The number of training samples $n$ varies such that $\sqrt{d_{\mathcal{M}}/{n}}$ is equally spaced in the interval $[0.15, 0.60]$, with all other parameters fixed. For each $n$, we generate 200 training sets to calculate the averaged estimation error in Frobenius norm. The result is presented in the Left panel of Figure \ref{fig:theorem1-2}. It can be seen that the estimation error increases linearly with the square root of $d_{\mathcal{M}}/n$, which is consistent with our finding in Theorem 1. 

For Theorem 2, we also adopt four different settings; see Table \ref{tab:simulation}. Here, we consider 4D input tensors. The stacked kernel $\cm{A}_\mathrm{stack}$ is generated by (6), where the core tensor has standard normal entries and the factor matrices are generated to have orthonormal columns. The number of training samples $n$ is similarly chosen such that $\sqrt{d_{\mathcal{M}}^\mathrm{TU}/{n}}$ is equally spaced. The result is presented in the Left panel of Figure \ref{fig:theorem1-2}. A linear trend between the estimation error and the square root of ${d_{\mathcal{M}}^\mathrm{TU}/{n}}$ can be observed, which verifies Theorem 2.

For details of implementation, we employed the gradient descent method for the optimization with a learning rate of 0.01 and a momentum of 0.9. The procedure is deemed to have reached convergence if the target function drops by less than $10^{-8}$. 

\begin{figure}
	\centering
	\includegraphics[width=1.\linewidth]{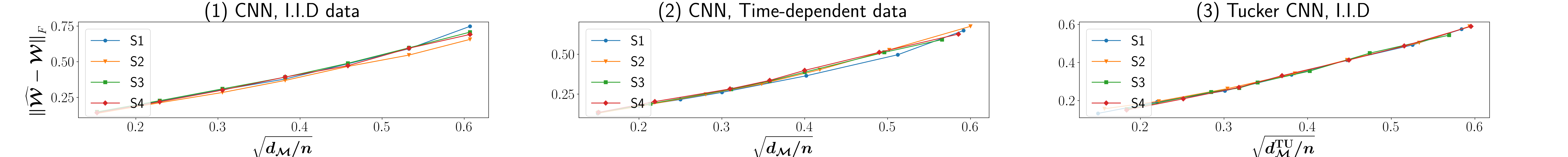}
	\caption{(1)-(2) are the experiment results for Theorem 1 with independent and dependent inputs, respectively. (3) is the experiment results for Tucker CNN in Theorem 2. }
	\label{fig:theorem1-2}
\end{figure}

Next, we conduct extra experiments for efficient number of kernels for a CP block design. 
This study uses $32\times 32$ inputs with $16$ channels, and we set stride to 1 and pooling sizes to $(5,5)$. 
We generate the orthonormal factor matrices $\{\bm{H}^{(j)}, 1\leq j\leq 4\}$, where $\bm{H}^{(1)}$ is of size $\mathbb{R}^{8\times R}$, $\bm{H}^{(2)}$ is of size $\mathbb{R}^{8\times R}$, $\bm{H}^{(3)}$ is of size $\mathbb{R}^{16\times R}$ and $\bm{H}^{(4)}$ is of size $\mathbb{R}^{K\times R}$ where $R=8$ and $K\in\{8, 16, 24, 32\}$.
If we denote the orthonormal column vectors of $\bm{H}^{(j)}$ by $\bm{h}^{(j)}_r$ where $1\leq r\leq R$ and $1\leq j\leq 4$, the stacked kernel tensor $\cm{A}$ can then be generated as $\cm{A} = \sum_{r=1}^{R}\bm{h}^{(1)}_r\circ\bm{h}^{(2)}_r\circ\bm{h}^{(3)}_r\circ\bm{h}^{(4)}_r$. And it can be seen that $\cm{A}$ has a CP rank of $R=8$. We split the stacked kernel tensor $\cm{A}$ along the kernel dimension to obtain $\{\cm{A}_k, 1\leq k\leq K\}$ and generate the corresponding fully-connected weight tensors $\{\cm{B}_k, 1\leq k\leq K\}$ with standard normal entries. The parameter tensor $\cm{W}$ is hence obtained, and we further normalize it to have unit Frobenius norm to ensure the comparability of estimation errors between different $K$s. 
The block structure in Figure \ref{fig:kernel}(1) is employed to train the network, and it is equivalent to the bottleneck structure with a CP decomposition on $\cm{A}$; see \citet{kossaifi2019factorized}. 
We can see that as $K$ increases, there is more redundancy in the network parameters.
Fifty training sets are generated for each training size, and we stop training when the target function drops by less than $10^{-5}$.
From Figure \ref{fig:kernel}(2), the estimation error increases as $K$ is larger, and the difference is more pronounced when training size is small.

\begin{figure}[ht]
	\centering
	\includegraphics[width=0.7\linewidth]{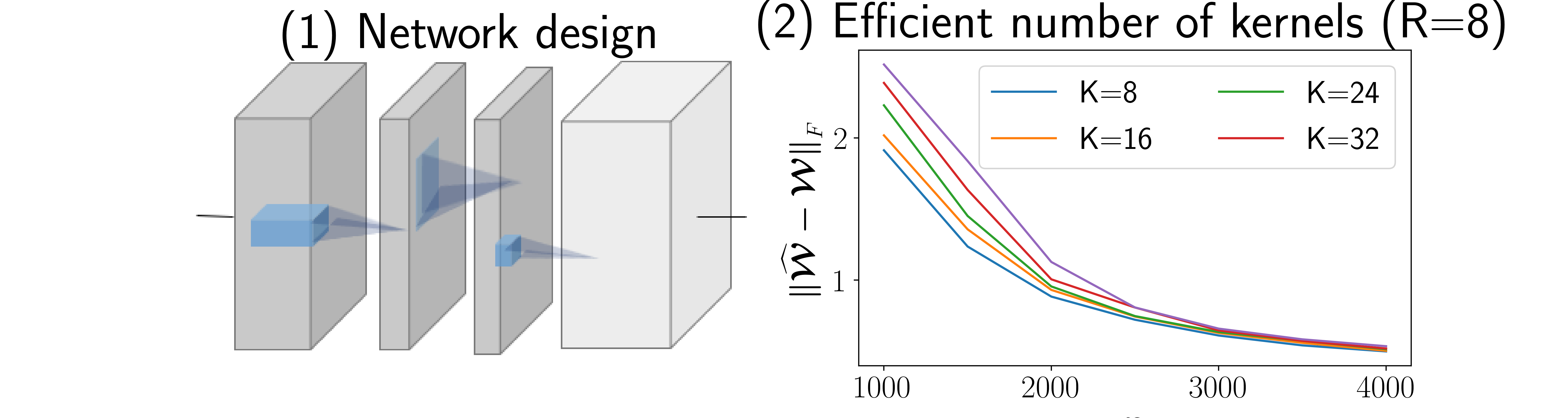}
	\caption{Additional experiments for efficient number of kernels.}
	\label{fig:kernel}
\end{figure}

\newpage

\renewcommand{\thesection}{Appendix C}
\renewcommand{\thelemma}{C.\arabic{lemma}}
\renewcommand{\thesubsection}{C.\arabic{subsection}}
\renewcommand{\theequation}{C.\arabic{equation}}
\setcounter{lemma}{0}
\setcounter{equation}{0}

\section{Theoretical results and technical proofs}\label{app:thm}

\subsection{Proof of Theorem 1}

Denote the sets $\widehat{\mathcal{S}}_K=\{\sum_{k=1}^{K}\cm{B}_k\otimes\cm{A}_k: \cm{A}_k\in\mathbb{R}^{l_1\times l_2\times\cdots\times l_N} \text{ and } \cm{B}_k\in\mathbb{R}^{p_1\times p_2\times\cdots\times p_N} \}$ and $\mathcal{S}_K=\{\cm{W}\in \widehat{\mathcal{S}}_K: \|\cm{W}\|_{\mathrm{F}}=1\}$.  
Let $\widehat{\bm{\Delta}}=\cm{\widehat{W}} -\cm{W}^*$, and then
\[
\frac{1}{n}\sum_{i=1}^{n}(y^i - \langle\cm{Z}^i,\cm{\widehat{W}}\rangle)^2 \leq \frac{1}{n}\sum_{i=1}^{n}(y^i - \langle\cm{Z}^i,\cm{W}^*\rangle)^2,
\]
which implies that
\begin{equation}\label{eq:basic}
	\|\widehat{\bm{\Delta}}\|_n^2 \leq \frac{2}{n}\sum_{i=1}^{n}\xi^i\langle\cm{Z}^i,\widehat{\bm{\Delta}}\rangle \leq 2\|\widehat{\bm{\Delta}}\|_{\mathrm{F}}\sup_{\bm{\Delta}\in \mathcal{S}_{2K}}\frac{1}{n}\sum_{i=1}^{n}\xi^i\langle\cm{Z}^i,\bm{\Delta}\rangle,
\end{equation}
where $\norm{\bm{\Delta}}_n^2 := {n}^{-1}\sum_{i=1}^{n}(\langle\cm{Z}^i,\bm{\Delta}\rangle)^2$ is the empirical norm with respect to $\bm{\Delta}$, and $\widehat{\bm{\Delta}}\in \widehat{\mathcal{S}}_{2K}$.

Consider a $\varepsilon$-net $\bar{\mathcal{S}}_{2K}$, with the cardinality of $\mathcal{N}(2K,\varepsilon)$, for the set $\mathcal{S}_{2K}$. For any $\bm{\Delta}\in\mathcal{S}_{2K}$, there exists a $\bar{\bm{\Delta}}_j\in\bar{\mathcal{S}}_{2K}$ such that $\|\bm{\Delta}-\bar{\bm{\Delta}}_j\|_{\mathrm{F}}\leq \varepsilon$.
Note that $\bm{\Delta}-\bar{\bm{\Delta}}_j\in\widehat{\mathcal{S}}_{4K}$ and, from Lemma \ref{lemma:partition}(a), we can further find  $\bm{\Delta}_1,\bm{\Delta}_2\in\widehat{\mathcal{S}}_{2K}$ such that $\langle\bm{\Delta}_1,\bm{\Delta}_2\rangle=0$ and $\bm{\Delta}-\bar{\bm{\Delta}}_j =\bm{\Delta}_1+\bm{\Delta}_2$. It then holds that
$\|\bm{\Delta}_1\|_{\mathrm{F}}+\|\bm{\Delta}_2\|_{\mathrm{F}}\leq \sqrt{2}\|\bm{\Delta}-\bar{\bm{\Delta}}_j\|_{\mathrm{F}}\leq \sqrt{2}\varepsilon$ since $\|\bm{\Delta}-\bar{\bm{\Delta}}_j\|_{\mathrm{F}}^2=\|\bm{\Delta}_1\|_{\mathrm{F}}^2+\|\bm{\Delta}_2\|_{\mathrm{F}}^2$.
As a result,
\begin{align*}
	\frac{1}{n}\sum_{i=1}^{n}\xi^i\langle\cm{Z}^i,\bm{\Delta}\rangle&=\frac{1}{n}\sum_{i=1}^{n}\xi^i\langle\cm{Z}^i,\bar{\bm{\Delta}}_j\rangle+\frac{1}{n}\sum_{i=1}^{n}\xi^i\langle\cm{Z}^i,\bm{\Delta}_1\rangle +\frac{1}{n}\sum_{i=1}^{n}\xi^i\langle\cm{Z}^i,\bm{\Delta}_2\rangle\\
	&\leq\max_{1\leq j\leq \mathcal{N}(2K,\varepsilon)}\frac{1}{n}\sum_{i=1}^{n}\xi^i\langle\cm{Z}^i,\bar{\bm{\Delta}}_j\rangle+\sqrt{2}\varepsilon \sup_{\bm{\Delta}\in \mathcal{S}_{2K}}\frac{1}{n}\sum_{i=1}^{n}\xi^i\langle\cm{Z}^i,\bm{\Delta}\rangle,
\end{align*}
which leads to
\begin{equation}\label{eq:sup2}
	\begin{split}
		&\sup_{\bm{\Delta}\in \mathcal{S}_{2K}}\frac{1}{n}\sum_{i=1}^{n}\xi^i\langle\cm{Z}^i,\bm{\Delta}\rangle \leq (1-\sqrt{2}\varepsilon)^{-1}\max_{1\leq j\leq \mathcal{N}(2K,\varepsilon)}\frac{1}{n}\sum_{i=1}^{n}\xi^i\langle\cm{Z}^i,\bar{\bm{\Delta}}_j\rangle.
	\end{split}
\end{equation}

Note that, from Lemma \ref{lemma:partition}(b), $\log\mathcal{N}(2K,\varepsilon)\leq 2d_{\mathcal{M}}\log(9/\varepsilon)$, where $d_{\mathcal{M}}=K(P+L+1)$.
Let $\varepsilon=(2\sqrt{2})^{-1}$, and then $8-2\log(9/\varepsilon)>1$.
As a result, by (\ref{eq:sup2}) and Lemma \ref{lemma:concentration},
\begin{align}
	\begin{split}\label{eq:thm}
		&\mathbb{P}\Big\{\sup_{\bm{\Delta}\in \mathcal{S}_{2K}}\frac{1}{n}\sum_{i=1}^{n}\xi^i\langle\cm{Z}^i,\bm{\Delta}\rangle \geq 8\sigma\sqrt{\alpha_{\mathrm{RSM}}} \sqrt{\frac{d_{\mathcal{M}}}{n}},\sup_{\bm{\Delta}\in \mathcal{S}_{2K}}\norm{\bm{\Delta}}_n^2 \leq \alpha_{\mathrm{RSM}}\Big\}\\
		&\leq \mathbb{P} \Big\{\max_{1\leq j\leq \mathcal{N}(2K,\varepsilon)}\frac{1}{n}\sum_{i=1}^{n}\xi^i\langle\cm{Z}^i,\bar{\bm{\Delta}}_j\rangle \geq 4\sigma \sqrt{\alpha_{\mathrm{RSM}}} \sqrt{\frac{d_{\mathcal{M}}}{n}},\sup_{\bm{\Delta}\in \mathcal{S}_{2K}}\norm{\bm{\Delta}}_n^2 \leq \alpha_{\mathrm{RSM}}\Big\}\\
		&\leq \sum_{j=1}^{\mathcal{N}(2K,\varepsilon)} \mathbb{P} \Big(\frac{1}{n}\sum_{i=1}^{n}\xi^i\langle\cm{Z}^i,\bar{\bm{\Delta}}_j\rangle \geq 4\sigma\sqrt{\alpha_{\mathrm{RSM}}} \sqrt{\frac{ d_{\mathcal{M}}}{n}},\norm{\bar{\bm{\Delta}}_j}_n^2 \leq \alpha_{\mathrm{RSM}}\Big)\\
		&\leq \exp\{- d_{\mathcal{M}}\}.
	\end{split}
\end{align}
Note that
\begin{align*}
	&\mathbb{P} \left\{\sup_{\bm{\Delta}\in \mathcal{S}_{2K}}\frac{1}{n}\sum_{i=1}^{n}\xi^i\langle\cm{Z}^i,\bm{\Delta}\rangle \geq 8\sigma \sqrt{\alpha_{\mathrm{RSM}}}\sqrt{\frac{d_{\mathcal{M}}}{n}}\right\} \\
	&\leq 
	\mathbb{P} \Big\{\sup_{\bm{\Delta}\in \mathcal{S}_{2K}}\frac{1}{n}\sum_{i=1}^{n}\xi^i\langle\cm{Z}^i,\bm{\Delta}\rangle \geq 8\sigma \sqrt{\alpha_{\mathrm{RSM}}}\sqrt{\frac{d_{\mathcal{M}}}{n}},\sup_{\bm{\Delta}\in \mathcal{S}_{2K}}\norm{\bm{\Delta}}_n^2 \leq \alpha_{\mathrm{RSM}}\Big\}+\mathbb{P}\left( \sup_{\bm{\Delta}\in \mathcal{S}_{2K}}\norm{\bm{\Delta}}_n^2 \geq \alpha_{\mathrm{RSM}}\right).
\end{align*}
From (\ref{eq:thm}) and Lemma \ref{lemma:RSC}, we then have that, with probability $1-4\exp\{-[c_{\mathrm{H}}(0.25\kappa_L/\kappa_U)^2n-9d_{\mathcal{M}}]\}-\exp\{ -d_{\mathcal{M}}\}$, 
\[
\sup_{\bm{\Delta}\in \mathcal{S}_{2K}}\frac{1}{n}\sum_{i=1}^{n}\xi^i\langle\cm{Z}^i,\bm{\Delta}\rangle \leq 8\sigma\sqrt{\alpha_{\mathrm{RSM}}} \sqrt{\frac{d_{\mathcal{M}}}{n}}
\]
and
\[
\norm{\bm{\Delta}}_n^2\geq \alpha_{\mathrm{RSC}}\norm{\bm{\Delta}}_\mathrm{F}^2,
\]
which, together with (\ref{eq:basic}), leads to
\[
\|\widehat{\bm{\Delta}}\|_{\mathrm{F}} \leq \frac{16\sigma\sqrt{\alpha_{\mathrm{RSM}}}}{\alpha_{\mathrm{RSC}}}\sqrt{\frac{d_{\mathcal{M}}}{n}}.
\]

Given a test sample $(\cm{X},y)$, and let $\cm{Z} = \cm{X}\times_1\bm{U}^{(1)\prime}_\mathcal{F}\times_2\bm{U}^{(2)\prime}_\mathcal{F}\times_3\cdots\times_N\bm{U}^{(N)\prime}_\mathcal{F}$. 
It holds that
\[
\cm{E}^2(\cm{\widehat{W}}) = \mathbb{E}_{\cm{X}} |\langle\cm{X}, \cm{\widehat{W}}_X\rangle -  \langle\cm{X}, \cm{W}^*_X\rangle|^2 =\mathbb{E}|\langle\cm{Z}, \cm{\widehat{W}}-\cm{W}^*\rangle|^2 
\]
and by Assumptions 1\&3, we have
\[
\cm{E}(\cm{\widehat{W}}) \leq \sqrt{\kappa_U}\|\widehat{\bm{\Delta}}\|_{\mathrm{F}}.
\]
This accomplishes the proof.

\subsection{Proof of Theorem 2} \label{pf:thm2}

Denote the sets $\widehat{\mathcal{S}}^{\mathrm{TU}}(R_1,\cdots,R_{N}, R)=\{\sum_{k=1}^{K}\cm{B}_k\otimes\cm{A}_k:$ the stacked kernel $\cm{A}_\mathrm{stack}\in\mathbb{R}^{l_1\times l_2\times\cdots\times l_N\times K}$ has the ranks of $(R_1,...,R_N, R)$ and $\cm{B}_k\in\mathbb{R}^{p_1\times p_2\times\cdots\times p_N} \}$,
$\widehat{\mathcal{S}}_{2K}^{\mathrm{TU}}=\{\cm{W}_1+\cm{W}_2: \cm{W}_1,\cm{W}_2 \in \widehat{\mathcal{S}}^{\mathrm{TU}}(R_1,\cdots,R_N,R) \}$,
and $\mathcal{S}_{2K}^\mathrm{TU}  = \{\cm{W}\in\widehat{\mathcal{S}}_{2K}^\mathrm{TU}: \|\cm{W}\|_\mathrm{F} = 1\}$.
Note that we have $\cm{\widehat{W}}_{\mathrm{TU}},\cm{W}^*\in \widehat{\mathcal{S}}^{\mathrm{TU}}(R_1,\cdots,R_N,R)$, and  $\widehat{\bm{\Delta}}=\cm{\widehat{W}}_{\mathrm{TU}} -\cm{W}^*\in \widehat{\mathcal{S}}_{2K}^{\mathrm{TU}}$. 

We first consider a $\varepsilon$-net for $\mathcal{S}_{2K}^\mathrm{TU}$.
For each $1\leq k\leq K$, Let $\bm{b}_k=(b_{k1},\ldots,b_{kP})^{\prime}=\vectorize(\cm{B}_k)$, and we can rearrange $\cm{B}_k\otimes\cm{A}_k$ into the form of $\cm{A}_k\circ \bm{b}_k$, which is a tensor of size $\mathbb{R}^{l_1\times l_2\times\cdots \times l_N\times P}$, where $P=p_1p_2\cdots p_N$. Denote $\bm B=(b_{kj})\in \mathbb{R}^{P\times K}$, and it holds that $$\sum_{k=1}^K\cm{A}_k\circ \bm{b}_k=\cm{A}\times_{N+1}\bm B= \cm{G}\times_1 \bm{H}^{(1)}\times_2 \bm{H}^{(2)}\cdots\times_{N+1}\widetilde{\bm{B}},$$
which is a tensor with the size of $\mathbb{R}^{l_1\times l_2\times\cdots \times l_N\times P}$ and the ranks of $(R_1,...,R_N, R)$ where $\widetilde{\bm{B}} = \bm{B}\bm{H}^{(N+1)}\in\mathbb{R}^{P\times R}$.
Essentially, in this step, we rewrite the model into one with $R_{k+1}$ kernels instead. Specifically, we now have $\cm{W} = \sum_{r=1}^{R}\cm{\widetilde{A}}_r\otimes\cm{\widetilde{B}}_r$, where $\cm{\widetilde{A}}_r = \cm{G}_r\times_1 \bm{H}^{(1)}\times_2 \bm{H}^{(2)}\cdots\times_{N}\bm{H}^{(N)}$ with $\cm{G}_r = \cm{G}(:,:,\cdots,r)$ and the $r$-th column of $\widetilde{\bm{B}}$ is the vectorization of $\cm{\widetilde{B}}_r$ for $1\leq r\leq R$.

As a result, $\widehat{\mathcal{S}}_{2K}^\mathrm{TU}$ consists of tensors with the ranks of $(2R_1,...,2R_N, 2R)$ at most. 

Denote 
$\mathcal{S}_{\mathrm{Tucker}}(r_1,\cdots,r_N,r_{N+1})=\{\cm{T}\in \mathbb{R}^{l_1\times \cdots\times l_N\times l_{N+1}}: \|\cm{T}\|_{\mathrm{F}}=1,$ $\cm{T}$ has the Tucker ranks of $(r_1,\cdots,r_N,r_{N+1})\}$, where $l_{N+1}=P = p_1p_2\cdots p_N$. 
Then the $\varepsilon$-covering number for $\mathcal{S}_{2K}^\mathrm{TU}$ satisfies
\[
|\mathcal{S}_{2K}^\mathrm{TU}|\leq |\mathcal{S}_{\mathrm{Tucker}}(2R_1,\cdots,2R_N,2R)|.
\]

For each $\cm{T}\in \mathcal{S}_{\mathrm{Tucker}}(r_1,\cdots,r_N,r_{N+1})$, we have
\begin{equation*}
	\cm{T} = \cm{G}\times_1 \bm{U}^{(1)}\times_2\cdots\times_{N+1} \bm{U}^{(N+1)},
\end{equation*}
where $\cm{G}\in\mathbb{R}^{r_1\times\cdots \times r_N\times r_{N+1}}$ with $\|\cm{G}\|_{\mathrm{F}}=1$, and $\bm{U}^{(i)}\in\mathbb{R}^{l_i\times r_i}$ with $1\leq i\leq N+1$ are orthonormal matrices.
We now construct an $\varepsilon$-net for $\mathcal{S}_{\mathrm{Tucker}}(r_1,\cdots,r_N,r_{N+1})$ by covering the sets of $\cm{G}$ and all $\bm{U}^{(i)}$s, and the proof hinges on the covering number of low-multilinear-rank tensors in \cite{wang2019compact}. 
Treating $\cm{G}$ as $\prod_{j=1}^{N+1}r_j$-dimensional vector with $\|\cm{G}\|_{\mathrm{F}}=1$, we can find an $\varepsilon/(N+2)$-net for it, denoted by $\bar{G}$, with the cardinality of $|\bar{G}|\leq(3(N+2)/\varepsilon)^{\prod_{j=1}^{N+1}r_j}$. 

Next, let $O_{n,r}=\{\bm{U}\in\mathbb{R}^{n\times r}: \bm{U}^\top\bm{U}=I_{r}\}$. To cover $O_{n,r}$, it is beneficial to use the $\|\cdot\|_{1,2}$ norm, defined as
\begin{equation*}
	\|\bm{X}\|_{1,2}=\max_{i}\|\bm{X}_i\|_2,
\end{equation*}
where $\bm{X}_i$ denotes the $i\ts{th}$ column of $\bm{X}$. Let $Q_{n,r}=\{\bm{X}\in\mathbb{R}^{n\times r}:\|\bm{X}\|_{1,2}\leq1\}$. One can easily check that $O_{n,r}\subset Q_{n,r}$, and then an $\varepsilon/(N+2)$-net $\bar{O}_{n,r}$ for $O_{n,r}$ has the cardinality of $|\bar{O}_{n,r}|\leq(3(N+2)/\varepsilon)^{nr}$.
Denote $\bar{\mathcal{S}}_{\mathrm{Tucker}}(r_1,\cdots,r_N,r_{N+1})=\{\cm{G}\times_1 \bm{U}^{(1)}\times_2\cdots\times_{N+1} \bm{U}^{(N+1)}:~\cm{{G}}\in\bar{G},~\bm{{U}}^{(i)}\in \bar{O}_{l_i,r_i},~1\leq i \leq N+1\}$. 
By a similar argument presented in Lemma A.1 of \cite{wang2019compact}, we can show that $\bar{\mathcal{S}}_{\mathrm{Tucker}}(r_1,\cdots,r_N,r_{N+1})$ is a $\varepsilon$-net
for the set ${\mathcal{S}}_{\mathrm{Tucker}}(r_1,\cdots,r_N,r_{N+1})$ with the cardinality of
\[
\left(\frac{3N+6}{\varepsilon}\right)^{\prod_{j=1}^{N+1}r_j+\sum_{j=1}^{N+1}l_jr_j}.
\]
where $l_{N+1}=P$. Thus, the $\varepsilon$-covering number of $\mathcal{S}_{2K}^\mathrm{TU}$ is
\begin{align*}
	&\mathcal{N}^{\mathrm{TU}}(\varepsilon)= |\mathcal{S}_{\mathrm{Tucker}}(2R_1,\cdots,2R_N,2R)|\\ &\hspace{8mm}\leq\left(\frac{3N+6}{\varepsilon}\right)^{2^{N+1}R\prod_{j=1}^{N}R_j+2\sum_{i=1}^{N}l_iR_i+2RP}.
\end{align*}

Let $\varepsilon = 1/2$. It then holds that $\log(3(N+2)/\varepsilon) < 3\log(N+2)$ and $\log \mathcal{N}^{\mathrm{TU}}(\varepsilon)\leq 2^{N+1}d_{\mathcal{M}}^\mathrm{TU}\log[(3N+6)/\varepsilon]$, where $d_{\mathcal{M}}^\mathrm{TU}=R\prod_{j=1}^{N}R_j+\sum_{i=1}^{N}l_iR_i+RP$. By a method similar to the proof of Lemma 2, we can show that 
\begin{equation}
	\label{eq:thm3-RSM}
	\mathbb{P}\left\{\sup_{\bm{\Delta}\in\mathcal{S}_{2K}^\mathrm{TU}}
	\norm{\bm{\Delta}}_n^2 
	\geq\alpha_{\mathrm{RSM}}\right\}
	\leq2\exp\left\{-c_{\mathrm{H}}n\left(\frac{\kappa_L}{4\kappa_U}\right)^2+3c_Nd_{\mathcal{M}}^\mathrm{TU}\right\},
\end{equation}
and 
\begin{equation}
	\label{eq:thm3-RSC}
	\mathbb{P}\left\{\inf_{\bm{\Delta}\in\mathcal{S}_{2K}^\mathrm{TU}}
	\norm{\bm{\Delta}}_n^2 
	\leq \alpha_{\mathrm{RSC}}\right\}
	\leq 2\exp\left\{-c_{\mathrm{H}}n\left(\frac{\kappa_L}{4\kappa_U}\right)^2+3c_Nd_{\mathcal{M}}^\mathrm{TU}\right\}.
\end{equation}
where $c_N = 2^{N+1}\log(N+2)$.
Moreover, similar to \eqref{eq:thm}, by applying Lemma \ref{lemma:concentration}, we can show that
\begin{align}
	\begin{split}\label{eq:thm-3}
		\mathbb{P} &\Big\{\sup_{\bm{\Delta}\in \mathcal{S}_{2K}^\mathrm{TU}}\frac{1}{n}\sum_{i=1}^{n}\xi^i\langle\cm{Z}^i,\bm{\Delta}\rangle \geq 8\sigma\sqrt{\alpha_{\mathrm{RSM}}} \sqrt{\frac{c_Nd_{\mathcal{M}}^\mathrm{TU}}{n}},\sup_{\bm{\Delta}\in \mathcal{S}_{2K}^\mathrm{TU}}\norm{\bm{\Delta}}_n^2 \leq \alpha_{\mathrm{RSM}}\Big\}\\
		&\leq \mathbb{P} \Big\{\max_{1\leq j\leq \mathcal{N}^\mathrm{TU}(\varepsilon)}\frac{1}{n}\sum_{i=1}^{n}\xi^i\langle\cm{Z}^i,\bar{\bm{\Delta}}_j\rangle \geq 4\sigma \sqrt{\alpha_{\mathrm{RSM}}} \sqrt{\frac{c_Nd_{\mathcal{M}}^\mathrm{TU}}{n}},\sup_{\bm{\Delta}\in \mathcal{S}_{2K}^\mathrm{TU}}\norm{\bm{\Delta}}_n^2 \leq \alpha_{\mathrm{RSM}}\Big\}\\
		&\leq \sum_{j=1}^{\mathcal{N}^\mathrm{TU}(\varepsilon)} \mathbb{P} \Big\{\frac{1}{n}\sum_{i=1}^{n}\xi^i\langle\cm{Z}^i,\bar{\bm{\Delta}}_j\rangle \geq 4\sigma\sqrt{\alpha_{\mathrm{RSM}}} \sqrt{\frac{ c_Nd_{\mathcal{M}}^\mathrm{TU}}{n}},\norm{\bar{\bm{\Delta}}_j}_n^2 \leq \alpha_{\mathrm{RSM}}\Big\}\\
		&\leq \exp\{- d_{\mathcal{M}}^\mathrm{TU}\},
	\end{split}
\end{align}
where $\varepsilon = (2\sqrt{2})^{-1}$ and $2^{N+4}\log(N+2) - 2^{N+1}\log(3(N+1)/\varepsilon) > 1$.
By a method similar to the proof of Theorem 1, we can show that, with probability $1-4\exp\{-[c_{\mathrm{H}}(0.25\kappa_L/\kappa_U)^2n-3c_N\log(N+2)d_{\mathcal{M}}^\mathrm{TU}]\}-\exp\{ -d_{\mathcal{M}}^\mathrm{TU}\}$, 
\[
\sup_{\bm{\Delta}\in \mathcal{S}_{2K}^{\mathrm{TU}}}\frac{1}{n}\sum_{i=1}^{n}\xi^i\langle\cm{Z}^i,\bm{\Delta}\rangle \leq 8\sigma\sqrt{\alpha_{\mathrm{RSM}}}\sqrt{\frac{c_Nd_{\mathcal{M}}^\mathrm{TU}}{n}},
\]
and
\[
\norm{\bm{\Delta}}_n^2\geq \alpha_{\mathrm{RSC}}\norm{\bm{\Delta}}_\mathrm{F}^2,
\]
which, together with \eqref{eq:basic}, leads to
\begin{equation*}
	\begin{split}
		&\|\widehat{\bm{\Delta}}\|_{\mathrm{F}} \leq \frac{16\sigma\sqrt{\alpha_{\mathrm{RSM}}}}{\alpha_{\mathrm{RSC}}} \sqrt{\frac{c_Nd_{\mathcal{M}}^\mathrm{TU}}{n}},\text{ and}\\		
		&\cm{E}(\cm{\widehat{W}}) \leq \sqrt{\kappa_U}\|\widehat{\bm{\Delta}}\|_{\mathrm{F}}.
	\end{split}
\end{equation*}
We accomplished the proof.
\subsection{Proofs of Corollary 1}
When the kernel tensor $\cm{A}$ has a Tucker decomposition form of $\cm{A}=\cm{G}\times_1\bm{H}^{(1)}\times_2\bm{H}^{(2)}\times_3\cdots\times_{N+1}\bm{H}^{(N+1)}$, and the multilinear rank is $(R_1,R_2,\cdots,R_N,R)$.
Let $\cm{H} = \cm{G}\times_1\bm{H}^{(1)}\times_2\bm{H}^{(2)}\times_3\cdots\times_{N}\bm{H}^{(N)}$, which is a tensor of size $\mathbb{R}^{l_1\times\cdots\times l_N\times R}$. 
The mode-$(N+1)$ unfolding of $\cm{H}$ is a matrix with $R$ row vectors, each of size $\mathbb{R}^{L}$, and we denote them by $\bm{g}_1,\cdots, \bm{g}_{R_{N}}, \bm{g}_{R}$.
Fold $\bm{g}_r$s back into tensors, i.e. $\bm{g}_r = \vectorize(\cm{\widetilde{H}}_r)$, where $\cm{\widetilde{H}}_r\in\mathbb{R}^{l_1\times\cdots\times l_N}$ and $1\leq r\leq R$.
Moreover, let $\bm{H}^{(N+1)} = (\bm{h}_1^{(N+1)},\cdots,\bm{h}_K^{(N+1)})^\prime$, where $\bm{h}_k^{(N+1)}$ is a vector of size $\mathbb{R}^{R}$ and we denote its $r\ts{th}$ entry as $h_{k,r}^{(N+1)}$, for $1\leq r \leq R$. It can be verified that, for each $1\leq k \leq K$,
\begin{align*}
	\cm{A}_k &= \cm{G}\times_1\bm{H}^{(1)}\times_2\bm{H}^{(2)}\cdots\times_{N}\bm{H}^{(N)}\bar{\times}_{N+1}\bm{\bm{h}}_k^{(N+1)}\\
	&= \cm{H}\bar{\times}_{N+1}\bm{\bm{h}}_k^{(N+1)},
\end{align*}
and hence,
\begin{align*}
	\sum_{k=1}^{K}\cm{B}_k\otimes\cm{A}_k 
	&= \sum_{r=1}^{R}
	\sum_{k=1}^{K}\cm{B}_k\otimes\left(\cm{\widetilde{H}}_r\cdot h_{k,r}^{(N+1)}\right) \\
	&= \sum_{r=1}^{R}
	\left(\sum_{k=1}^{K}h_{k,r}^{(N+1)}\cm{B}_k\right)\otimes\cm{\widetilde{H}}_r.
\end{align*}
By letting $\cm{\widetilde{B}}_r =\sum_{k=1}^{K}h_{k,r}^{(N+1)}\cm{B}_k$ and $\cm{\widetilde{A}}_r = \cm{\widetilde{H}}_r$ for $1\leq r \leq R$, we can reformulate the model into
\begin{equation*}
	y^i = \langle \sum_{r=1}^{R}\cm{\widetilde{B}}_r\otimes\cm{\widetilde{A}}_r, \cm{Z}^i \rangle + \xi^i ,
\end{equation*}

and the proof of (a) is then accomplished.

Suppose that the kernel tensor $\cm{A}$ has a CP decomposition form of $\cm{A} = \sum_{r=1}^{R}\alpha_r\bm{h}^{(1)}_r\circ \bm{h}^{(2)}_r\circ\cdots\circ\bm{h}^{(N+1)}_r$, where $\bm{h}^{(j)}_r\in \mathbb{R}^{l_j}$ and $\|\bm{h}^{(j)}_r\|=1$ for all $1\leq j\leq N$.
Moreover, $\bm{h}^{(N+1)}_r=(h_{r,1}^{(N+1)},...,h_{r,K}^{(N+1)})^{\prime}\in \mathbb{R}^K$ and $\|\bm{h}^{(N+1)}_r\|=1$.
Note that
\begin{equation*}
	\cm{A}_k = \sum_{r=1}^{R}\alpha_rh_{r,k}^{(N+1)}\bm{h}^{(1)}_r\circ \bm{h}^{(2)}_r\circ\cdots\circ\bm{h}^{(N)}_r,
\end{equation*}
for all $1\leq k\leq K$, and hence
\begin{equation*}
	\begin{split}
		\sum_{k=1}^{K}\cm{B}_k\otimes\cm{A}_k 
		&= \sum_{r=1}^{R}\sum_{k=1}^{K}\cm{B}_k\otimes\left(\alpha_rh_{r,k}^{(N+1)}\bm{h}^{(1)}_r\circ \bm{h}^{(2)}_r\circ\cdots\circ\bm{h}^{(N)}_r \right)\\
		&= \sum_{r=1}^{R}\left(h_{r,k}^{(N+1)}\cm{B}_k\right)\otimes\left(\alpha_r\bm{h}^{(1)}_r\circ \bm{h}^{(2)}_r\circ\cdots\circ\bm{h}^{(N)}_r\right).
	\end{split}
\end{equation*}
By letting $\cm{\widetilde{B}}_r = h_{r,k}^{(N+1)}\cm{B}_k$ and $\cm{\widetilde{A}}_r = \alpha_r\bm{h}^{(1)}_r\circ \bm{h}^{(2)}_r\circ\cdots\circ\bm{h}^{(N)}_r$ for all $1\leq r \leq R$, we can reparameterize the model into
\begin{equation*}
	y^i = \langle \sum_{r=1}^{R}\cm{\widetilde{B}}_r\otimes\cm{\widetilde{A}}_r, \cm{Z}^i \rangle + \xi^i ,
\end{equation*}
and the proof of (b) is then accomplished.

\subsection{Lemmas Used in the Proofs of Theorems}

\begin{lemma}(Partition and covering number of restricted spaces)\label{lemma:partition}
	Consider $\widehat{\mathcal{S}}_K=\{\sum_{k=1}^{K}\cm{B}_k\otimes\cm{A}_k: \cm{A}_k\in\mathbb{R}^{l_1\times l_2\times\cdots\times l_N} \text{ and } \cm{B}_k\in\mathbb{R}^{p_1\times p_2\times\cdots\times p_N} \}$ and $\mathcal{S}_K=\{\cm{W}\in \widehat{\mathcal{S}}_K: \|\cm{W}\|_{\mathrm{F}}=1\}$.
	\begin{itemize}
		\item [(a)] For any $\bm{\Delta}\in \widehat{\mathcal{S}}_{2K}$, there exist $\cm{W}_1,\cm{W}_2\in\widehat{\mathcal{S}}_{K}$ such that $\bm{\Delta} = \cm{W}_1+\cm{W}_2$ and $\langle\cm{W}_1,\cm{W}_2\rangle=0$.
		\item [(b)]The $\varepsilon$-covering number of the set $\mathcal{S}_{K}$ is 
		$$\mathcal{N}(K,\varepsilon) \leq (9/\varepsilon)^{K(L+P+1)},$$
		where $L=l_1l_2\cdots l_N$ and $P=p_1p_2\cdots p_N$.	
	\end{itemize}
\end{lemma}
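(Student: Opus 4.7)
The plan is to reduce both statements to matrix-level arguments via a suitable unfolding. Each atom $\cm{B}_k\otimes\cm{A}_k$, when its $\cm{B}$-modes and $\cm{A}$-modes are separated, matricizes into the rank-one matrix $\vectorize(\cm{B}_k)\vectorize(\cm{A}_k)^\prime \in \mathbb{R}^{P\times L}$. Consequently, any element of $\widehat{\mathcal{S}}_K$ corresponds bijectively to a matrix in $\mathbb{R}^{P\times L}$ of rank at most $K$, and under this correspondence the tensor Frobenius inner product and norm coincide with the matrix Frobenius inner product and norm. I will reuse this identification for both parts.

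For part (a), given $\bm{\Delta}\in\widehat{\mathcal{S}}_{2K}$, I would unfold to obtain $\bm{M}_{\bm{\Delta}}\in\mathbb{R}^{P\times L}$ of rank at most $2K$ and apply SVD, $\bm{M}_{\bm{\Delta}}=\sum_{i=1}^{2K}\sigma_i\bm{u}_i\bm{v}_i^\prime$. Splitting the SVD into $\bm{M}_1=\sum_{i=1}^{K}\sigma_i\bm{u}_i\bm{v}_i^\prime$ and $\bm{M}_2=\sum_{i=K+1}^{2K}\sigma_i\bm{u}_i\bm{v}_i^\prime$, the orthonormality of the left and right singular vectors yields $\langle\bm{M}_1,\bm{M}_2\rangle=0$. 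Re-folding each rank-one term $\sigma_i\bm{u}_i\bm{v}_i^\prime$ back into $\sqrt{\sigma_i}\,\mathrm{vec}^{-1}(\bm{u}_i)\otimes\sqrt{\sigma_i}\,\mathrm{vec}^{-1}(\bm{v}_i)$ (absorbing $\sigma_i$ into one of the factors) produces tensors $\cm{W}_1,\cm{W}_2\in\widehat{\mathcal{S}}_K$ with $\bm{\Delta}=\cm{W}_1+\cm{W}_2$ and $\langle\cm{W}_1,\cm{W}_2\rangle=0$, as needed.

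For part (b), I would parametrize each $\cm{W}\in\mathcal{S}_K$ via a thin SVD of its matricization: a unit vector $\bm{\sigma}\in\mathbb{R}^K$ of singular values (so $\|\bm{\sigma}\|_2=1$), an orthonormal matrix $\bm{U}\in\mathbb{R}^{P\times K}$, and an orthonormal matrix $\bm{V}\in\mathbb{R}^{L\times K}$. Following the Stiefel covering approach used in the proof of Theorem 2, I would construct $(\varepsilon/3)$-nets for each of these three factors: the unit sphere in $\mathbb{R}^K$ admits a net of size at most $(9/\varepsilon)^K$; the Stiefel manifolds $O_{P,K}$ and $O_{L,K}$ embed into $\{\bm{X}:\|\bm{X}\|_{1,2}\le 1\}$, which admit $(\varepsilon/3)$-nets of cardinality at most $(9/\varepsilon)^{PK}$ and $(9/\varepsilon)^{LK}$, respectively. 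Taking the product of these three nets and re-folding back into the tensor space gives a candidate cover of $\mathcal{S}_K$ of cardinality $(9/\varepsilon)^{K(P+L+1)}$.

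The main technical step, and the step most likely to require care, is the triangle-inequality verification that this product indeed forms an $\varepsilon$-net. Concretely, one must show that for any $(\bm{\sigma},\bm{U},\bm{V})$ approximated coordinatewise by $(\bar{\bm{\sigma}},\bar{\bm{U}},\bar{\bm{V}})$, the Frobenius distance between $\bm{U}\,\mathrm{diag}(\bm{\sigma})\,\bm{V}^\prime$ and $\bar{\bm{U}}\,\mathrm{diag}(\bar{\bm{\sigma}})\,\bar{\bm{V}}^\prime$ is at most $\varepsilon$. I would insert and subtract the two hybrid products $\bar{\bm{U}}\,\mathrm{diag}(\bm{\sigma})\,\bm{V}^\prime$ and $\bar{\bm{U}}\,\mathrm{diag}(\bar{\bm{\sigma}})\,\bm{V}^\prime$, and then bound each of the three resulting differences using $\|\bar{\bm{U}}-\bm{U}\|_{1,2}\le\varepsilon/3$, $\|\bar{\bm{V}}-\bm{V}\|_{1,2}\le\varepsilon/3$, $\|\bar{\bm{\sigma}}-\bm{\sigma}\|_2\le\varepsilon/3$, together with $\|\bm{\sigma}\|_2=1$ and the fact that both $\bm{U},\bm{V}$ have columns of unit norm. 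This yields the claimed covering-number bound and completes the proof.
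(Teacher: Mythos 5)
Your proposal is correct and follows essentially the same route as the paper: both reduce to the matricization $\cm{W}\mapsto\sum_k\vectorize(\cm{B}_k)\vectorize(\cm{A}_k)^\prime$, prove (a) by splitting the SVD of the rank-$\leq 2K$ matrix into two rank-$\leq K$ pieces and refolding, and prove (b) via the covering number of unit-norm rank-$\leq K$ matrices in $\mathbb{R}^{P\times L}$. The only difference is that the paper cites Cand\`es and Plan for the bound $(9/\varepsilon)^{K(L+P+1)}$, whereas you re-derive it via the $(\bm{\sigma},\bm{U},\bm{V})$ product net and the three-term triangle inequality, which is precisely the proof of the cited lemma.
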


\begin{proof}
	For each $K$, we first define a map $T_K: \widehat{\mathcal{S}}_K \rightarrow \mathbb{R}^{P\times L}$. For any $\cm{W}=\sum_{k=1}^{K}\cm{B}_k\otimes\cm{A}_k \in \widehat{\mathcal{S}}_K$, we define that $T_K(\cm{W})= \sum_{k=1}^{K}\vectorize{(\cm{B}_k)}\vectorize(\cm{A}_k)^\prime\in \mathbb{R}^{P\times L}$, which has the rank of at most $K$.
	
	For any $\bm{\Delta}\in \widehat{\mathcal{S}}_{2K}$, the rank of matrix $T_K(\bm{\Delta})$ is at most $2K$. As shown in Figure \ref{fig:partition}, we can split the singular value decomposition (SVD) of $T_K(\bm{\Delta})$ into two parts, and it can be verified that $T_K(\bm{\Delta}) = \bm{U}_1\bm{\Lambda}_1{\bm{V}_1}^{\prime}+\bm{U}_2\bm{\Lambda}_2{\bm{V}_2}^{\prime}$ and $\langle\bm{U}_1\bm{\Lambda}_1{\bm{V}_1}^{\prime},\bm{U}_2\bm{\Lambda}_2{\bm{V}_2}^{\prime}\rangle=0$.
	Denote $\bm{U}_i=(\bm{u}_1^{(i)},\bm{u}_2^{(i)},\ldots,\bm{u}_K^{(i)})$ and $\bm{V}_i\bm{\Lambda}_i^{\prime}=(\bm{v}_1^{(i)},\bm{v}_2^{(i)},\ldots,\bm{v}_K^{(i)})$ for $i=1$ and 2. We then fold $\bm{u}_j^{(i)}$s and $\bm{v}_j^{(i)}$s into tensors, i.e. $\bm{u}_k^{(i)}=\vectorize(\cm{B}_k^{(i)})$ and $\bm{v}_k^{(i)}=\vectorize(\cm{A}_k^{(i)})$ with $1\leq k\leq K$.
	Let $\cm{W}_1=\sum_{k=1}^{K}\cm{B}_k^{(1)}\otimes\cm{A}_k^{(1)}$ and $\cm{W}_2=\sum_{k=1}^{K}\cm{B}_k^{(2)}\otimes\cm{A}_k^{(2)}$.
	It then can be verified that $\cm{W}_1,\cm{W}_2\in\widehat{\mathcal{S}}_{K}$, $\bm{\Delta} = \cm{W}_1+\cm{W}_2$ and $\langle\cm{W}_1,\cm{W}_2\rangle=0$. Thus, we accomplish the proof of (a).
	
	\begin{figure}[ht]
		\begin{center}
			\centerline{\includegraphics[width=0.4\linewidth]{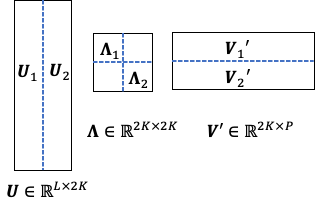}}
			\caption{Splitting matrix $T(\bm{\Delta})$ based on its singular value decomposition.}
			\label{fig:partition}
		\end{center}
		\vskip -0.3in
	\end{figure}
	
	Denote by $\mathcal{S}_{\mathrm{matrix}}\subset \mathbb{R}^{P\times L}$ the set of matrices with unit Frobenius norm and rank at most $K$. Note that $T(\mathcal{S}_K)\subset \mathcal{S}_{\mathrm{matrix}}$, while the $\varepsilon$-covering number for $\mathcal{S}_{\mathrm{matrix}}$ is 
	$$|\mathcal{S}_{\mathrm{matrix}}| \leq (9/\varepsilon)^{K(L+P+1)};$$
	see \cite{candes2011tight}. This accomplishes the proof of (b).
	
\end{proof}

\begin{lemma}(Restricted strong convexity and smoothness)\label{lemma:RSC}
	Suppose that $n\gtrsim(\kappa_U/\kappa_L)^2d_{\mathcal{M}}$. It holds that, with probability at least $1-4\exp\{-[c_{\mathrm{H}}(0.25\kappa_L/\kappa_U)^2n-9d_{\mathcal{M}}]\}$,
	\[
	\alpha_{\mathrm{RSC}}\norm{\bm{\Delta}}_\mathrm{F}^2\leq \frac{1}{n}\sum_{i=1}^{n}(\langle\cm{Z}^i,\bm{\Delta}\rangle)^2 \leq\alpha_{\mathrm{RSM}}\norm{\bm{\Delta}}_\mathrm{F}^2,
	\]
	for all $\bm{\Delta}\in \widehat{\mathcal{S}}_{2K}$,
	where $\widehat{\mathcal{S}}_{2K}$ is defined in Lemma \ref{lemma:partition}, $c_{\mathrm{H}}$ is a positive constant from the Hanson-Wright inequality, $\alpha_{\mathrm{RSC}} =\kappa_L/2$ and $\alpha_{\mathrm{RSM}} =3\kappa_U/2$.	
\end{lemma}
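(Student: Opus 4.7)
The plan is to combine pointwise Hanson--Wright concentration for the quadratic form $\|\bm{\Delta}\|_n^2$ at a fixed $\bm{\Delta}$ with a covering argument on $\mathcal{S}_{2K}$, and then to use Lemma B.1(a) to transfer the net-level bound to a uniform bound over the whole unit sphere inside $\widehat{\mathcal{S}}_{2K}$.

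For a fixed $\bm{\Delta}\in\mathcal{S}_{2K}$ with $\|\bm{\Delta}\|_{\mathrm{F}}=1$, I would first write
\[
\frac{1}{n}\sum_{i=1}^{n}\langle\cm{Z}^i,\bm{\Delta}\rangle^2=\frac{1}{n}\bar{\bm{z}}^{\prime}\bigl[\bm{I}_n\otimes \vectorize(\bm{\Delta})\vectorize(\bm{\Delta})^{\prime}\bigr]\bar{\bm{z}},
\]
where $\bar{\bm{z}}=(\bm{I}_n\otimes\bm{U}_G^{\prime})\bar{\bm{x}}$ is Gaussian with covariance bounded above by $C_xC_u\bm{I}$ by Assumption 1(i) and (iii); its expectation lies in $[\kappa_L,\kappa_U]$ for constants $\kappa_L,\kappa_U$ determined by $c_x,C_x,c_u,C_u$. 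The matrix in brackets has operator norm $O(1)$ and Frobenius norm $O(\sqrt{n})$, so the Hanson--Wright inequality gives
\[
\mathbb{P}\bigl(\bigl|\|\bm{\Delta}\|_n^2-\mathbb{E}\|\bm{\Delta}\|_n^2\bigr|>\tfrac{\kappa_L}{4}\bigr)\leq 2\exp\bigl\{-c_{\mathrm{H}}n(\kappa_L/(4\kappa_U))^2\bigr\}.
\]

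I would then discretize. By Lemma B.1(b), there is an $\varepsilon$-net $\bar{\mathcal{S}}_{2K}\subset\mathcal{S}_{2K}$ of log-cardinality at most $2K(L+P+1)\log(9/\varepsilon)=2d_{\mathcal{M}}\log(9/\varepsilon)$. Choosing a sufficiently small absolute constant $\varepsilon$ and taking a union bound over the net yields, with probability at least $1-2\exp\{9d_{\mathcal{M}}-c_{\mathrm{H}}(\kappa_L/(4\kappa_U))^2n\}$, the uniform bound $\bigl|\|\bar{\bm{\Delta}}_j\|_n^2-\mathbb{E}\|\bar{\bm{\Delta}}_j\|_n^2\bigr|\leq\kappa_L/4$ for every $\bar{\bm{\Delta}}_j\in\bar{\mathcal{S}}_{2K}$.

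The main obstacle will be extending this net-level control to all of $\mathcal{S}_{2K}$, since $\widehat{\mathcal{S}}_{2K}$ is not a linear subspace: for a generic $\bm{\Delta}\in\mathcal{S}_{2K}$ with closest net point $\bar{\bm{\Delta}}_j$, the residual $\bm{\Delta}-\bar{\bm{\Delta}}_j$ lies a priori only in the larger cone $\widehat{\mathcal{S}}_{4K}$, which the net does not cover. Lemma B.1(a) is the key tool: it produces an orthogonal split $\bm{\Delta}-\bar{\bm{\Delta}}_j=\bm{\delta}_1+\bm{\delta}_2$ with $\bm{\delta}_1,\bm{\delta}_2\in\widehat{\mathcal{S}}_{2K}$ and $\|\bm{\delta}_1\|_{\mathrm{F}}^2+\|\bm{\delta}_2\|_{\mathrm{F}}^2\leq\varepsilon^2$. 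Setting $\alpha:=\sup_{\bm{\Delta}\in\mathcal{S}_{2K}}\|\bm{\Delta}\|_n^2$, bilinear expansion of $\|\bar{\bm{\Delta}}_j+\bm{\delta}_1+\bm{\delta}_2\|_n^2$, the scale-invariant bound $\|\bm{\delta}_i\|_n^2\leq\alpha\,\|\bm{\delta}_i\|_{\mathrm{F}}^2$ (valid because $\bm{\delta}_i/\|\bm{\delta}_i\|_{\mathrm{F}}\in\mathcal{S}_{2K}$), and Cauchy--Schwarz on the cross terms together yield a self-bounding inequality of the form $\alpha\leq(1+\eta)\max_j\|\bar{\bm{\Delta}}_j\|_n^2+C(\eta)\varepsilon^2\alpha$. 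Solving for $\alpha$ with $\eta$ and $\varepsilon$ taken small enough sharpens the $\kappa_L/4$ tolerance on the net into the advertised tolerances $\kappa_L/2$ (lower) and $\kappa_U/2$ (upper) on the full unit sphere, yielding $\alpha_{\mathrm{RSC}}=\kappa_L/2$ and $\alpha_{\mathrm{RSM}}=3\kappa_U/2$. The factor of $4$ in the final probability comes from combining the two tails of the pointwise Hanson--Wright estimate with the two separate self-bounding arguments needed for the RSC and RSM sides, and the sample-complexity hypothesis $n\gtrsim(\kappa_U/\kappa_L)^2 d_{\mathcal{M}}$ ensures that the exponent $c_{\mathrm{H}}(\kappa_L/(4\kappa_U))^2 n-9d_{\mathcal{M}}$ is positive, completing the plan.
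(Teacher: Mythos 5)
Your proposal follows essentially the same route as the paper's proof: pointwise Hanson--Wright concentration for the Gaussian quadratic form (with the same trace, operator-norm, and Frobenius-norm bounds yielding $\kappa_L\leq\mathbb{E}\|\bm{\Delta}\|_n^2\leq\kappa_U$ and the $\exp\{-c_{\mathrm{H}}n\min(t/\kappa_U,(t/\kappa_U)^2)\}$ tail), a union bound over the $\varepsilon$-net of log-cardinality $2d_{\mathcal{M}}\log(9/\varepsilon)$ from the partition/covering lemma, and the orthogonal split of the residual $\bm{\Delta}-\bar{\bm{\Delta}}_j$ into two elements of $\widehat{\mathcal{S}}_{2K}$ via part (a) of that lemma to close a self-bounding inequality for the supremum (the paper absorbs the cross terms into a $(1-5\varepsilon)^{-1}$ factor on the centered form, whereas you use Cauchy--Schwarz in the empirical inner product, but these are interchangeable). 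The constants, probability bound, and the two-sided treatment of RSC and RSM all come out the same way, so the proposal is correct and not materially different from the paper's argument.
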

\begin{proof}
	It is sufficient to show that $\sup_{\bm{\Delta}\in \mathcal{S}_{2K}}\norm{\bm{\Delta}}_n^2 \leq\alpha_{\mathrm{RSM}}$ and $\inf_{\bm{\Delta}\in \mathcal{S}_{2K}}\norm{\bm{\Delta}}_n^2 \geq\alpha_{\mathrm{RSC}}$ hold with a high probability, where $\norm{\bm{\Delta}}_n^2 := {n}^{-1}\sum_{i=1}^{n}(\langle\cm{Z}^i,\bm{\Delta}\rangle)^2$ is the empirical norm, and $\mathcal{S}_{2K}$ is defined in Lemma \ref{lemma:partition}.
	Without confusion, in this proof, we will also use the notation of $\bm{\Delta}$ for its vectorized version, $\vectorize(\bm{\Delta})$.
	Note that
	\begin{equation}
		\begin{split}\label{eq:quadraticform}
			\norm{\bm{\Delta}}_n^2 
			&=\frac{1}{n}\sum_{i=1}^{n}(\langle\cm{Z}^i,\bm{\Delta}\rangle)^2
			= \frac{1}{n}\sum_{i=1}^{n}{\bm{z}^{i\prime}}\bm{\Delta}{\bm{\Delta}}^\prime\bm{z}^i
			\\
			&=\frac{1}{n}\sum_{i=1}^{n}{\bm{x}^{i\prime}}\bm{U}_G\bm{\Delta}{\bm{\Delta}^\prime}\bm{U}_G^\prime\bm{x}^i\\
			&=\frac{1}{n}{\bar{\bm{x}}^\prime}\Big\{\bm{I}_n\otimes(\bm{U}_G\bm{\Delta}{\bm{\Delta}^\prime}\bm{U}^\prime_G)\Big\}\bar{\bm{x}} 
			={\bar{\bm{w}}}^\prime \bm{Q} \bar{\bm{w}},
		\end{split}
	\end{equation}
	where $\bm{Q} =n^{-1}\bm{\Sigma}^{\frac{1}{2}}[\bm{I}_n\otimes(\bm{U}_G\bm{\Delta}{\bm{\Delta}^\prime}\bm{U}^\prime_G)]\bm{\Sigma}^{\frac{1}{2}}$, $\bar{\bm{w}}\in\mathbb{R}^{nD}$ is a random vector with $i.i.d.$ standard normal entries, $\bm{z}^i = \bm{U}_G^{\prime}\bm{x}^i$, $\bm{x}^i = \vectorize(\cm{X}^{i})$, $\bar{\bm{x}} = (\bm{x}^{1\prime},\bm{x}^{2\prime},\ldots,\bm{x}^{n\prime})^{\prime}$, and $\bm{\Sigma}=\mathbb{E}(\bar{\bm{x}}\bar{\bm{x}}^{\prime})\in\mathbb{R}^{nD\times nD}$.
	
	Denote $\bm{w}_G = \bm{U}_G\bm{\Delta}\in\mathbb{R}^{D}$ and $\mathcal{S}^{n-1} = \{\bm{v}\in\mathbb{R}^{n}:\norm{\bm{v}}_2 = 1\}$. Let $\lambda_{\mathrm{max}}(\bm{\Sigma}_{ii})$ be the maximum eigenvalue of $\bm{\Sigma}_{ii} =\mathbb{E}(\vectorize{(\cm{X}^i)}\vectorize{(\cm{X}^i)}^\prime)$ for $1\leq i\leq n$, and it holds that $\lambda_{\mathrm{max}}(\bm{\Sigma}_{ii})\leq C_x$ for all $1\leq i\leq n$. 
	For matrix $\bm{Q}$, we have
	\begin{equation}
		\label{eq: trQ}
		\begin{split}
			\trace(\bm{Q}) &=\frac{1}{n}\trace\left(\bm{\Sigma}^{\frac{1}{2}}[\bm{I}_n\otimes\bm{w}_G][\bm{I}_n\otimes\bm{w}_G^\prime]\bm{\Sigma}^{\frac{1}{2}}\right) \\
			&=\frac{1}{n}\trace\left([\bm{I}_n\otimes\bm{w}_G^\prime]\bm{\Sigma}[\bm{I}_n\otimes\bm{w}_G]\right)
			= \frac{1}{n}\sum_{i=1}^n(\bm{w}_G^\prime\bm{\Sigma}_{ii}\bm{w}_G)\\
			&\leq\frac{1}{n}\sum_{i=1}^n
			\left(\sup_{\bm{w}_G\in \mathcal{S}^{PL-1}}\frac{\bm{w}_G^\prime\bm{\Sigma}_{ii}\bm{w}_G}{\bm{w}_G^\prime\bm{w}_G}\right)
			\left(\sup_{\bm{\Delta}\in \mathcal{S}^{PL-1}}\bm{\Delta}^\prime\bm{U}_G^\prime\bm{U}_G\bm{\Delta}\right)\\
			&\leq \frac{1}{n}\sum_{i=1}^n\lambda_{\mathrm{max}}(\bm{\Sigma}_{ij})C_u
			\leq \kappa_U,
		\end{split}
	\end{equation}
	and similarly, we can show that $\trace(\bm{Q})\geq \kappa_L$,
	where $\kappa_L=c_xc_u$, $\kappa_U=C_xC_u$ and $\kappa_L\leq \kappa_U$.  
	Thus,
	\begin{equation}\label{eq:mean1}
		\kappa_L\leq \mathbb{E}\norm{\bm{\Delta}}_n^2 =\trace(\bm{Q}) \leq \kappa_U.
	\end{equation}
	Moreover, denote $\bm{Q}_1 = n^{-\frac{1}{2}}\bm{\Sigma}^{\frac{1}{2}}[\bm{I}_n\otimes\bm{w}_G]$ and note that $\bm{Q} = \bm{Q}_1\bm{Q}_1^\prime$. To bound the operator norm of $\bm{Q}$, we have
	\begin{equation}
		\label{eq: Q-op}
		\begin{split}
			\norm{\bm{Q}}_{\mathrm{op}} 
			&\leq \norm{\bm{Q}_1}_{\mathrm{op}}^2
			= \sup_{\bm{u}\in\mathbb{S}^{n-1}}\bm{u}^\prime \bm{Q}_1^\prime \bm{Q}_1\bm{u}\\
			&= \frac{1}{n}\sup_{\bm{u}\in\mathbb{S}^{n-1}}\bm{u}^\prime[\bm{I}_n\otimes\bm{w}_G^\prime]\bm{\Sigma}[\bm{I}_n\otimes\bm{w}_G]\bm{u}\\
			&\leq \frac{1}{n}
			\left(\sup_{\bm{u}\in\mathbb{S}^{n-1}}\frac{\bm{u}^\prime[\bm{I}_n\otimes\bm{w}_G^\prime]\bm{\Sigma}[\bm{I}_n\otimes\bm{w}_G]\bm{u}}{\bm{u}^\prime[\bm{I}_n\otimes\bm{w}_G^\prime][\bm{I}_n\otimes\bm{w}_G]\bm{u}}\right)\\
			&\hspace{5mm}\cdot\left(\sup_{\bm{u}\in\mathbb{S}^{n-1}}\bm{u}^\prime[\bm{I}_n\otimes\bm{w}_G^\prime\bm{w}_G]\bm{u}\right)\\
			&\leq \frac{C_x}{n}\norm{\bm{w}_G}_2^2
			\leq \frac{C_x}{n}\norm{\bm{U}_G}_\mathrm{op}^2\norm{\bm{\Delta}}_2^2
			\leq \frac{\kappa_U}{n}.
		\end{split}
	\end{equation}
	Finally we can use (\ref{eq: trQ}) and (\ref{eq: Q-op}) to bound the Frobenius norm of $\bm{Q}$. 
	By some algebra, for any square matrices $\bm{A}, \bm{B}\in\mathbb{R}^{n\times n}$, $\norm{\bm{AB}}_{\mathrm{F}}^2\leq \norm{\bm{A}}_\mathrm{op}^2\norm{\bm{B}}_\mathrm{F}^2$ holds. 
	Hence,
	\begin{equation*}
		\begin{split}
			\norm{\bm{Q}}_{\mathrm{F}}^2 
			= \norm{\bm{Q}_1\bm{Q}_1^\prime}_{\mathrm{F}}^2 
			\leq \norm{\bm{Q}_1}_\mathrm{op}^2\norm{\bm{Q}_1}_\mathrm{F}^2
			= \norm{\bm{Q}_1}_\mathrm{op}^2\trace({\bm{Q}})
			\leq\frac{\kappa_U^2}{n}.
		\end{split}
	\end{equation*}
	This, together with (\ref{eq:quadraticform}), (\ref{eq: Q-op}) and the Hanson-Wright inequality \citet[Chapter 6]{Vershynin2018}, leads to
	\begin{equation}
		\begin{split}\label{eq:lem3-HW}
			\mathbb{P}\left\{\left|\norm{\bm{\Delta}}_n^2 -\mathbb{E}\norm{\bm{\Delta}}_n^2\right|\geq t\right\}
			&\leq 2\exp\left\{-c_{\mathrm{H}}\min\left(\frac{t}{\norm{ \bm{Q}}}_{\mathrm{op}},\frac{t^2}{\norm{ \bm{Q}}_{\mathrm{F}}^2}\right)\right\}\\
			&\leq 2\exp\left\{-c_{\mathrm{H}}n\min\left({t}/{\kappa_U},({t}/{\kappa_U})^2\right)\right\},
		\end{split}
	\end{equation}
	where $c_{\mathrm{H}}$ is a positive constant. 
	
	On the other hand,
	\[
	\norm{\bm{\Delta}}_n^2 -\mathbb{E}\norm{\bm{\Delta}}_n^2 = \frac{1}{n}\sum_{i=1}^{n}\bm{\Delta}^\prime\bm{z}^i{\bm{z}^{i\prime}} \bm{\Delta} -\mathbb{E}(\bm{\Delta}^\prime\bm{z}^i{\bm{z}^{i\prime}}\bm{\Delta}) = \bm{\Delta}^{\prime}\widehat{\bm{\Gamma} } \bm{\Delta},
	\]
	where $\widehat{\bm{\Gamma} }=n^{-1}\sum_{i=1}^{n}[\bm{z}^i{\bm{z}^{i\prime}} -E(\bm{z}^i{\bm{z}^{i\prime}})]$ is a symmetric matrix.
	Consider a $\varepsilon$-net $\bar{\mathcal{S}}_{2K}$, with the cardinality of $\mathcal{N}(2K,\varepsilon)$, for the set $\mathcal{S}_{2K}$. For any $\bm{\Delta}\in\mathcal{S}_{2K}$, there exists a $\bar{\bm{\Delta}}_j\in\bar{\mathcal{S}}_{2K}$ such that $\|\bm{\Delta}-\bar{\bm{\Delta}}_j\|_{\mathrm{F}}\leq \varepsilon$. 
	Note that $\bm{\Delta}-\bar{\bm{\Delta}}_j\in\widehat{\mathcal{S}}_{4K}$ and, from Lemma \ref{lemma:partition}, we can further find  $\bm{\Delta}_1,\bm{\Delta}_2\in\widehat{\mathcal{S}}_{2K}$ such that $\langle\bm{\Delta}_1,\bm{\Delta}_2\rangle=0$ and $\bm{\Delta}-\bar{\bm{\Delta}}_j =\bm{\Delta}_1+\bm{\Delta}_2$, and it then holds that
	$\|\bm{\Delta}_1\|_{\mathrm{F}}+\|\bm{\Delta}_2\|_{\mathrm{F}}\leq \sqrt{2}\|\bm{\Delta}-\bar{\bm{\Delta}}_j\|_{\mathrm{F}}\leq \sqrt{2}\varepsilon$.
	Moreover, for a general real symmetric matrix $\bm{A}\in \mathbb{R}^{d\times d}$, $\bm{u}\in\mathbb{R}^d$ and $\bm{v}\in\mathbb{R}^d$, $\sup_{\|\bm{u}\|_2=\|\bm{v}\|_2=1}\bm{u}^{\prime}\bm{A}\bm{v}=\sup_{\|\bm{u}\|_2=1}\bm{u}^{\prime}\bm{A}\bm{u}$. As a result,
	\begin{align*}
		\bm{\Delta}^{\prime}\widehat{\bm{\Gamma} } \bm{\Delta} &=\bar{\bm{\Delta}}_j^{\prime}\widehat{\bm{\Gamma}}\bar{\bm{\Delta}}_j + (\bm{\Delta}_1+\bm{\Delta}_2)^{\prime}\widehat{\bm{\Gamma}}(\bm{\Delta}_1+\bm{\Delta}_2+2\bar{\bm{\Delta}}_j)\\
		&\leq \max_{1\leq j\leq \mathcal{N}(2K,\varepsilon)}\bar{\bm{\Delta}}_j^{\prime}\widehat{\bm{\Gamma}}\bar{\bm{\Delta}}_j + 5\varepsilon \sup_{\bm{\Delta}\in \mathcal{S}_{2K}}\bm{\Delta}^{\prime}\widehat{\bm{\Gamma} } \bm{\Delta},
	\end{align*}
	where $(\|\bm{\Delta}_1\|_{\mathrm{F}}+\|\bm{\Delta}_2\|_{\mathrm{F}})^2+2(\|\bm{\Delta}_1\|_{\mathrm{F}}+\|\bm{\Delta}_2\|_{\mathrm{F}})\|\bar{\bm{\Delta}}_j\|_{\mathrm{F}}\leq 2(\varepsilon+\sqrt{2})\varepsilon\leq 5\varepsilon$
	as $\varepsilon \leq 1$, and this leads to
	\begin{equation}\label{eq:lem3-sup}
		\sup_{\bm{\Delta}\in \mathcal{S}_{2K}}\bm{\Delta}^{\prime}\widehat{\bm{\Gamma} } \bm{\Delta} \leq (1-5\varepsilon)^{-1} \max_{1\leq j\leq \mathcal{N}(2K,\varepsilon)}\bar{\bm{\Delta}}_j^{\prime}\widehat{\bm{\Gamma}}\bar{\bm{\Delta}}_j.
	\end{equation}
	Note that, from Lemma \ref{lemma:partition}(b), $\log\mathcal{N}(2K,\varepsilon)\leq 2d_{\mathcal{M}}\log(9/\varepsilon)$, where $d_{\mathcal{M}}=K(L+P+1)$. Let $\varepsilon=1/10$, and then $2\log(9/\varepsilon)<9$. Combining (\ref{eq:lem3-HW}) and (\ref{eq:lem3-sup}) and letting $t = \kappa_L/2$, we have
	\begin{align*}
		&\mathbb{P}\left\{\sup_{\bm{\Delta}\in\mathcal{S}_{2K}}
		\norm{\bm{\Delta}}_n^2 -\mathbb{E}\norm{\bm{\Delta}}_n^2
		\geq\frac{\kappa_L}{2}\right\}\leq2\exp\left\{-c_{\mathrm{H}}n\left(\frac{\kappa_L}{4\kappa_U}\right)^2+9d_{\mathcal{M}}\right\},
	\end{align*}
	which, together with (\ref{eq:mean1}) and the fact that $\kappa_L\leq \kappa_U$, implies that
	\begin{equation}
		\label{eq:lem3-bound}
		\begin{split}
			&\mathbb{P}\left\{\sup_{\bm{\Delta}\in\mathcal{S}_{2K}}
			\norm{\bm{\Delta}}_n^2 
			\geq\alpha_{\mathrm{RSM}}\right\}\leq2\exp\left\{-c_{\mathrm{H}}n\left(\frac{\kappa_L}{4\kappa_U}\right)^2+9d_{\mathcal{M}}\right\},
		\end{split}
	\end{equation}
	where $\alpha_{\mathrm{RSM}}=1.5\kappa_U\geq \kappa_U+\kappa_L/2$.
	
	By a method similar to (\ref{eq:lem3-sup}), we can also show that
	\begin{align*}
		&\sup_{\bm{\Delta}\in \mathcal{S}_{2K}} \mathbb{E}\norm{\bm{\Delta}}_n^2 -\norm{\bm{\Delta}}_n^2\leq (1-5\varepsilon)^{-1} \max_{1\leq j\leq \mathcal{N}(2K,\varepsilon)} \mathbb{E}\norm{\bar{\bm{\Delta}}_j}_n^2 -\norm{\bar{\bm{\Delta}}_j}_n^2,
	\end{align*}
	which, together with (\ref{eq:lem3-HW}), leads to
	\begin{align*}
		&\mathbb{P}\left\{\sup_{\bm{\Delta}\in\mathcal{S}_{2K}}
		\mathbb{E}\norm{\bm{\Delta}}_n^2 -\norm{\bm{\Delta}}_n^2 
		\geq\frac{\kappa_L}{2}\right\}\leq 2\exp\left\{-c_{\mathrm{H}}n\left(\frac{\kappa_L}{4\kappa_U}\right)^2+9d_{\mathcal{M}}\right\},
	\end{align*}
	where $t = \kappa_L/2$ and $\varepsilon=1/10$.
	As a result,
	\begin{equation*}
		\mathbb{P}\left\{\inf_{\bm{\Delta}\in\mathcal{S}_{2K}}
		\norm{\bm{\Delta}}_n^2 
		\leq \alpha_{\mathrm{RSC}}\right\}
		\leq 2\exp\left\{-c_{\mathrm{H}}n\left(\frac{\kappa_L}{4\kappa_U}\right)^2+9d_{\mathcal{M}}\right\},
	\end{equation*}
	where $\alpha_{\mathrm{RSC}}=\kappa_L-\kappa_L/2$.
	This, together with (\ref{eq:lem3-bound}), accomplishes the proof.
\end{proof}

\begin{lemma}(Concentration bound for martingale)\label{lemma:concentration}
	Let $\{\xi^i,1\leq i\leq n\}$ be independent $\sigma^2$-sub-Gaussian random variables with mean zero, and $\{z^i,1\leq i\leq n\}$ is another sequence of random variables. Suppose that $\xi^i$ is independent of $\{z^i,z^{i-1},\ldots,z^1\}$ for all $1\leq i\leq n$. It then holds that, for any real numbers $\alpha, \beta>0$,
	$$\mathbb{P}\left[\left\{\frac{1}{n}\sum_{i=1}^{n}\xi^iz^i\geq \alpha\right\}\cap\left\{\frac{1}{n}\sum_{i=1}^{n}(z^{i})^2\leq\beta\right\}\right]\leq \exp\left(-\frac{n\alpha^2}{2\sigma^2\beta}\right).$$
\end{lemma}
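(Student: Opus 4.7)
The plan is to establish the bound via a Chernoff/Cramér-type argument that conditions on the $z^i$'s and exploits the sub-Gaussianity of $\xi^i$. Writing $S_n = \sum_{i=1}^n \xi^i z^i$ and letting $E = \{n^{-1}\sum_{i=1}^n (z^i)^2 \leq \beta\}$, Markov's inequality applied to the moment generating function gives, for any $\lambda > 0$,
\[
\mathbb{P}\bigl[\{S_n \geq n\alpha\} \cap E\bigr] \leq e^{-\lambda n \alpha}\, \mathbb{E}\bigl[\mathbf{1}_E \exp(\lambda S_n)\bigr].
\]
The problem is thereby reduced to controlling $\mathbb{E}[\mathbf{1}_E \exp(\lambda S_n)]$, and the indicator $\mathbf{1}_E$ is essential since it converts the random quadratic term $\sum (z^i)^2$ into the deterministic bound $n\beta$.

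To bound this expectation, I would introduce the natural filtration $\mathcal{F}_i = \sigma(\xi^1,\ldots,\xi^{i-1},z^1,\ldots,z^i)$ and peel off factors one at a time using the tower property. The hypotheses that $\{\xi^i\}$ are mutually independent and each $\xi^i$ is independent of $(z^1,\ldots,z^i)$ imply, in the intended regression context, that $\xi^i$ is independent of $\mathcal{F}_i$; consequently, conditional on $\mathcal{F}_i$, the variable $\xi^i$ is still $\sigma^2$-sub-Gaussian with mean zero, and since $z^i$ is $\mathcal{F}_i$-measurable,
\[
\mathbb{E}\bigl[e^{\lambda \xi^i z^i} \,\bigm|\, \mathcal{F}_i\bigr] \leq \exp\!\left(\frac{\lambda^2 \sigma^2 (z^i)^2}{2}\right).
\]
Since $\mathbf{1}_E$ is a function of $z^1,\ldots,z^n$ and hence $\mathcal{F}_n$-measurable, iterating this conditional bound from $i = n$ down to $i = 1$ yields
\[
\mathbb{E}\bigl[\mathbf{1}_E \exp(\lambda S_n)\bigr] \leq \mathbb{E}\!\left[\mathbf{1}_E \exp\!\left(\frac{\lambda^2 \sigma^2}{2}\sum_{i=1}^n (z^i)^2\right)\right] \leq \exp\!\left(\frac{\lambda^2 \sigma^2 n \beta}{2}\right),
\]
where the last step uses the definition of $E$ to deterministically cap the quadratic sum.

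Combining the two displays and optimizing the exponent in $\lambda$ by setting $\lambda = \alpha/(\sigma^2 \beta)$ produces the claimed bound $\exp\{-n\alpha^2/(2\sigma^2 \beta)\}$. The main delicate point is the conditioning argument: although the hypotheses only state pairwise independence of $\xi^i$ with $(z^1,\ldots,z^i)$ and mutual independence of $\{\xi^i\}$, one must argue that $\xi^i$ is actually independent of the enlarged $\sigma$-algebra $\mathcal{F}_i$ so that the conditional sub-Gaussian MGF bound applies layer by layer. This is the natural interpretation in the regression setting where $z^i$ is a measurable function of the inputs up to time $i$ and $\xi^i$ is exogenous noise, and it is the only place where care is needed; the remaining steps are a routine Chernoff optimization.
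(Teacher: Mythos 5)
Your Chernoff setup and the final optimization $\lambda=\alpha/(\sigma^2\beta)$ are fine, and the independence caveat you flag is a legitimate but minor interpretive point (the paper itself gives no proof and simply invokes Lemma 4.2 of Simchowitz et al., 2018). The genuine gap is in the iteration step that claims $\mathbb{E}[\mathbf{1}_E\exp(\lambda S_n)]\le\mathbb{E}[\mathbf{1}_E\exp(\tfrac{\lambda^2\sigma^2}{2}\sum_i(z^i)^2)]$. The first peel (conditioning on $\mathcal{F}_n$) is valid because $\mathbf{1}_E$ and $S_{n-1}$ are $\mathcal{F}_n$-measurable. But at the next step the integrand is $\mathbf{1}_E\exp(\lambda S_{n-2})\exp(\lambda\xi^{n-1}z^{n-1})\exp(\tfrac{\lambda^2\sigma^2}{2}(z^n)^2)$, and both $\mathbf{1}_E$ and the accumulated factor depend on $z^n$, which is not $\mathcal{F}_{n-1}$-measurable. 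To pull out $\mathbb{E}[e^{\lambda\xi^{n-1}z^{n-1}}\mid\mathcal{F}_{n-1}]$ you would need $\xi^{n-1}$ to be independent of the \emph{future} regressors $z^n,\ldots$ as well, which is neither assumed nor generally true in the martingale setting the lemma is written for: the hypothesis only gives $\xi^i\perp(z^1,\ldots,z^i)$, precisely so that $z^{i+1}$ may depend on $\xi^i$ (as with temporally dependent or adaptively generated inputs). As written, your argument only establishes the bound under the stronger assumption that the entire sequence $\{z^i\}$ is independent of the entire sequence $\{\xi^i\}$, in which case one can condition on $(z^1,\ldots,z^n)$ in a single shot; it does not prove the statement as stated.

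The standard fix --- and the argument behind the cited Lemma 4.2 --- is to move the quadratic compensator inside the exponential rather than keeping $\mathbf{1}_E$ inside the expectation. Set $M_n=\exp(\lambda S_n-\tfrac{\lambda^2\sigma^2}{2}\sum_{i=1}^n(z^i)^2)$ and show $\mathbb{E}[M_n]\le 1$ by backward peeling: after conditioning on $\mathcal{F}_k$, the remaining factor $M_{k-1}$ \emph{is} $\mathcal{F}_k$-measurable, and $\mathbb{E}[\exp(\lambda\xi^k z^k-\tfrac{\lambda^2\sigma^2}{2}(z^k)^2)\mid\mathcal{F}_k]\le 1$ by conditional sub-Gaussianity, so the recursion closes. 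On the event $\{S_n\ge n\alpha\}\cap E$ one has $M_n\ge\exp(\lambda n\alpha-\tfrac{\lambda^2\sigma^2}{2}n\beta)$, hence Markov's inequality gives $\mathbb{P}[\{S_n\ge n\alpha\}\cap E]\le\exp(-\lambda n\alpha+\tfrac{\lambda^2\sigma^2 n\beta}{2})$, and your choice $\lambda=\alpha/(\sigma^2\beta)$ yields $\exp\{-n\alpha^2/(2\sigma^2\beta)\}$. In short: keep the event $E$ on the probability side rather than inside the moment generating function, and peel the self-normalized supermartingale.
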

\begin{proof}
	We can prove the lemma by a method similar to Lemma 4.2 in \cite{simchowitz2018learning}.
\end{proof}

\newpage

\renewcommand{\thesection}{Appendix D}
\renewcommand{\thelemma}{D.\arabic{lemma}}
\renewcommand{\thetheorem}{D.\arabic{theorem}}
\renewcommand{\thecorollary}{D.\arabic{corollary}}
\renewcommand{\theassumption}{D.\arabic{assumption}}
\renewcommand{\thesubsection}{D.\arabic{subsection}}
\renewcommand{\theequation}{D.\arabic{equation}}
\setcounter{lemma}{0}
\setcounter{equation}{0}

\section{Classification Problems} \label{app:classification}

Starting from Section 2.2, we know that, for each input tensor $\cm{X}$, the intermediate scalar output after convolution and pooling has the following form
\begin{align*}
	\text{output} &= \langle\cm{X},\sum_{k=1}^{K}(\cm{B}_k\otimes\cm{A}_k)\times_1\bm{U}^{(1)}_{\mathcal{F}}\times_2\bm{U}^{(2)}_{\mathcal{F}}\times_3 \cdots \times_N\bm{U}^{(N)}_{\mathcal{F}}\rangle = \langle\cm{Z}, \cm{W}\rangle,
\end{align*}
where $\cm{W}=\sum_{k=1}^{K}\cm{B}_k\otimes\cm{A}_k$ and ${\cm{Z}} = \cm{X}\times_1\bm{U}^{(1)\prime}_\mathcal{F}\times_2\bm{U}^{(2)\prime}_\mathcal{F}\times_3\cdots\times_N\bm{U}^{(N)\prime}_\mathcal{F}$, $\cm{A}_k\in\mathbb{R}^{l_1\times l_2\times\cdots\times l_N}$ is the $k$th kernel and 
$\cm{B}_k\in\mathbb{R}^{p_1\times p_2\times\cdots\times p_N}$ is the corresponding $k$th fully-connected weight tensor.

Consider a binary classification problem. We have the binary label output $y\in\{0,1\}$ with
\begin{align*}
	&p(y|\cm{Z}) = \left(\frac{1}{1+\exp(\langle\cm{Z},\cm{W}\rangle)}\right)^{1-y}\left(\frac{\exp(\langle\cm{Z},\cm{W}\rangle)}{1+\exp(\langle\cm{Z},\cm{W}\rangle)}\right)^{y} \\
	&\hspace{5mm}= \exp\left\{y\langle\cm{Z},\cm{W}\rangle - \log\left[1+\exp(\langle\cm{Z},\cm{W}\rangle)\right]\right\}.
\end{align*}
Suppose we have samples $\{\cm{Z}^i,y^i\}_{i=1}^n$, we use the negative log-likelihood function as our loss function. It is given, up to a scaling of $n^{-1}$ by
\begin{equation} \label{eq:loss}
	\mathcal{L}_n(\cm{W}) =-\frac{1}{n}\sum_{i=1}^{n}y^i\langle\cm{Z}^i,\cm{W}\rangle + \frac{1}{n}\sum_{i=1}^{n}\phi(\langle\cm{Z}^i,\cm{W}\rangle),
\end{equation}
where $\phi(z) = \log(1+e^z)$ and its gradient and Hessian matrix is 
\begin{equation}
	\begin{split}
		&\nabla\mathcal{L}_n(\cm{W}) = \frac{\partial\mathcal{L}_n(\cm{W})}{\partial\vectorize(\cm{W})}= -\frac{1}{n}\sum_{i=1}^{n}y^i\vectorize(\cm{Z}^i)+\frac{1}{n}\sum_{i=1}^{n}\phi^\prime(\langle\cm{Z}^i,\cm{W}\rangle)\vectorize(\cm{Z}^i) 
	\end{split}\label{eq:grad}
\end{equation}
\begin{equation}
	\begin{split}
		&\bm{H}_n(\cm{W}) = \frac{\partial^2\mathcal{L}_n(\cm{W})}{\partial^2\vectorize(\cm{W})} = 
		\frac{1}{n}\sum_{i=1}^{n}\phi^{\prime\prime}(\langle\cm{Z}^i,\cm{W}\rangle)\vectorize(\cm{Z}^i)\vectorize(\cm{Z}^i)^\prime, 
	\end{split}\label{eq:hessian}
\end{equation}
with $\phi^\prime(z) = 1/(1+e^{-z})\in(0,1)$ and $\phi^{\prime\prime}(z) = e^z/(1+e^z)^2 = 1/(e^{-z} + 2 + e^z)\in(0,0.25)$ [because $e^{-z}+ e^z\geq 2$]. Since $\bm{H}_n(\cm{W})$ is a positive semi-definite matrix, our loss function in (\ref{eq:loss}) is convex. 

Suppose $\cm{\widehat{W}}$ is the minimizer to the loss function
\[
\cm{\widehat{W}} \in \argmin_{\cm{W}\in\widehat{\mathcal{S}}_K\cap\mathbb{B}(R)}\mathcal{L}_n(\cm{W}),
\]
where $\widehat{\mathcal{S}}_K=\{\sum_{k=1}^{K}\cm{B}_k\otimes\cm{A}_k: \cm{A}_k\in\mathbb{R}^{l_1\times l_2\times\cdots\times l_N} \text{ and } \cm{B}_k\in\mathbb{R}^{p_1\times p_2\times\cdots\times p_N} \}$, and $\mathbb{B}(R)$ is a Frobenius ball of some fixed radius $R$ centered at the underlying true parameter.

Similar to \citet{fan2019generalized}, we need to make two additional assumptions to guarantee the locally strong convexity condition.

\begin{assumption}[Classification] \label{assum}
	Apart from Assumption 1(ii)\&(iii), we additionally assume that
	
	(C1) $\{\vectorize(\cm{X}^i)\}_{i=1}^n$ are i.i.d gaussian vectors with mean zero and variance $\bm{\Sigma}$, where $\bm{\Sigma}\leq C_x\bm{I}$.
	
	(C2) the Hessian matrix at the underlying true parameter $\cm{W}^*$ is positive definite and there exists some $\kappa_0>\kappa_U>0$, such that $\bm{H}(\cm{W}^*) = \mathbb{E}(\bm{H}_n(\cm{W}^*))\geq \kappa_0\cdot\bm{I}$, where $\kappa_U=C_xC_u$;
	
	(C3) $\|\cm{W}^*\|_\mathrm{F} \geq \alpha\sqrt{d_{\mathcal{M}}}$ for some constant $\alpha$, where $d_{\mathcal{M}} = K(P+L+1)$.
\end{assumption}

Notice that we can relax (C1) into Assumption 1(i), but it will require more techincal details. Also, $\{\vectorize(\cm{X}^i)\}_{i=1}^n$ can be sub-gaussian random vectors instead of gaussian random vectors.

Denote $d_{\mathcal{M}} = K(P+L+1)$.
\begin{theorem}[Classification: CNN] \label{thm:classification}
	Suppose that Assumption 1(ii)\&(iii) and Assumption \ref{assum} hold and $n\gtrsim d_{\mathcal{M}}$. Then,
	\[
	\|\cm{\widehat{W}} - \cm{W}^*\|_\mathrm{F} \lesssim \frac{2\sqrt{\kappa_U}}{\kappa_1}\sqrt{\frac{d_{\mathcal{M}}}{n}},
	\]
	with probability $1-4\exp\left\{-0.25cn + 9d_{\mathcal{M}}\right\}-2\exp\left\{-c_\gamma d_{\mathcal{M}}\right\}$ where $c$ and $c_\gamma$ are some positive constants.
\end{theorem}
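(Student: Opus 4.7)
The plan is to treat this as an M-estimation problem with a convex loss, leveraging the same geometric and covering-number infrastructure already developed for Theorem 1, with the key new ingredient being a locally restricted strong convexity (RSC) guarantee for the Hessian of the logistic loss over the low-rank set $\widehat{\mathcal{S}}_{2K}$. Denote $\widehat{\bm\Delta} = \cm{\widehat{W}} - \cm{W}^*$, which lies in $\widehat{\mathcal{S}}_{2K}$ by the partitioning in Lemma~1(a). Since $\cm{\widehat{W}}$ minimizes $\mathcal{L}_n$ on $\widehat{\mathcal{S}}_K\cap\mathbb{B}(R)$ and $\cm{W}^*$ lies in the interior of this set (using Assumption (C3) to ensure the radius $R$ does not bind), a second-order Taylor expansion with mean-value remainder at some intermediate point $\tilde{\cm{W}}$ on the segment between $\cm{W}^*$ and $\cm{\widehat{W}}$ yields the basic inequality
\[
\tfrac{1}{2}\,\vectorize(\widehat{\bm\Delta})^{\prime}\bm{H}_n(\tilde{\cm{W}})\vectorize(\widehat{\bm\Delta})
\;\leq\;-\langle\nabla\mathcal{L}_n(\cm{W}^*),\widehat{\bm\Delta}\rangle
\;\leq\;\|\widehat{\bm\Delta}\|_{\mathrm{F}}\sup_{\bm\Delta\in\mathcal{S}_{2K}}|\langle\nabla\mathcal{L}_n(\cm{W}^*),\bm\Delta\rangle|.
\]

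For the gradient (linear) term, observe from \eqref{eq:grad} that $\nabla\mathcal{L}_n(\cm{W}^*)=-n^{-1}\sum_{i=1}^n\xi^i\vectorize(\cm{Z}^i)$ where $\xi^i:=y^i-\phi'(\langle\cm{Z}^i,\cm{W}^*\rangle)$ is a mean-zero martingale difference bounded in $[-1,1]$ and hence sub-Gaussian. I would then directly recycle the covering argument in the proof of Theorem~1: combine the $\varepsilon$-net for $\mathcal{S}_{2K}$ from Lemma~1(b) with $\log\mathcal{N}(2K,\varepsilon)\lesssim d_{\mathcal{M}}\log(9/\varepsilon)$, the martingale concentration bound in Lemma~3, and the event $\{\sup_{\bm\Delta\in\mathcal{S}_{2K}}\|\bm\Delta\|_n^2\leq\alpha_{\mathrm{RSM}}\}$ from Lemma~2 to conclude
\[
\sup_{\bm\Delta\in\mathcal{S}_{2K}}|\langle\nabla\mathcal{L}_n(\cm{W}^*),\bm\Delta\rangle|\;\lesssim\;\sqrt{\kappa_U\,d_{\mathcal{M}}/n}
\]
on an event of probability at least $1-4\exp\{-c_{\mathrm{H}}(\kappa_L/4\kappa_U)^2 n+9d_{\mathcal{M}}\}-\exp\{-d_{\mathcal{M}}\}$.

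The main obstacle is the quadratic-form lower bound on the Hessian, because $\phi''(z)=e^{-z}/(1+e^{-z})^2$ is \emph{not} uniformly bounded away from zero; it decays exponentially as $|z|\to\infty$. My plan is to establish a local RSC property of the form $\vectorize(\bm\Delta)^{\prime}\bm{H}_n(\cm{W})\vectorize(\bm\Delta)\geq \kappa_1\|\bm\Delta\|_{\mathrm{F}}^2$ for all $\bm\Delta\in\widehat{\mathcal{S}}_{2K}$ and all $\cm{W}$ in a Frobenius ball around $\cm{W}^*$, in two steps. First, for any fixed $\cm{W}$ on the segment, restrict attention to the sub-sample on which $|\langle\cm{Z}^i,\cm{W}\rangle|\leq M$ for some constant $M$; on this sub-sample $\phi''$ is bounded below by $c_M>0$. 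Second, because $\vectorize(\cm{X}^i)$ is Gaussian under (C1) and $\cm{W}$ varies in a Frobenius ball of radius $R$, a uniform truncation argument together with Assumption (C2), the operator bound $\|\bm U_G\|_{\mathrm{op}}\leq\sqrt{C_u}$, and a Gaussian concentration tail show that at least a constant fraction of samples lies in the good set with high probability, uniformly in $\cm{W}$. Combining this with the population-level eigenvalue lower bound $\bm{H}(\cm{W}^*)\succeq\kappa_0\bm{I}$ and a covering/Hanson--Wright argument on the low-rank set $\mathcal{S}_{2K}$ (mirroring Lemma~2 but applied to the modified Gram matrix $n^{-1}\sum_i \phi''(\langle\cm{Z}^i,\cm{W}\rangle)\vectorize(\cm{Z}^i)\vectorize(\cm{Z}^i)^{\prime}$) yields the uniform restricted eigenvalue $\kappa_1>0$ with probability at least $1-2\exp\{-c_\gamma d_{\mathcal{M}}\}$, provided $n\gtrsim d_{\mathcal{M}}$.

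Plugging the gradient bound and the Hessian lower bound back into the basic inequality gives
\[
\tfrac{\kappa_1}{2}\|\widehat{\bm\Delta}\|_{\mathrm{F}}^2\;\lesssim\;\|\widehat{\bm\Delta}\|_{\mathrm{F}}\sqrt{\kappa_U\,d_{\mathcal{M}}/n},
\]
which upon dividing by $\|\widehat{\bm\Delta}\|_{\mathrm{F}}$ produces the claimed rate $\|\widehat{\bm\Delta}\|_{\mathrm{F}}\lesssim (2\sqrt{\kappa_U}/\kappa_1)\sqrt{d_{\mathcal{M}}/n}$. A union bound over the gradient event and the RSC event delivers the stated overall probability $1-4\exp\{-0.25cn+9d_{\mathcal{M}}\}-2\exp\{-c_\gamma d_{\mathcal{M}}\}$. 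The most technical piece, and where I would spend the most care, is ensuring that the local RSC holds uniformly over both the non-convex set $\widehat{\mathcal{S}}_{2K}$ and the Frobenius ball in $\cm{W}$, since the non-linearity of $\phi''$ prevents a clean single-point concentration and forces the combined covering/truncation argument just described.
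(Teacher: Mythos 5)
Your overall skeleton matches the paper's: the basic inequality $\tfrac12\widehat{\bm\Delta}^{\prime}\bm{H}_n(\cm{\widetilde{W}})\widehat{\bm\Delta}\leq|\langle\nabla\mathcal{L}_n(\cm{W}^*),\widehat{\bm\Delta}\rangle|$ from a first-order Taylor expansion at the minimizer, a covering-number deviation bound for the gradient over $\mathcal{S}_{2K}$ (your route through Lemma 3 conditioned on the restricted-smoothness event is a harmless variant of the paper's Bernstein-type bound for the bounded, conditionally centered residual $y^i-\phi^\prime(\eta^i)$), and a local restricted strong convexity (LRSC) bound for the Hessian. The genuine gap is in the LRSC step, which you correctly identify as the crux but then resolve with a truncation that cannot work under the paper's assumptions. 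You propose to keep only the samples with $|\langle\cm{Z}^i,\cm{W}\rangle|\leq M$ so that $\phi^{\prime\prime}\geq c_M$ there, and to argue that a constant fraction of samples lands in this set. But Assumption (C3) forces $\|\cm{W}^*\|_{\mathrm{F}}\geq\alpha\sqrt{d_{\mathcal{M}}}$, so $\langle\cm{Z}^i,\cm{W}^*\rangle$ is a centered Gaussian whose standard deviation is of order $\sqrt{d_{\mathcal{M}}}$; the probability of the slab $\{|\langle\cm{Z}^i,\cm{W}\rangle|\leq M\}$ is then $O(M/\sqrt{d_{\mathcal{M}}})$, not a constant, and the surviving subsample is conditioned on a thin slab, so even its restricted eigenvalue over $\mathcal{S}_{2K}$ would need a separate, nontrivial argument. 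Your truncation and Assumption (C3) pull in opposite directions.

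The paper avoids this by never attempting a pointwise lower bound on $\phi^{\prime\prime}$. It takes the population bound $\bm{H}(\cm{W}^*)\geq\kappa_0\bm{I}$ directly from Assumption (C2), and uses (C3) for the opposite purpose: to show that the event $A=\{|\langle\cm{W}^*,\cm{Z}^i\rangle|>\tau\sup_{\bm\Delta\in\mathcal{S}_{2K}}|\langle\bm\Delta,\bm{z}^i\rangle|\}$, in which the linear predictor at the truth is \emph{large} relative to every direction in the restricted set, holds with probability $p_0$ close to one (Lemma D.5). Restricting the Hessian to $A$ costs only a Cauchy--Schwarz correction of size $\tfrac{\sqrt{3\kappa_U}}{4}(1-p_0)$ to $\kappa_0$, and on $A$ the passage from $\cm{W}^*$ to any $\cm{W}$ with $\|\cm{W}-\cm{W}^*\|_{\mathrm{F}}\leq R<\tau$ is controlled through the third derivative via $|\phi^{\prime\prime\prime}(z)|\leq 1/|z|$, yielding a perturbation of size $R/(\tau-R)$ that is absorbed by taking $R$ small (Lemma D.3). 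To repair your argument you should replace the bounded-slab truncation with this large-signal truncation, let (C2) carry the population-level positive definiteness, and control the movement of the Hessian along the segment through $\phi^{\prime\prime\prime}$ rather than through a lower bound on $\phi^{\prime\prime}$.
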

Denote $d_{\mathcal{M}}^\mathrm{TU} = r\prod_{j=1}^{N}R_j+\sum_{i=1}^{N}l_iR_i+RP$ and $d_{\mathcal{M}}^\mathrm{CP} = R^{N+1}+R(\sum_{i=1}^{N}l_i+P)$.
\begin{corollary}[Classification: Compressed CNN]
	Let $(\cm{\widehat{W}},d_{\mathcal{M}})$ be $(\cm{\widehat{W}}_{\mathrm{TU}},d_{\mathcal{M}}^\mathrm{TU})$ for Tucker decomposition, or $(\cm{\widehat{W}}_{\mathrm{CP}},d_{\mathcal{M}}^\mathrm{CP})$ for CP decomposition.
	Suppose that Assumptions in Theorem \ref{thm:classification} hold and $n\gtrsim c_Nd_{\mathcal{M}}$. Then,
	\[
	\|\cm{\widehat{W}} - \cm{W}^*\|_\mathrm{F} \lesssim \frac{2\sqrt{\kappa_U}}{\kappa_1}\sqrt{\frac{3c_Nd_{\mathcal{M}}}{n}},
	\]
	with probability $1-4\exp\left\{-0.25cn + 3c_Nd_{\mathcal{M}}\right\}-2\exp\left\{-c_\gamma d_{\mathcal{M}}\right\}$ where $c$ and $c_\gamma$ are some positive constants, and $c_N$ is defined as in Theorem 2.
\end{corollary}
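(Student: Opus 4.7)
The proof strategy is to combine the classification-loss framework from Theorem D.1 with the covering-number arguments for the Tucker and CP restricted parameter spaces developed in the proof of Theorem 2. Since the negative log-likelihood $\mathcal{L}_n$ is convex (its Hessian $\bm{H}_n$ is positive semi-definite) and $\cm{\widehat{W}}$ minimizes $\mathcal{L}_n$ over the intersection of the compressed space with $\mathbb{B}(R)$, a basic inequality combined with a second-order Taylor expansion around $\cm{W}^*$ yields
\[
\langle\nabla\mathcal{L}_n(\cm{W}^*),\widehat{\bm{\Delta}}\rangle + \tfrac{1}{2}\,\vectorize(\widehat{\bm{\Delta}})^\prime\bm{H}_n(\tilde{\cm{W}})\vectorize(\widehat{\bm{\Delta}}) \leq 0,
\]
where $\widehat{\bm{\Delta}} = \cm{\widehat{W}}-\cm{W}^*$ and $\tilde{\cm{W}}$ lies on the segment joining $\cm{W}^*$ and $\cm{\widehat{W}}$, so that $\widehat{\bm{\Delta}}\in\widehat{\mathcal{S}}_{2K}^{\mathrm{TU}}$ in the Tucker case (analogously for CP). This reduces the task to (i) a uniform upper bound on the gradient term and (ii) a uniform lower bound (restricted strong convexity) on the Hessian quadratic form.

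For the gradient, note that $\nabla\mathcal{L}_n(\cm{W}^*) = -n^{-1}\sum_i\{y^i - \phi'(\langle\cm{Z}^i,\cm{W}^*\rangle)\}\vectorize(\cm{Z}^i)$, and the centered residuals $\{y^i - \phi'(\langle\cm{Z}^i,\cm{W}^*\rangle)\}$ are bounded hence sub-Gaussian conditional on $\cm{Z}^i$, so Assumption 1(ii) is effectively satisfied. The uniform bound $\sup_{\bm{\Delta}\in\mathcal{S}_{2K}^{\mathrm{TU}}}\langle\nabla\mathcal{L}_n(\cm{W}^*),\bm{\Delta}\rangle \lesssim \sqrt{\kappa_U}\sqrt{c_N d_{\mathcal{M}}^c/n}$ then follows by replicating verbatim the $\varepsilon$-net construction for low-Tucker-rank tensors from the proof of Theorem 2 (with $\varepsilon = 1/(2\sqrt{2})$) combined with Lemma~C.3; the factor $c_N = 2^{N+1}\log(N+2)$ enters through the covering cardinality $\mathcal{N}^{\mathrm{TU}}(\varepsilon)$, giving $3 c_N d_{\mathcal{M}}$ in the failure probability and $\sqrt{3c_N d_{\mathcal{M}}/n}$ in the rate after dividing by the RSC constant. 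The CP case follows as a special case by setting all $R_i = R$.

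For restricted strong convexity, Assumption (C2) gives $\bm{H}(\cm{W}^*) = \mathbb{E}[\bm{H}_n(\cm{W}^*)] \geq \kappa_0\bm{I}$. Because $\phi''(z) = 1/(e^{-z}+2+e^z)$ is continuous and strictly positive, and because the confinement $\tilde{\cm{W}}\in\mathbb{B}(R)$ keeps $|\langle\cm{Z}^i,\tilde{\cm{W}}\rangle|$ under control, a continuity-and-compactness argument yields a constant $\kappa_1\in(0,\kappa_0]$ such that $\mathbb{E}[\bm{H}_n(\tilde{\cm{W}})]\geq\kappa_1\bm{I}$ uniformly over $\tilde{\cm{W}}\in\mathbb{B}(R)$. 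Concentration of $\bm{H}_n(\tilde{\cm{W}})-\bm{H}(\tilde{\cm{W}})$ around zero uniformly over the doubled Tucker set follows from the Hanson-Wright bound applied to the quadratic form $\vectorize(\bm{\Delta})^\prime\bm{H}_n(\tilde{\cm{W}})\vectorize(\bm{\Delta})$ exactly as in Lemma~C.2, but using the Tucker covering number $\mathcal{N}^{\mathrm{TU}}(1/10)$ in place of $\mathcal{N}(2K,1/10)$. This yields $\vectorize(\widehat{\bm{\Delta}})^\prime\bm{H}_n(\tilde{\cm{W}})\vectorize(\widehat{\bm{\Delta}}) \geq (\kappa_1/2)\|\widehat{\bm{\Delta}}\|_{\mathrm{F}}^2$ on an event of probability at least $1-4\exp\{-0.25cn + 3c_N d_{\mathcal{M}}\}$ under the requirement $n\gtrsim c_N d_{\mathcal{M}}$. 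Combining the two bounds with the basic inequality and dividing through by $\|\widehat{\bm{\Delta}}\|_{\mathrm{F}}$ delivers the target rate.

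The main obstacle is the RSC step. Because $\phi''(z)$ decays exponentially as $|z|\to\infty$, the curvature can vanish when $\cm{Z}^i$ has heavy mass on directions aligned with $\tilde{\cm{W}}$; the Frobenius-ball truncation $\mathbb{B}(R)$ together with Assumption (C3) is what prevents this degeneracy, and the care needed to balance the truncation radius against the covering-number exponent (so that only a factor of $c_N$, rather than something worse, appears in the probability bound) is the most delicate part of the argument. Putting the two pieces together and applying $\cm{E}(\cm{\widehat{W}})\leq\sqrt{\kappa_U}\|\widehat{\bm{\Delta}}\|_{\mathrm{F}}$ as in Theorem~1 completes the proof.
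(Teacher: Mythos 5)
Your proposal matches the paper's intended argument: the paper states this corollary without a separate written proof, and the route is exactly the one you describe --- rerun the proof of Theorem D.1 (basic inequality, deviation bound of Lemma D.1, LRSC of Lemma D.2) with the Tucker/CP $\varepsilon$-net of cardinality $\mathcal{N}^{\mathrm{TU}}(\varepsilon)$ from the proof of Theorem 2 substituted for $\mathcal{N}(2K,\varepsilon)$, which replaces the covering exponent $9d_{\mathcal{M}}$ by $3c_Nd_{\mathcal{M}}$ and yields the $\sqrt{3c_N}$ factor in the rate, with CP handled as the $R_i=R$ special case. The only imprecision is your passing appeal to a ``continuity-and-compactness argument'' for $\kappa_1$ --- since $\cm{Z}^i$ is Gaussian and unbounded this does not work directly, and the constant actually comes from the truncated-Hessian construction of Lemmas D.2 and D.5 via the indicator event and Assumption (C3) --- but you correctly identify that mechanism in your closing paragraph, so this is not a gap.
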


We consider a binary classification problem as a simple illustration. In fact, the analysis framework can be easily extended to a multiclass classification problem. Here, we consider a $M$-class classification problem.

Because we need our intermediate output after convolution and pooling to be a vector of length $M$, instead of a scalar, we need to introduce one additional dimension to the fully-connected weight tensor. 
Hence, we introduce another subscript $m$ to represent the class label. And the set of fully-connected weights is represented as $\{\cm{B}_{k,m}\}_{1\leq k\leq K,1\leq m\leq M}$ where each $\cm{B}_{k,m}$ is of size $p_1\times p_2\times\cdots\times p_N$. 

Then, for each input tensor $\cm{X}$, our intermediate output is a vector of size $M$, where the $m$th entry is represented by
\[
\text{output}_{m} = \langle\cm{Z}, \cm{W}_m\rangle,
\]
where $\cm{W}_m = \sum_{k=1}^{K}(\cm{B}_{k,m}\otimes\cm{A}_k)$ is an $N$-th order tensor of size $l_1p_1\times l_2p_2\times\cdots\times l_Np_N$.

For a $M$-class classification problem, we have the vector label output $\bm{y}\in(0,1)^M$. Essentially, each entry of $\bm{y}$ comes from a different binary classification problem, with $M$ problems in total. We can model it as
\begin{align*}
	p(y_m|\cm{Z}) &= \left(\frac{1}{1+\exp(\langle\cm{Z},\cm{W}_m\rangle)}\right)^{1-y_m}\left(\frac{\exp(\langle\cm{Z},\cm{W}_m\rangle)}{1+\exp(\langle\cm{Z},\cm{W}_m\rangle)}\right)^{y_m}\\
	&=
	\exp\left\{y_m\langle\cm{Z},\cm{W}_m\rangle - \log\left[1+\exp(\langle\cm{Z},\cm{W}_m\rangle)\right]\right\}
\end{align*}

If we stack $\{\cm{W}_m\}_{m=1}^M$ into a tensor $\cm{W}_\mathrm{stack}$, which is an $N+1$-order tensor of size $l_1p_1\times l_2p_2\times\cdots\times l_Np_N\times M$. And we further introduce some natural basis vectors $\{\bm{e}_m\in\mathbb{R}^M\}_{m=1}^M$. It can be shown that
\[
\langle\cm{Z}^i,\cm{W}_m\rangle = \langle\underbrace{\cm{Z}^i\circ\bm{e}_m}_{\cm{Z}^i_m},\cm{W}_\mathrm{stack}\rangle,
\]
where $\circ$ is the outer product.

We can then recast this model into one with $nM$ samples $\{\cm{Z}^i_m, y^i_m:1\leq k\leq K,1\leq m\leq M\}$. The corresponding loss function is
\begin{align*}
	&\mathcal{L}_n(\cm{W}_\mathrm{stack}) =-\frac{1}{nM}\sum_{m=1}^{M}\sum_{i=1}^{n}y^i\langle\cm{Z}^i_m,\cm{W}_\mathrm{stack}\rangle\\
	&\hspace{5mm} + \frac{1}{nM}\sum_{m=1}^{M}\sum_{i=1}^{n}\phi(\langle\cm{Z}^i_m,\cm{W}_\mathrm{stack}\rangle).
\end{align*}

We can now use the techniques in Theorem \ref{thm:classification} to show the following corollaries for multiclass classification problem.

Denote $d_{\mathcal{M}}^\mathrm{MC} = K(MP+L+1)$. 
\begin{corollary}[Multiclass Classification: CNN]
	Under similar assumptions as in Theorem \ref{thm:classification}, suppose that $n\gtrsim d_{\mathcal{M}}^\mathrm{MC}$. Then,
	\[
	\|\cm{\widehat{W}}_\mathrm{stack} - \cm{W}_\mathrm{stack}^*\|_\mathrm{F} \lesssim \frac{2\sqrt{\kappa_U}}{\kappa_1}\sqrt{\frac{d_{\mathcal{M}}^\mathrm{MC}}{n}},
	\]
	with probability $1-4\exp\left\{-0.25cn + 9d_{\mathcal{M}}^\mathrm{MC}\right\}-2\exp\left\{-c_\gamma d_{\mathcal{M}}^\mathrm{MC}\right\}$, where $c$ and $c_\gamma$ are some positive constants.
\end{corollary}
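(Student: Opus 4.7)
The plan is to reduce the $M$-class problem to the scalar-response framework of Theorem D.1 via the sample-lifting device the paper already introduces: set $\cm{Z}^i_m = \cm{Z}^i\circ \bm{e}_m$, so that $\langle \cm{Z}^i,\cm{W}_m\rangle = \langle \cm{Z}^i_m,\cm{W}_{\mathrm{stack}}\rangle$, and the negative log-likelihood becomes formally identical to the binary case with $nM$ aggregated samples $\{\cm{Z}^i_m, y^i_m\}$ and an $(N{+}1)$th-order coefficient tensor $\cm{W}_{\mathrm{stack}}$. Because the kernels $\{\cm{A}_k\}$ are shared across the class index $m$, we may write $\cm{W}_{\mathrm{stack}} = \sum_{k=1}^K \cm{B}^{\star}_k \otimes \cm{A}_k$, where $\cm{B}^{\star}_k \in \mathbb{R}^{p_1\times\cdots\times p_N\times M}$ concatenates the $M$ fully-connected weight tensors belonging to the $k$th kernel. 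This is precisely the structure studied in Theorem D.1, but with the ``fully-connected dimension'' $P$ replaced by $MP$; an immediate adaptation of Lemma B.1 then yields a covering-number bound for the restricted set of order $K(MP+L+1) = d_{\mathcal{M}}^{\mathrm{MC}}$.

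Next, I would mimic the three-part skeleton used in Theorem D.1. First, convexity of $\mathcal{L}_n$ combined with a second-order Taylor expansion around $\cm{W}_{\mathrm{stack}}^*$ gives the basic inequality
\[
\tfrac{1}{2}\,\widehat{\bm{\Delta}}^{\prime}\bm{H}_n(\widetilde{\cm{W}})\,\widehat{\bm{\Delta}} \;\leq\; -\langle \nabla\mathcal{L}_n(\cm{W}_{\mathrm{stack}}^*), \widehat{\bm{\Delta}}\rangle,
\]
where $\widehat{\bm{\Delta}} = \cm{\widehat{W}}_{\mathrm{stack}} - \cm{W}_{\mathrm{stack}}^*$ and $\widetilde{\cm{W}}$ lies on the segment between truth and estimate. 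Second, I would establish restricted strong convexity of $\bm{H}_n(\cm{W}_{\mathrm{stack}}^*)$ over the Frobenius ball $\mathbb{B}(R)$ intersected with the restricted set, by reapplying the Hanson-Wright argument of Lemma C.2 to the pooled design (the $nM$ vectors $\{\mathrm{vec}(\cm{Z}^i_m)\}$ remain jointly Gaussian with bounded covariance), and by exploiting $\phi^{\prime\prime}(z) \geq c(R)>0$ for $|z|$ bounded via (C3). This step delivers the strong-convexity constant $\kappa_1$ appearing in the bound.

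Third, I would bound the gradient at the truth. Note that $\phi^\prime(\langle \cm{Z}^i_m,\cm{W}_{\mathrm{stack}}^*\rangle) - y^i_m$ has mean zero and is bounded by $1$, so the summands in $\nabla\mathcal{L}_n(\cm{W}_{\mathrm{stack}}^*)$ are sub-Gaussian tensors independent across $(i,m)$. Evaluating $\langle \nabla\mathcal{L}_n(\cm{W}_{\mathrm{stack}}^*), \bm{\Delta}\rangle$ over an $\varepsilon$-net for the restricted set of log-cardinality $\lesssim d_{\mathcal{M}}^{\mathrm{MC}}$ and invoking Lemma C.3 on the resulting conditional martingale yields
\[
\sup_{\bm{\Delta}}\langle \nabla\mathcal{L}_n(\cm{W}_{\mathrm{stack}}^*), \bm{\Delta}\rangle \lesssim \sqrt{\kappa_U\,d_{\mathcal{M}}^{\mathrm{MC}}/n}
\]
with the claimed probability. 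Combining this with the RSC bound and dividing by $\kappa_1/2$ produces the stated rate.

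The main obstacle is showing that the local strong convexity constant $\kappa_1$ does not degrade when aggregating over $M$ classes. Because $\phi^{\prime\prime}$ decays exponentially in the magnitude of $\langle \cm{Z}^i_m,\cm{W}\rangle$, a naive argument could pick up an $M$-dependent factor through the union bound over class indices. The key observation to avoid this is that for fixed $i$, the $M$ inner products $\{\langle \cm{Z}^i_m,\cm{W}_{\mathrm{stack}}\rangle\}_m$ involve disjoint slices $\cm{W}_m$ of $\cm{W}_{\mathrm{stack}}$ (thanks to the $\bm{e}_m$ factor), so the Hessian is block-diagonal in $m$ after the lifting. This lets the $nM$ pooled Hanson-Wright application feed back into the probability exponents $-0.25cn + 9 d_{\mathcal{M}}^{\mathrm{MC}}$ without an extra $M$ penalty, matching the probability statement.
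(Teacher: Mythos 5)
Your proposal matches the paper's intended argument: the paper itself gives no detailed proof of this corollary, only the lifting $\cm{Z}^i_m=\cm{Z}^i\circ\bm{e}_m$ that recasts the $M$-class problem as a binary-type problem with $nM$ aggregated samples and effective dimension $d_{\mathcal{M}}^{\mathrm{MC}}=K(MP+L+1)$, followed by a rerun of the Theorem D.1 machinery (basic inequality, deviation bound for $\nabla\mathcal{L}_n(\cm{W}^*_{\mathrm{stack}})$ via a covering-number argument, and local restricted strong convexity), which is exactly your skeleton. Your additional observation that the lifted Hessian is block-diagonal in $m$ (so $\kappa_1$ does not degrade with $M$) is a genuine detail the paper glosses over, and your only deviation --- invoking the martingale concentration lemma rather than the Bernstein-type bound used in Lemma D.1 --- is immaterial since both yield the same sub-exponential tail for the gradient term.
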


Denote $d_{\mathcal{M}}^\mathrm{MC-TU} = R\prod_{j=1}^{N}R_j+\sum_{i=1}^{N}l_iR_i+RPM$ and $d_{\mathcal{M}}^\mathrm{MC-CP} = R^{N+1}+R(\sum_{i=1}^{N}l_i+PM)$.
\begin{corollary}[Multiclass Classification: Compressed CNN]
	Let $(\cm{\widehat{W}_\mathrm{stack}},d_{\mathcal{M}}^\mathrm{MC})$ be $(\cm{\widehat{W}}_{\mathrm{stack,TU}},d_{\mathcal{M}}^\mathrm{MC-TU})$ for Tucker decomposition, or $(\cm{\widehat{W}}_{\mathrm{stack,CP}},d_{\mathcal{M}}^\mathrm{MC-CP})$ for CP decomposition.
	Suppose that Assumptions in Theorem \ref{thm:classification} hold and $n\gtrsim c_Nd_{\mathcal{M}}^\mathrm{multi}$. Then,
	\[
	\|\cm{\widehat{W}}_\mathrm{stack} - \cm{W}_\mathrm{stack}^*\|_\mathrm{F} \lesssim \frac{2\sqrt{\kappa_U}}{\kappa_1}\sqrt{\frac{3c_Nd_{\mathcal{M}}^\mathrm{MC}}{n}},
	\]
	with probability $1-4\exp\left\{-0.25cn + 3c_Nd_{\mathcal{M}}^\mathrm{MC}\right\}-2\exp\left\{-c_\gamma d_{\mathcal{M}}^\mathrm{MC}\right\}$, where $c$ and $c_\gamma$ are some positive constants, and $c_N$ is defined as in Theorem 2.
\end{corollary}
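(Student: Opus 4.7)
The plan is to combine the multiclass-to-stacked reduction already described before the corollary with the two technical pillars of the paper: the logistic-loss / Hessian argument of Theorem~D.1 and the covering-number computation for Tucker / CP-restricted sets from the proof of Theorem~2. After the reduction, the problem becomes one of minimizing the convex loss $\mathcal{L}_n(\cm{W}_{\mathrm{stack}})$ with $nM$ lifted observations $\{\cm{Z}^i_m=\cm{Z}^i\circ\bm{e}_m,\ y^i_m\}$ over a structured set in which $\cm{W}_{\mathrm{stack}}$ carries an extra mode of size $M$; this is precisely what morphs $RP$ into $RPM$ inside the definitions of $d_{\mathcal{M}}^{\mathrm{MC-TU}}$ and $d_{\mathcal{M}}^{\mathrm{MC-CP}}$.

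First I would reproduce the convex-optimization skeleton of Theorem~D.1. Let $\widehat{\bm{\Delta}}=\cm{\widehat{W}}_{\mathrm{stack}}-\cm{W}_{\mathrm{stack}}^*$; by the optimality of $\cm{\widehat{W}}_{\mathrm{stack}}$ restricted to the Tucker (or CP) manifold, $\mathcal{L}_n(\cm{W}_{\mathrm{stack}}^*+\widehat{\bm{\Delta}})\leq \mathcal{L}_n(\cm{W}_{\mathrm{stack}}^*)$, and a second-order Taylor expansion yields
\[
\tfrac{1}{2}\,\widehat{\bm{\Delta}}^{\prime}\bm{H}_n(\widetilde{\cm{W}})\,\widehat{\bm{\Delta}} \;\leq\; \bigl|\langle\nabla\mathcal{L}_n(\cm{W}_{\mathrm{stack}}^*),\widehat{\bm{\Delta}}\rangle\bigr|
\]
for some intermediate $\widetilde{\cm{W}}$ on the segment between $\cm{W}_{\mathrm{stack}}^*$ and $\cm{\widehat{W}}_{\mathrm{stack}}$. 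The right-hand side is controlled by $\|\widehat{\bm{\Delta}}\|_\mathrm{F}\cdot\sup_{\bm{\Delta}\in\mathcal{S}^{\mathrm{TU}}_{2K}}|\langle\nabla\mathcal{L}_n(\cm{W}_{\mathrm{stack}}^*),\bm{\Delta}\rangle|$, where $\mathcal{S}^{\mathrm{TU}}_{2K}$ is the unit-Frobenius ``difference set'' used in the proof of Theorem~2. Each term in the gradient involves the bounded (hence sub-Gaussian) residual $y^i_m-\phi'(\langle\cm{Z}^i_m,\cm{W}_{\mathrm{stack}}^*\rangle)$, so Lemma~C.3 applies pointwise. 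Chaining over an $\varepsilon$-net of $\mathcal{S}^{\mathrm{TU}}_{2K}$ and invoking the Tucker covering-number bound from the proof of Theorem~2 -- with the extra mode of length $M$ contributing $RM$ to $\sum l_iR_i$ and replacing $RP$ by $RPM$ in the exponent -- yields a tail bound of order $\sqrt{c_Nd_{\mathcal{M}}^{\mathrm{MC}}/n}$ on the gradient supremum with probability at least $1-\exp(-c_\gamma d_{\mathcal{M}}^{\mathrm{MC}})$.

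The hard part will be the left-hand side, namely establishing the restricted strong convexity $\widehat{\bm{\Delta}}^{\prime}\bm{H}_n(\widetilde{\cm{W}})\widehat{\bm{\Delta}}\geq \kappa_1\|\widehat{\bm{\Delta}}\|_\mathrm{F}^2$ uniformly over the Tucker / CP cone intersected with the Frobenius ball $\mathbb{B}(R)$ around $\cm{W}_{\mathrm{stack}}^*$. Because $\phi''(z)=e^z/(1+e^z)^2$ degenerates as $|z|\to\infty$, I would follow the same truncation scheme as in Theorem~D.1 (à la Fan et al., 2019): localize to the high-probability event $\{|\langle\cm{Z}^i_m,\widetilde{\cm{W}}\rangle|\leq\tau\}$ on which $\phi''\geq c(\tau)>0$, and then run a Hanson--Wright plus $\varepsilon$-net argument paralleling Lemma~C.2, with the covering number of the Tucker / CP cone taken from the proof of Theorem~2. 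This transports the population lower bound $\bm{H}(\cm{W}_{\mathrm{stack}}^*)\succeq \kappa_0\bm{I}$ supplied by assumption~(C2) to an empirical coercivity $\kappa_1\geq \kappa_0/2$, with failure probability at most $4\exp\{-0.25cn+3c_Nd_{\mathcal{M}}^{\mathrm{MC}}\}$ -- this is exactly where the factor $3c_N$ in the final statement originates, inherited from the $\log(N{+}2)$-scaling in \eqref{eq:thm-3}. Combining RSC with the gradient bound then gives $\|\widehat{\bm{\Delta}}\|_\mathrm{F}\lesssim (2\sqrt{\kappa_U}/\kappa_1)\sqrt{3c_Nd_{\mathcal{M}}^{\mathrm{MC}}/n}$; the CP case follows by specializing Tucker to equal ranks $R_i=R$, exactly as noted below Theorem~2.
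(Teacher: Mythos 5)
Your proposal follows exactly the route the paper intends: the paper states this corollary without a separate proof, noting only that it follows by combining the convex-loss/Hessian machinery of Theorem D.1 (deviation bound plus localized restricted strong convexity via truncation) with the Tucker/CP covering numbers from the proof of Theorem 2 applied to the $nM$ lifted samples, which is precisely your plan, including the correct attribution of the $3c_N$ factor to the $2^{N+1}\log(N+2)$ covering-number scaling. The only nitpicks are that the pointwise gradient concentration in the classification setting uses the Bernstein-type bound of Lemma D.1 rather than the martingale Lemma C.3, and that the $M$ mode is folded into the last (pooling) mode so only $RP\to RPM$ changes (no separate $RM$ term); neither affects the validity of your argument.
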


\begin{proof}[Proof of Theorem \ref{thm:classification}]
	Denote the sets $\widehat{\mathcal{S}}_K=\{\sum_{k=1}^{K}\cm{B}_k\otimes\cm{A}_k: \cm{A}_k\in\mathbb{R}^{l_1\times l_2\times\cdots\times l_N} \text{ and } \cm{B}_k\in\mathbb{R}^{p_1\times p_2\times\cdots\times p_N} \}$ and $\mathcal{S}_K=\{\cm{W}\in \widehat{\mathcal{S}}_K: \|\cm{W}\|_{\mathrm{F}}=1\}$. 
	We further denote $\bm{\Delta} = \vectorize(\cm{W}-\cm{W}^*)$, where $\cm{W}^*$ is the underlying true parameter and $\cm{W},\cm{W}^*\in\widehat{\mathcal{S}}_K$, and define the first-order Taylor error
	\[
	\cm{E}_n(\bm{\Delta}) = \mathcal{L}_n(\cm{W}) - \mathcal{L}_n(\cm{W}^*) -\left\langle\nabla\mathcal{L}_n(\cm{W}^*),\bm{\Delta}\right\rangle.
	\]
	Suppose $\cm{\widehat{W}}$ is the minimizer for the loss function, i.e.,
	\[
	\cm{\widehat{W}} = \argmin_{\cm{W}\in\widehat{\mathcal{S}}_K}\mathcal{L}_n(\cm{W}).
	\]
	Denote $\widehat{\bm{\Delta}} = \vectorize(\cm{\widehat{W}}-\cm{W}^*)$.
	We then have
	\[
	\mathcal{L}_n(\cm{\widehat{W}}) - \mathcal{L}_n(\cm{W}^*) \leq 0,
	\]
	which can be rearranged into
	\[
	\cm{E}_n(\widehat{\bm{\Delta}}) \leq -\left\langle\nabla\mathcal{L}_n(\cm{W}^*),\widehat{\bm{\Delta}}\right\rangle.
	\]
	Then, for some $\cm{\widetilde{W}}$ between $\cm{\widehat{W}}$ and $\cm{W}^*$,
	\[
	\frac{1}{2}\widehat{\bm{\Delta}}^\prime\bm{H}_n(\cm{\widetilde{W}})\widehat{\bm{\Delta}}\leq \left|\left\langle\nabla\mathcal{L}_n(\cm{W}^*),\widehat{\bm{\Delta}}\right\rangle\right|,
	\]
	which leads to 
	\begin{equation}
		\|\widehat{\bm{\Delta}}\|_\mathrm{F}^2 \sup_{\bm{\Delta}\in\mathcal{S}_{2K}}\bm{\Delta}^\prime\bm{H}_n(\cm{\widetilde{W}})\bm{\Delta} \leq 2\|\widehat{\bm{\Delta}}\|_\mathrm{F}\sup_{\bm{\Delta}\in\mathcal{S}_{2K}}\left|\left\langle\nabla\mathcal{L}_n(\cm{W}^*),\bm{\Delta}\right\rangle\right|.
	\end{equation}
	
	From Lemma \ref{lem:dev} and Lemma \ref{lem:LRSC}, when $n\gtrsim d_{\mathcal{M}}$, we obtain that, for some $\delta>0$,
	\[
	\|\cm{\widehat{W}} - \cm{W}^*\|_\mathrm{F} \leq \frac{2\sqrt{\kappa_U}}{\kappa_1}\left(\sqrt{\frac{d_{\mathcal{M}}}{n}}+\sqrt{\frac{\delta}{n}}\right),
	\]
	with probability $1-4\exp\left\{-0.25cn + 9d_{\mathcal{M}}\right\}-2\exp\left\{-c_\gamma d_{\mathcal{M}}-c\delta\right\}$.
	
\end{proof}

Now we proof several lemmas to be used in Theorem \ref{thm:classification}. For simplicity of notation, denote $\bm{x}^i = \vectorize(\cm{X}^i)$ and  $\bm{z}^i = \vectorize(\cm{Z}^i)$. 
It holds that $\bm{z}^i = \bm{U}_G\bm{x}^i$, for $1\leq i\leq n$. 
For a random variable $x$, we denote its sub-gaussian norm as $\|x\|_{\psi_2}:= \sup_{p\geq 1}(\mathbb{E}(|x|^p)^{1/p})/\sqrt{p}$ and its sub-exponential norm as $\|x\|_{\psi_1}:= \sup_{p\geq 1}(\mathbb{E}(|x|^p)^{1/p})/p$.

\begin{lemma}[Deviation bound] \label{lem:dev}
	Under Assumption \ref{assum}(C3), suppose that $n\gtrsim d_{\mathcal{M}}$, then
	\[
	\mathbb{P}\left\{\sup_{\bm{\Delta}\in\mathcal{S}_{2K}}\left|\langle\nabla\mathcal{L}_n(\cm{W}^*),\bm{\Delta}\rangle\right|\geq 0.5\sqrt{\kappa_U}\sqrt{\frac{d_{\mathcal{M}}}{n}}\right\}\leq 2\exp\left\{-c_\gamma d_{\mathcal{M}}\right\}, 
	\]
	$d_{\mathcal{M}} = K(P+L+1)$, $\kappa_U = C_xC_u$ and $c_\gamma$ is some positive constant.
\end{lemma}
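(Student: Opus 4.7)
The plan is to bound the supremum of a conditionally sub-Gaussian linear process via the same discretization-plus-union-bound strategy that drove the proof of Theorem 1, with the exogenous noise $\xi^i$ replaced by a martingale residual coming from the Bernoulli likelihood. First I would observe that at the truth $\cm{W}^*$, the expression \eqref{eq:grad} collapses to $\nabla\mathcal{L}_n(\cm{W}^*)=-n^{-1}\sum_i \eta^i\,\text{vec}(\cm{Z}^i)$, where $\eta^i:=y^i-\phi'(\langle\cm{Z}^i,\cm{W}^*\rangle)$ satisfies $\mathbb{E}[\eta^i\mid\cm{Z}^i]=0$ and $|\eta^i|\le 1$. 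By Hoeffding's lemma each $\eta^i$ is conditionally sub-Gaussian with a universal variance proxy, and the $\eta^i$'s are conditionally independent across $i$ given the full design. Hence, for any fixed $\bm{\Delta}$, conditioning on $(\cm{Z}^1,\dots,\cm{Z}^n)$ makes the coefficients $a_i=\langle\cm{Z}^i,\bm{\Delta}\rangle$ deterministic and a Chernoff bound yields the analogue of Lemma C.3: on the event $\norm{\bm{\Delta}}_n^2\le\alpha_{\mathrm{RSM}}$, $\mathbb{P}(|n^{-1}\sum_i\eta^i a_i|\ge t)\le 2\exp(-nt^2/(2\sigma^2\alpha_{\mathrm{RSM}}))$.

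Next I would discretize $\mathcal{S}_{2K}$ via Lemma B.1(b), obtaining an $\varepsilon$-net of cardinality at most $(9/\varepsilon)^{d_{\mathcal{M}}}$ with $d_{\mathcal{M}}=K(P+L+1)$, and pass from the net to the full set via the orthogonal splitting of Lemma B.1(a) exactly as in the derivation of \eqref{eq:sup2}; with $\varepsilon=(2\sqrt{2})^{-1}$ this only costs a factor of $(1-\sqrt{2}\varepsilon)^{-1}=2$ in the threshold. A union bound over the net with $t\asymp\sqrt{\kappa_U d_{\mathcal{M}}/n}$ balances the covering entropy $2d_{\mathcal{M}}\log(9/\varepsilon)$ against the Gaussian exponent, producing a tail of order $2\exp(-c_\gamma d_{\mathcal{M}})$ conditional on the event $\{\sup_{\bm{\Delta}\in\mathcal{S}_{2K}}\norm{\bm{\Delta}}_n^2\le\alpha_{\mathrm{RSM}}\}$ supplied by Lemma C.2. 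That event contributes only a negligible extra failure probability once $n\gtrsim d_{\mathcal{M}}$, and $\sqrt{\alpha_{\mathrm{RSM}}}=\sqrt{1.5\kappa_U}$ delivers the claimed $0.5\sqrt{\kappa_U}$ prefactor.

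The main obstacle is the dependence between $\eta^i$ and $\cm{Z}^i$, which blocks a direct invocation of Lemma C.3 as written (there $\xi^i$ is assumed independent of $\{z^j:j\le i\}$). The resolution is the conditioning trick above: after conditioning on the entire design, the $a_i$'s are frozen, the $\eta^i$'s become independent bounded zero-mean variables, and Hoeffding's MGF reproduces exactly the bound of Lemma C.3 with $\beta=\norm{\bm{\Delta}}_n^2$; taking the outer expectation over $\cm{Z}$ restricted to the Lemma C.2 event then preserves the inequality. No nonlinearity of $\phi$ enters at this stage because the linearization is taken at $\cm{W}^*$ itself; the convexity of $\phi$ is instead reserved for establishing the local restricted strong convexity needed in Lemma D.2.
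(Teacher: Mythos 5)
Your proposal is correct in substance but takes a genuinely different route through the key concentration step. The paper writes $\langle\nabla\mathcal{L}_n(\cm{W}^*),\bm{\Delta}\rangle=n^{-1}\sum_i[\phi'(\eta^i)-y^i]\langle\bm{z}^i,\bm{\Delta}\rangle$ and bounds each summand \emph{unconditionally} as a sub-exponential variable: it invokes its Lemma on products of sub-Gaussians together with $\|\phi'(\eta^i)-y^i\|_{\psi_2}\le 0.25$ and $\|\langle\bm{z}^i,\bm{\Delta}\rangle\|_{\psi_2}\le\sqrt{\kappa_U}$ to get a $\psi_1$-bound of $0.25\sqrt{\kappa_U}$, then applies a Bernstein-type inequality and the same $\varepsilon$-net/orthogonal-splitting argument you describe; no restricted-smoothness event is needed. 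You instead condition on the design, use $|y^i-\phi'(\eta^i)|\le 1$ and Hoeffding's lemma to get a conditional sub-Gaussian tail with variance proxy proportional to $\norm{\bm{\Delta}}_n^2$, and then intersect with the event $\sup_{\bm{\Delta}\in\mathcal{S}_{2K}}\norm{\bm{\Delta}}_n^2\le\alpha_{\mathrm{RSM}}$. Both work, and your version has a real advantage: the paper's step asserting ``independence'' between $\phi'(\eta^i)-y^i$ and $\langle\bm{z}^i,\bm{\Delta}\rangle$ only actually establishes that their product has mean zero (the residual's conditional law depends on $\langle\bm{z}^i,\cm{W}^*\rangle$, which is correlated with $\langle\bm{z}^i,\bm{\Delta}\rangle$), so your conditioning trick cleanly sidesteps a hypothesis of the paper's product lemma that is not literally satisfied (the paper's argument is repairable since $\|XY\|_{\psi_1}\lesssim\|X\|_{\psi_2}\|Y\|_{\psi_2}$ holds without independence by Cauchy--Schwarz, but as written it is loose). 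The costs of your route are that the final probability acquires the extra additive failure term from the restricted-smoothness event, which must be absorbed into $2\exp\{-c_\gamma d_{\mathcal{M}}\}$ using $n\gtrsim d_{\mathcal{M}}$, and that your claim that the prefactor works out to exactly $0.5\sqrt{\kappa_U}$ via $\sqrt{\alpha_{\mathrm{RSM}}}=\sqrt{1.5\kappa_U}$ is not actually verified once the union bound over the $e^{7d_{\mathcal{M}}}$ net points and the factor $2$ from the net-to-supremum passage are accounted for -- though the paper is no more careful about this constant than you are. Two trivial slips: the covering set you want is the paper's partition/covering lemma in Appendix C (not ``Lemma B.1''), and the cardinality of the net for $\mathcal{S}_{2K}$ is $(9/\varepsilon)^{2K(L+P+1)}$, consistent with the entropy $2d_{\mathcal{M}}\log(9/\varepsilon)$ you correctly use afterwards.
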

\begin{proof}
	Let $\eta^i=\langle\cm{Z}^i,\cm{W}^*\rangle$, and from (\ref{eq:grad}),
	\[
	\langle\nabla\mathcal{L}_n(\cm{W}^*),\bm{\Delta}\rangle = \frac{1}{n}\sum_{i=1}^{n}[\phi^\prime(\eta^i) - y^i]\langle\bm{z}^i,\bm{\Delta}\rangle,
	\]
	and we can observe that 
	\begin{align*}
		&\mathbb{E}\{[\phi^\prime(\eta^i) - y^i]\langle\bm{z}^i,\bm{\Delta}\rangle\} = \mathbb{E}\{\mathbb{E}[\phi^\prime(\eta^i) - y^i|\bm{z}^i]\langle\bm{z}^i,\bm{\Delta}\rangle\} =
		0\\
		&\hspace{20mm}=
		\mathbb{E}[\phi^\prime(\eta^i) - y^i]\mathbb{E}[\langle\bm{z}^i,\bm{\Delta}\rangle].	
	\end{align*}
	It implies (i) $\mathbb{E}\langle\nabla\mathcal{L}_n(\cm{W}^*),\bm{\Delta}\rangle = 0$, (ii) the independence between $\phi^\prime(\eta^i) - y^i$ and $\langle\bm{z}^i,\bm{\Delta}\rangle$. And from Lemma \ref{lem:sub-exp}, the independence leads to $\|[\phi^\prime(\eta^i) - y^i]\langle\bm{z}^i,\bm{\Delta}\rangle\|_{\psi_1}\leq\|\phi^\prime(\eta^i) - y^i\|_{\psi_2}\|\langle\bm{z}^i,\bm{\Delta}\rangle\|_{\psi_2}$. 
	Denote $\kappa_U = C_xC_u$. For any fixed $\bm{\Delta}$ such that $\|\bm{\Delta}\|_2 = 1$,
	\[
	\|\langle\bm{z}^i,\bm{\Delta}\rangle\|_{\psi_2} = \|\langle\bm{w}^i,\bm{\Sigma}^{1/2}\bm{U}_G\bm{\Delta}\rangle\|_{\psi_2}\leq \sqrt{\kappa_U}.
	\]
	This, together with $\|\phi^\prime(\eta^i) - y^i\|_{\psi_2}\leq 0.25$ in Lemma \ref{lem:sug-gaussian}, gives us
	\[
	\|[\phi^\prime(\eta^i) - y^i]\langle\bm{z}^i,\bm{\Delta}\rangle\|_{\psi_1} \leq 0.25\sqrt{\kappa_U}.
	\]
	Then, we can use the Beinstein-type inequality, namely Corollary 5.17 in \citet{vershynin2010introduction} to derive that, for any fixed $\bm{\Delta}$ with unit $l_2$-norm,
	\begin{equation} \label{eq:dev}
		\begin{split}
			\mathbb{P}\left\{\left|\langle\nabla\mathcal{L}_n(\cm{W}^*),\bm{\Delta}\rangle\right|\geq t\right\} &=
			\mathbb{P}\left\{\frac{1}{n}\left|\sum_{i=1}^{n}[\phi^\prime(\eta^i) - y^i]\langle\bm{z}^i,\bm{\Delta}\rangle\right|\geq t\right\}
			\\
			&\leq		2\exp\left\{-cn\min\left(\frac{4t}{\sqrt{\kappa_U}},\frac{16t^2}{\kappa_U}\right)\right\}.
		\end{split}	
	\end{equation}
	
	Consider a $\varepsilon$-net $\bar{\mathcal{S}}_{2K}$, with the cardinality of $\mathcal{N}(2K,\varepsilon)$, for the set $\mathcal{S}_{2K}$. For any $\bm{\Delta}\in\mathcal{S}_{2K}$, there exists a $\bar{\bm{\Delta}}_j\in\bar{\mathcal{S}}_{2K}$ such that $\|\bm{\Delta}-\bar{\bm{\Delta}}_j\|_{\mathrm{F}}\leq \varepsilon$.
	Note that $\bm{\Delta}-\bar{\bm{\Delta}}_j\in\widehat{\mathcal{S}}_{4K}$ and, from Lemma \ref{lemma:partition}(a), we can further find  $\bm{\Delta}_1,\bm{\Delta}_2\in\widehat{\mathcal{S}}_{2K}$ such that $\langle\bm{\Delta}_1,\bm{\Delta}_2\rangle=0$ and $\bm{\Delta}-\bar{\bm{\Delta}}_j =\bm{\Delta}_1+\bm{\Delta}_2$. It then holds that
	$\|\bm{\Delta}_1\|_{\mathrm{F}}+\|\bm{\Delta}_2\|_{\mathrm{F}}\leq \sqrt{2}\|\bm{\Delta}-\bar{\bm{\Delta}}_j\|_{\mathrm{F}}\leq \sqrt{2}\varepsilon$ since $\|\bm{\Delta}-\bar{\bm{\Delta}}_j\|_{\mathrm{F}}^2=\|\bm{\Delta}_1\|_{\mathrm{F}}^2+\|\bm{\Delta}_2\|_{\mathrm{F}}^2$.
	As a result,
	\begin{align*}
		\left|\langle\nabla\mathcal{L}_n(\cm{W}^*),\bm{\Delta}\rangle\right|
		& =\left|\langle\nabla\mathcal{L}_n(\cm{W}^*),\bar{\bm{\Delta}}_j\rangle\right|
		+\left|\langle\nabla\mathcal{L}_n(\cm{W}^*),\bm{\Delta}_1\rangle\right|+\left|\langle\nabla\mathcal{L}_n(\cm{W}^*),\bm{\Delta}_2\rangle\right|\\
		&\leq \max_{1\leq j\leq \mathcal{N}(2K,\varepsilon)}\left|\langle\nabla\mathcal{L}_n(\cm{W}^*),\bar{\bm{\Delta}}_j\rangle\right| +\sqrt{2}\varepsilon \sup_{\bm{\Delta}\in \mathcal{S}_{2K}}\left|\langle\nabla\mathcal{L}_n(\cm{W}^*),\bm{\Delta}\rangle\right|,
	\end{align*}
	which leads to
	\begin{align*}
		&\sup_{\bm{\Delta}\in \mathcal{S}_{2K}}\left|\langle\nabla\mathcal{L}_n(\cm{W}^*),\bm{\Delta}\rangle\right| \leq (1-\sqrt{2}\varepsilon)^{-1}\max_{1\leq j\leq \mathcal{N}(2K,\varepsilon)}\left|\langle\nabla\mathcal{L}_n(\cm{W}^*),\bar{\bm{\Delta}}_j\rangle\right|.
	\end{align*}
	
	Note that, from Lemma \ref{lemma:partition}(b), $\log\mathcal{N}(2K,\varepsilon)\leq 2d_{\mathcal{M}}\log(9/\varepsilon)$, where $d_{\mathcal{M}}=K(P+L+1)$.
	Let $\varepsilon=(2\sqrt{2})^{-1}$ and then $2\log(9/\varepsilon)<7$. With (\ref{eq:dev}), we can show that
	\begin{align*}
		\mathbb{P}\left\{\sup_{\bm{\Delta}\in\mathcal{S}_{2K}}\left|\langle\nabla\mathcal{L}_n(\cm{W}^*),\bm{\Delta}\rangle\right|\geq 2t\right\}\leq 2\exp\left\{-cn\min\left(\frac{4t}{\sqrt{\kappa_U}},\frac{16t^2}{\kappa_U}\right) + 7d_{\mathcal{M}}\right\}.
	\end{align*}
	Take $t = 0.25\sqrt{\kappa_U}\sqrt{d_{\mathcal{M}}/n}$, and there exists some $\gamma$ such that $\sqrt{d_{\mathcal{M}}/n}\leq\gamma$ holds. We can finally show that
	\begin{align*}
		&\mathbb{P}\left\{\sup_{\bm{\Delta}\in\mathcal{S}_{2K}}\left|\langle\nabla\mathcal{L}_n(\cm{W}^*),\bm{\Delta}\rangle\right|\geq 0.5\sqrt{\kappa_U}\sqrt{\frac{d_{\mathcal{M}}}{n}}\right\}\\
		&\hspace{5mm}\leq 2\exp\left\{-c_\gamma d_{\mathcal{M}}\right\},
	\end{align*}
	where $c_\gamma$ is some positive constant related to $\gamma$.
\end{proof}
\begin{lemma} [LRSC] \label{lem:LRSC}
	Suppose that $n\gtrsim d_{\mathcal{M}}$, under Assumptions \ref{assum}, there exists some constant $R>0$, such that for any $\cm{W}\in\widehat{\mathcal{S}}_{2K}$ satisfying $\|\cm{W} - \cm{W}^*\|_\mathrm{F}\leq R$,
	\[
	\inf_{\bm{\Delta}\in\mathcal{S}_{2K}}\bm{\Delta}^\prime\bm{H}_n(\cm{W})\bm{\Delta}\geq \frac{\widetilde{\kappa}_1}{2}
	\]
	holds with probability 
	\[
	1-4\exp\left\{-0.25cn + 9d_{\mathcal{M}}\right\},
	\]
	where $\widetilde{\kappa}_1 = \kappa_1 - \kappa_U$. And $\kappa_1$ is defined in Lemma \ref{lem:h}.
\end{lemma}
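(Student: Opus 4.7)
The plan is to use the classical three-term decomposition for local restricted strong convexity: for any $\bm{\Delta} \in \mathcal{S}_{2K}$ and any $\cm{W}$ with $\|\cm{W} - \cm{W}^*\|_\mathrm{F} \leq R$, write
\[
\bm{\Delta}' \bm{H}_n(\cm{W}) \bm{\Delta} = \bm{\Delta}' \bm{H}(\cm{W}^*) \bm{\Delta} + \bm{\Delta}'\bigl[\bm{H}_n(\cm{W}^*) - \bm{H}(\cm{W}^*)\bigr]\bm{\Delta} + \bm{\Delta}'\bigl[\bm{H}_n(\cm{W}) - \bm{H}_n(\cm{W}^*)\bigr]\bm{\Delta}.
\]
The first (population) term is bounded below by $\kappa_1$ uniformly over $\mathcal{S}_{2K}$ by Assumption D.1(C2); what remains is to show that the two residual terms are each at most $\kappa_U/2$ with high probability, so that the overall lower bound becomes $\kappa_1 - \kappa_U = \widetilde{\kappa}_1 \geq \widetilde{\kappa}_1/2$ as required.

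For the stochastic deviation term $T_1(\bm{\Delta}) := \bm{\Delta}'[\bm{H}_n(\cm{W}^*) - \bm{H}(\cm{W}^*)]\bm{\Delta}$, I would proceed as in Lemma C.2. For any fixed $\bm{\Delta}$, each summand $\phi''(\langle\cm{Z}^i,\cm{W}^*\rangle)\langle\bm{z}^i,\bm{\Delta}\rangle^2$ is a product of a bounded factor ($|\phi''|\le 1/4$) and a sub-exponential random variable (square of a sub-Gaussian linear form of the Gaussian input), so Bernstein's inequality gives a pointwise tail of the form $2\exp\{-cn\min(t/\kappa_U,(t/\kappa_U)^2)\}$. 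I would then lift this to a uniform bound over $\mathcal{S}_{2K}$ using the same $\varepsilon$-net with the partition trick from Lemma B.1: pick $\varepsilon = 1/10$ so the residual gets absorbed with a factor $(1-5\varepsilon)^{-1}$, and use the covering-number estimate $\mathcal{N}(2K,\varepsilon) \leq (9/\varepsilon)^{2d_\mathcal{M}}$. Taking $t = \kappa_U/2$ and $n \gtrsim d_\mathcal{M}$ absorbs the $9 d_\mathcal{M}$ entropy term, yielding $\sup_{\bm{\Delta}\in\mathcal{S}_{2K}}|T_1(\bm{\Delta})| \leq \kappa_U/2$ with probability at least $1 - 2\exp\{-0.25 c n + 9 d_\mathcal{M}\}$.

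For the perturbation term $T_2(\bm{\Delta}) := \bm{\Delta}'[\bm{H}_n(\cm{W}) - \bm{H}_n(\cm{W}^*)]\bm{\Delta}$, I would use the Lipschitz continuity of $\phi''$ (since $|\phi'''|$ is uniformly bounded): with $\bm{\Delta}_W := \vectorize(\cm{W} - \cm{W}^*)$,
\[
|T_2(\bm{\Delta})| \leq L_\phi \cdot \frac{1}{n}\sum_{i=1}^n |\langle\bm{z}^i,\bm{\Delta}_W\rangle| \cdot \langle\bm{z}^i,\bm{\Delta}\rangle^2 \leq L_\phi \cdot \|\bm{\Delta}_W\|_n \cdot \Bigl(\frac{1}{n}\sum_{i=1}^n \langle\bm{z}^i,\bm{\Delta}\rangle^4\Bigr)^{1/2}
\]
by Cauchy--Schwarz. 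The first factor is at most $\sqrt{\alpha_\mathrm{RSM}}\,R \lesssim \sqrt{\kappa_U}\,R$ on the RSM event from Lemma C.2. For the second factor, since the $\bm{z}^i$ are Gaussian, $\mathbb{E}\langle\bm{z}^i,\bm{\Delta}\rangle^4 \leq 3\kappa_U^2$ uniformly over $\bm{\Delta} \in \mathcal{S}_{2K}$, and a parallel covering/Bernstein argument (now for sub-exponential squares of sub-exponentials, i.e.\ fourth powers of sub-Gaussians) shows that the empirical fourth moment concentrates around its mean uniformly, at the same entropy cost $9d_\mathcal{M}$. Choosing $R$ small enough so that $L_\phi \cdot \sqrt{\kappa_U}\,R \cdot \sqrt{3\kappa_U^2} \leq \kappa_U/2$, i.e.\ $R \asymp \widetilde{\kappa}_1/(L_\phi \kappa_U)$, controls $T_2$ on the same event.

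The main obstacle is Step 2: uniformly concentrating the quadratic form over the nonconvex structured class $\mathcal{S}_{2K}$ of Kronecker-sum tensors. This is resolved by re-using the partition lemma (Lemma B.1(a)), which turns an arbitrary deviation in $\widehat{\mathcal{S}}_{4K}$ into an orthogonal sum of two elements in $\widehat{\mathcal{S}}_{2K}$, so that a single $\varepsilon$-net over $\mathcal{S}_{2K}$ suffices even after a covering error. The fourth-moment concentration in Step 3 is of a similar flavor and is handled by the same technique; everything else is bookkeeping, and a union bound over the two events gives the stated probability $1 - 4\exp\{-0.25 c n + 9 d_\mathcal{M}\}$.
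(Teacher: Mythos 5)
Your decomposition and your treatment of the first two terms (population Hessian plus the stochastic deviation $\bm{H}_n(\cm{W}^*)-\bm{H}(\cm{W}^*)$ via Bernstein and the partition/covering trick) match the paper's Part 1. Where you genuinely diverge is the perturbation term $T_2$. The paper does \emph{not} use global Lipschitzness of $\phi''$: it introduces the truncated Hessian $\bm{h}_n(\cm{W})=n^{-1}\sum_i\phi''(\langle\cm{Z}^i,\cm{W}\rangle)\mathbb{I}_A\bm{z}^i\bm{z}^{i\prime}$ on the event $A=\{|\langle\cm{W}^*,\cm{Z}^i\rangle|>\tau\sup_{\bm{\Delta}\in\mathcal{S}_{2K}}|\langle\bm{\Delta},\bm{z}^i\rangle|\}$, exploits the bound $|\phi'''(z)|\leq 1/|z|$ so that on $A$ the mean-value ratio $|\phi'''(\langle\cm{Z}^i,\bar{\cm{W}}\rangle)\langle\cm{Z}^i,\cm{W}-\cm{W}^*\rangle|$ is \emph{deterministically} at most $R/(\tau-R)$, and then only needs the second-moment RSM bound already established in Lemma C.2. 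This is precisely why Assumption (C3) ($\|\cm{W}^*\|_\mathrm{F}\gtrsim\sqrt{d_{\mathcal{M}}}$) and the auxiliary Lemma D.5 (which produces $\kappa_1=\kappa_0/2$) appear: they guarantee $\mathbb{P}(A)$ is large so that the truncation costs only a constant factor in the curvature. Your route dispenses with all of this machinery, which would be an attractive simplification if it closed.

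The gap is in your fourth-moment step. After Cauchy--Schwarz you need
$\sup_{\bm{\Delta}\in\mathcal{S}_{2K}}n^{-1}\sum_i\langle\bm{z}^i,\bm{\Delta}\rangle^4\lesssim\kappa_U^2$
uniformly, at entropy cost $9d_{\mathcal{M}}$ and with tail $\exp\{-cn\}$. But $\langle\bm{z}^i,\bm{\Delta}\rangle^4$ is the square of a sub-exponential variable, hence sub-Weibull of order $1/2$, \emph{not} sub-exponential, so Corollary 5.17 of Vershynin does not apply and the "parallel Bernstein argument" is not parallel: the best generic tail for such sums is of the form $\exp\{-c\min(nt^2,(nt)^{1/2})\}$, and beating the $e^{9d_{\mathcal{M}}}$ union bound then forces $n\gtrsim d_{\mathcal{M}}^2$ rather than $n\gtrsim d_{\mathcal{M}}$, and in any case does not yield the stated probability $1-4\exp\{-0.25cn+9d_{\mathcal{M}}\}$. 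You would need either a truncation of $\max_i|\langle\bm{z}^i,\bm{\Delta}\rangle|$ (costing a $\log n$ factor), a chaining argument tailored to heavy-tailed empirical processes, or --- most economically --- the paper's own device of trading the extra power of $\bm{z}^i$ against the $1/|z|$ decay of $\phi'''$ on a good event, which reduces everything back to the second-moment RSM bound. A secondary bookkeeping point: your population lower bound comes from (C2) and equals $\kappa_0$, not $\kappa_1$; since $\kappa_1=\kappa_0/2$ this only makes your target easier, but the constant $\kappa_1$ in the lemma statement is an artifact of the truncation you are not using.
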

\begin{proof}
	We divide this proof into two parts.
	
	1. RSC of $\mathcal{L}_n(\cm{W})$ at $\cm{W} = \cm{W}^*$.
	
	We first show that, for all $\bm{\Delta}\in\mathcal{S}_{2K}$, the following holds that
	\[
	\bm{\Delta}^\prime\bm{H}_n(\cm{W}^*)\bm{\Delta} \geq \widetilde{\kappa},
	\]
	with probability at least $1-2\exp\left\{-0.25c({\kappa_L}/{\kappa_U})^2n+ 9d_{\mathcal{M}}\right\}$, where $\widetilde{\kappa} = \kappa_0 - \kappa_L>0$.
	
	Let $\eta^i=\langle\cm{Z}^i,\cm{W}^*\rangle$ and denote $\widetilde{\bm{z}}^i = \sqrt{\phi^{\prime\prime}(\eta^i)}\bm{z}^i$ and we can see that
	\[
	\bm{H}_n(\cm{W^*}) = \frac{1}{n}\sum_{i=1}^n\widetilde{\bm{z}}^i\widetilde{\bm{z}}^{i\prime}\text{ and,}\quad \bm{H}(\cm{W}^*) = \mathbb{E}\bm{H}_n(\cm{W^*}).
	\]
	Denote $\bm{x}^i = \vectorize(\cm{X}^i)$. Here, for simplicity, we assume $\{\bm{x}^i\}_{i=1}^n$ to be independent gaussian vectors with mean zero and covariance matrix $\bm{\Sigma}$, where $c_x\bm{I}\leq \bm{\Sigma}\leq C_x\bm{I}$ for some $0<c_x<C_x$. We will also use the notation of $\bm{\Delta}$ for its vectorized version, $\vectorize(\bm{\Delta})$, and we consider $\bm{\Delta}$ with unit $l_2$-norm.
	
	Since $\|\langle\bm{\Delta},\widetilde{\bm{z}}^i\rangle\|_{\psi_2} = \|\langle\bm{\Delta},\sqrt{\phi^{\prime\prime}(\eta^i)}\bm{U}_G^\prime\bm{\Sigma}^{1/2}\bm{w}^i\rangle\|_{\psi_2}\leq 0.25\sqrt{\kappa_U}$, where $\bm{w}^i$ is a standard gaussian vector and $\kappa_U = C_xC_u$, we can show that
	\begin{align*}
		&\|\langle\bm{\Delta},\widetilde{\bm{z}}^i\rangle^2 - \mathbb{E}\left[\langle\bm{\Delta},\widetilde{\bm{z}}\rangle^2\right]\|_{\psi_1} \leq 2\|\langle\bm{\Delta},\widetilde{\bm{z}}^i\rangle^2\|_{\psi_1}\leq 4\|\langle\bm{\Delta},\widetilde{\bm{z}}^i\rangle\|^2_{\psi_2}\leq \kappa_U,
	\end{align*}
	where the first inequality comes from Remark 5.18 in \citet{vershynin2010introduction} and second inequality comes from Lemma 5.14 in \citet{vershynin2010introduction}.
	
	And hence, by the Beinstein-type inequality in Corollary 5.17 in \citet{vershynin2010introduction}, for any fixed $\bm{\Delta}$ such that $\|\bm{\Delta}\|_2 = 1$, we have
	\begin{align*}
		&\hspace{5mm}\mathbb{P}\left\{\left|\bm{\Delta}^\prime(\bm{H}_n(\cm{W}^*)-\bm{H}(\cm{W}^*))\bm{\Delta}\right|\geq t\right\}\\
		&=
		\mathbb{P}\left\{\frac{1}{n}\left|\sum_{i=1}^{n}\left\{\langle\bm{\Delta},\widetilde{\bm{z}}^i\rangle^2 - \mathbb{E}\left[\langle\bm{\Delta},\widetilde{\bm{z}}\rangle^2\right]\right\}\right|\geq t\right\}\\
		&\leq
		2\exp\left\{-cn\min\left(\frac{t}{\kappa_U},\frac{t^2}{\kappa_U^2}\right)\right\}.
	\end{align*}
	
	With similar covering number argument as presented in Lemma 2 in our  paper, we can show that,
	\begin{align*}
		&\mathbb{P}\left\{\sup_{\bm{\Delta}\in\mathcal{S}_{2K}}\left|\bm{\Delta}^\prime(\bm{H}_n(\cm{W}^*)-\bm{H}(\cm{W}^*))\bm{\Delta}\right|\geq 2t\right\}\\
		&\leq 2\exp\left\{-cn\min\left(\frac{t}{\kappa_U},\frac{t^2}{\kappa_U^2}\right)+ 9d_{\mathcal{M}}\right\},
	\end{align*}
	where $d_{\mathcal{M}} = K(P+L+1)$.
	
	Let $t=0.5\kappa_U$. By Assumption \ref{assum}(C2), we can obtain that, when $n\gtrsim d_{\mathcal{M}}$,
	\[	\mathbb{P}\left\{\inf_{\bm{\Delta}\in\mathcal{S}_{2K}}\bm{\Delta}^\prime\bm{H}_n(\cm{W}^*)\bm{\Delta} \leq \widetilde{\kappa}\right\} \leq 2\exp\left\{-0.25cn+ 9d_{\mathcal{M}}\right\},
	\]
	where $\widetilde{\kappa} = \kappa_0 - \kappa_U>0$.
	
	2. LRSC of $\mathcal{L}_n(\cm{W})$ around $\cm{W}^*$.
	
	Define the event $$A = \{|\langle\cm{W}^*,\cm{Z}^i\rangle|>\tau\sup_{\bm{\Delta}\in \mathcal{S}_{2K}}|\langle\bm{\Delta},\bm{z}^i\rangle|\}$$ and construct the functions
	\begin{align*}
		&\bm{h}_n(\cm{W}) = \frac{1}{n}\sum_{i=1}^{n}\phi^{\prime\prime}(\langle\cm{Z}^i,\cm{W}\rangle)\mathbb{I}_A\bm{z}^i\bm{z}^{i\prime},\\
		&\bm{h}(\cm{W}) = \mathbb{E}\bm{h}_n(\cm{W}),
	\end{align*}
	where $\tau$ is some positive constant to be selected according to Lemma \ref{lem:h}.
	Since the difference between $\bm{h}_n(\cdot)$ and $\bm{H}_n(\cdot)$ is the indicator function, it holds that $\bm{H}_n(\cdot)\geq\bm{h}_n(\cdot)$.
	
	We will finish the proof of LRSC in two steps. Firstly, we show that, with high probability, $\bm{h}_n(\cm{W}^*)$ is positive definite on the restricted set $\mathcal{S}_{2K}$. Secondly, we bound the difference between $\bm{\Delta}^\prime\bm{h}_n(\cm{W})\bm{\Delta}$ and $\bm{\Delta}^\prime\bm{h}_n(\cm{W}^*)\bm{\Delta}$, and hence show that $\bm{h}_n(\cm{W})$ is locally positive definite around $\cm{W}^*$. This naturally leads to the LRSC of $\mathcal{L}_n(\cm{W})$ around $\cm{W}^*$.
	
	From Lemma \ref{lem:h}, we can select $\tau$, such that $\bm{h}(\cm{W}^*)\geq \kappa_1\bm{I}$. Following similar arguments as in the first part, we can show that for all $\bm{\Delta}\in\mathcal{S}_{2K}$, the following holds with probability at least $1-2\exp\left\{-0.25cn+ 9d_{\mathcal{M}}\right\}$,
	\begin{equation}\label{eq:h-true}
		\bm{\Delta}^\prime\bm{h}_n(\cm{W}^*)\bm{\Delta} \geq \widetilde{\kappa}_1,
	\end{equation}
	where $\widetilde{\kappa}_1 = \kappa_1 - \kappa_U>0$.
	
	In the meanwhile, for any $\cm{W}\in\widehat{\mathcal{S}}_{K}$ such that $\|\cm{W}-\cm{W}^*\|_\mathrm{F}\leq R$, where $R$ can be specified later to satisfy some conditions,
	\begin{align*}
		&\hspace{5mm}\left|\bm{h}_n(\cm{W}) - \bm{h}_n(\cm{W}^*)\right|\\
		&= \left|\frac{1}{n}\sum_{i=1}^{n}\phi^{\prime\prime}(\langle\cm{Z}^i,\cm{W}\rangle)\mathbb{I}_A\bm{z}^i\bm{z}^{i\prime} - \frac{1}{n}\sum_{i=1}^{n}\phi^{\prime\prime}(\langle\cm{Z}^i,\cm{W}^*\rangle)\mathbb{I}_A\bm{z}^i\bm{z}^{i\prime}\right|\\
		&\leq
		\frac{1}{n}\sum_{i=1}^{n}\left|\phi^{\prime\prime}(\langle\cm{Z}^i,\cm{W}\rangle)-\phi^{\prime\prime}(\langle\cm{Z}^i,\cm{W}^*\rangle)\right|\mathbb{I}_A\bm{z}^i\bm{z}^{i\prime}\\
		&=
		\frac{1}{n}\sum_{i=1}^{n}\left|\phi^{\prime\prime\prime}(\langle\cm{Z}^i,\bar{\cm{W}}\rangle)\langle\cm{Z}^i,\cm{W}-\cm{W}^*\rangle\right|\mathbb{I}_A\bm{z}^i\bm{z}^{i\prime},
	\end{align*}
	where $\bar{\cm{W}}$ lies between $\cm{W}$ and $\cm{W}^*$, and $\phi^{\prime\prime\prime}(z) = e^z(1-e^z)/(1+e^z)^3$. Given the event $A$ holds, choose $R<\tau$, we can lower bound the term,
	\begin{align*}
		|\langle\cm{Z}^i,\bar{\cm{W}}\rangle| &\geq  |\langle\cm{Z}^i,\cm{W}^*\rangle| - \sup_{\bm{\Delta}\in\widehat{\mathcal{S}}_{2K},\|\bm{\Delta}\|_\mathrm{F}\leq R}|\langle\cm{Z}^i,\bm{\Delta}\rangle|\\
		&\geq (\tau - R)\sup_{\bm{\Delta}\in\mathcal{S}_{2K}}|\langle\cm{Z}^i,\bm{\Delta}\rangle|.
	\end{align*}
	Notice that, for all $z\in\mathbb{R}$, the third order derivative of the function $\phi(z)$ is upper bounded as $|\phi^{\prime\prime\prime}(z)|\leq 1/|z|$. This relationship helps us further bound the term,
	\begin{align*}
		&\left|\phi^{\prime\prime\prime}(\langle\cm{Z}^i,\bar{\cm{W}}\rangle)\langle\cm{Z}^i,\cm{W}-\cm{W}^*\rangle\right|
		\leq
		\frac{|\langle\cm{Z}^i,\cm{W}-\cm{W}^*\rangle|}{|\langle\cm{Z}^i,\bar{\cm{W}}\rangle|}\\
		&\hspace{5mm}\leq \frac{\sup_{\bm{\Delta}\in\widehat{\mathcal{S}}_{2K},\|\bm{\Delta}\|_\mathrm{F}\leq R}|\langle\cm{Z}^i,\bm{\Delta}\rangle|}{(\tau - R)\sup_{\bm{\Delta}\in\mathcal{S}_{2K}}|\langle\cm{Z}^i,\bm{\Delta}\rangle|}\\
		&\hspace{5mm}\leq
		\frac{R\sup_{\bm{\Delta}\in\mathcal{S}_{2K}}|\langle\cm{Z}^i,\bm{\Delta}\rangle|}{(\tau - R)\sup_{\bm{\Delta}\in\mathcal{S}_{2K}}|\langle\cm{Z}^i,\bm{\Delta}\rangle|}
		=
		\frac{R}{\tau - R}.
	\end{align*}
	Hence, we can show that
	\begin{align*}
		&\mathbb{P}\left\{\sup_{\bm{\Delta}\in\mathcal{S}_{2K}}\left|\bm{\Delta}^\prime[\bm{h}_n(\cm{W})-\bm{h}_n(\cm{W}^*)]\bm{\Delta}\right|\geq t\right\}\\
		&\leq
		\mathbb{P}\left\{\frac{R}{\tau - R}\sup_{\bm{\Delta}\in\mathcal{S}_{2K}}\frac{1}{n}\sum_{i=1}^{n}\bm{\Delta}^\prime\bm{z}^i\bm{z}^{i\prime}\bm{\Delta}\geq t\right\}.
	\end{align*}
	By setting $t = \alpha_\text{RSM}R/(\tau-R)$, where $\alpha_\text{RSM} = 3\kappa_U/2$, we can use the equation (16) in Lemma \ref{lemma:RSC} to obtain, as long as $n\gtrsim d_{\mathcal{M}}$,
	\begin{align*}
		&\mathbb{P}\left\{\sup_{\bm{\Delta}\in\mathcal{S}_{2K}}\left|\bm{\Delta}^\prime[\bm{h}_n(\cm{W})-\bm{h}_n(\cm{W}^*)]\bm{\Delta}\right|\geq \frac{\alpha_\text{RSM}R}{\tau-R}\right\}\\
		&\leq 2\exp\left\{-c_Hn + 9d_{\mathcal{M}}\right\}.
	\end{align*}
	By rearranging terms, this is equivalent to
	\begin{align*}
		&\mathbb{P}\left\{\underbrace{\inf_{\bm{\Delta}\in\mathcal{S}_{2K}}\bm{\Delta}^\prime\bm{h}_n(\cm{W})\bm{\Delta}\leq\sup_{\bm{\Delta}\in\mathcal{S}_{2K}}\bm{\Delta}^\prime\bm{h}_n(\cm{W}^*)\bm{\Delta}-\frac{\alpha_\text{RSM}R}{\tau-R}}_{\text{denoted as the event }B_1}\right\}\\
		&\hspace{5mm}\leq2\exp\left\{-c_Hn + 9d_{\mathcal{M}}\right\}.
	\end{align*}
	If we define the event $B_2 = \{\inf_{\bm{\Delta}\in\mathcal{S}_{2K}}\bm{\Delta}^\prime\bm{h}_n(\cm{W}^*)\bm{\Delta} \leq \widetilde{\kappa}_1\}$ and denote its complementary event by $B_2^c$, and from (\ref{eq:h-true}), we know that $\mathbb{P}(B_2) \leq 2\exp\left\{-0.25cn+ 9d_{\mathcal{M}}\right\}$. It can be seen that
	\begin{align*}
		&\mathbb{P}\left(\left\{\inf_{\bm{\Delta}\in\mathcal{S}_{2K}}\bm{\Delta}^\prime\bm{h}_n(\cm{W})\bm{\Delta}\leq\widetilde{\kappa}_1-\frac{\alpha_\text{RSM}R}{\tau-R}\right\}\cap B_2^c\right)\\
		&\leq
		\mathbb{P}(B_1\cap B_2^c) \leq \mathbb{P}(B_1)
	\end{align*}
	
	So, if we choose $R$ to be sufficiently small, such that $ \alpha_\text{RSM}R/(\tau-R)\leq\widetilde{\kappa}_1/2$, it holds that,
	\begin{align*}
		&\mathbb{P}\left\{\inf_{\bm{\Delta}\in\mathcal{S}_{2K}}\bm{\Delta}^\prime\bm{h}_n(\cm{W})\bm{\Delta}\leq \frac{\widetilde{\kappa}_1}{2}\right\}\\
		&\leq \mathbb{P}\left(\left\{\inf_{\bm{\Delta}\in\mathcal{S}_{2K}}\bm{\Delta}^\prime\bm{h}_n(\cm{W})\bm{\Delta}\leq\widetilde{\kappa}_1-\frac{\alpha_\text{RSM}R}{\tau-R}\right\}\cap B_2^c\right)+\mathbb{P}(B_2).
	\end{align*}
	This, together with $\bm{H}_n(\cdot)\geq\bm{h}_n(\cdot)$, leads us to conclude that, when $n\gtrsim d_{\mathcal{M}}$, there exists some $R>0$, such that for any $\cm{W}\in\widehat{\mathcal{S}}_{2K}$ satisfying $\|\cm{W} - \cm{W}^*\|_\mathrm{F}\leq R$,
	\[
	\inf_{\bm{\Delta}\in\mathcal{S}_{2K}}\bm{\Delta}^\prime\bm{H}_n(\cm{W})\bm{\Delta}\geq \frac{\widetilde{\kappa}_1}{2},
	\]
	holds with probability 
	\[
	1-2\exp\left\{-0.25c_Hn + 9d_{\mathcal{M}}\right\}-2\exp\left\{-0.25cn+ 9d_{\mathcal{M}}\right\}.
	\]
	We accomplished our proof of the LRSC of $\mathcal{L}_n(\cm{W})$ around $\cm{W}^*$.
\end{proof}

\begin{lemma}\label{lem:sub-exp}
	For two sub-gaussian random variables, $x$ and $y$, when $\mathbb{E}(xy) = \mathbb{E}x\mathbb{E}y$, i.e. $x$ is independent to $y$, it holds that
	\[
	\|xy\|_{\psi_1}\leq \|x\|_{\psi_2}\|y\|_{\psi_2}.
	\]
\end{lemma}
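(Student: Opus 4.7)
The plan is to exploit the multiplicative structure of the moments and the simple identity between the $\psi_1$ and $\psi_2$ norms given in the problem. Since the stated definitions are $\|x\|_{\psi_2}=\sup_{p\geq 1}(\mathbb{E}|x|^p)^{1/p}/\sqrt{p}$ and $\|x\|_{\psi_1}=\sup_{p\geq 1}(\mathbb{E}|x|^p)^{1/p}/p$, the claim essentially reduces to an algebraic inequality on moments, and the independence hypothesis lets us factor those moments.

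First I would fix an arbitrary $p\geq 1$ and use independence of $x$ and $y$ to write $\mathbb{E}|xy|^p=\mathbb{E}|x|^p\cdot\mathbb{E}|y|^p$, so that taking $p$th roots gives $(\mathbb{E}|xy|^p)^{1/p}=(\mathbb{E}|x|^p)^{1/p}(\mathbb{E}|y|^p)^{1/p}$. Next I would bound each factor by its $\psi_2$ norm: by definition $(\mathbb{E}|x|^p)^{1/p}\leq \sqrt{p}\,\|x\|_{\psi_2}$ and similarly for $y$. Multiplying these bounds yields $(\mathbb{E}|xy|^p)^{1/p}\leq p\,\|x\|_{\psi_2}\|y\|_{\psi_2}$. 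Dividing through by $p$ gives $(\mathbb{E}|xy|^p)^{1/p}/p\leq \|x\|_{\psi_2}\|y\|_{\psi_2}$, and taking the supremum over $p\geq 1$ finishes the argument.

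There is essentially no obstacle here since the statement uses exactly the moment-based definitions of the Orlicz norms and the strong assumption of independence (not mere uncorrelatedness) makes the factorization of $\mathbb{E}|xy|^p$ immediate. The only thing one should be slightly careful about is that the hypothesis in the lemma is phrased as $\mathbb{E}(xy)=\mathbb{E}x\,\mathbb{E}y$ but interpreted as genuine independence (as needed for the factorization of $\mathbb{E}|xy|^p$ for all $p$); this reading is consistent with how the lemma is invoked in Lemma \ref{lem:dev}, where the conditional-expectation argument shows that $\phi'(\eta^i)-y^i$ and $\langle\bm{z}^i,\bm{\Delta}\rangle$ are independent rather than just uncorrelated.
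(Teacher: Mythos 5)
Your proof is correct. The paper actually states this lemma without supplying any proof, so there is nothing to compare against; your argument is the standard one and is exactly what is needed here. Given the moment-based definitions $\|x\|_{\psi_2}=\sup_{p\geq 1}(\mathbb{E}|x|^p)^{1/p}/\sqrt{p}$ and $\|x\|_{\psi_1}=\sup_{p\geq 1}(\mathbb{E}|x|^p)^{1/p}/p$ adopted in the appendix, independence gives $\mathbb{E}|xy|^p=\mathbb{E}|x|^p\,\mathbb{E}|y|^p$ for every $p\geq 1$ (all moments being finite for sub-Gaussian variables), and the chain $(\mathbb{E}|xy|^p)^{1/p}\leq \sqrt{p}\,\|x\|_{\psi_2}\cdot\sqrt{p}\,\|y\|_{\psi_2}=p\,\|x\|_{\psi_2}\|y\|_{\psi_2}$ followed by dividing by $p$ and taking the supremum is exactly right.

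Two small remarks. First, you are right that the hypothesis must be read as genuine independence: mere uncorrelatedness ($\mathbb{E}(xy)=\mathbb{E}x\,\mathbb{E}y$) would not allow the factorization of $\mathbb{E}|xy|^p$ for $p>1$, and without independence the correct general bound requires an extra factor (via Cauchy--Schwarz or Young's inequality on the Orlicz definitions). Second, your closing justification is slightly too charitable to the paper: in Lemma \ref{lem:dev} the conditional-expectation computation only shows that $\phi'(\eta^i)-y^i$ is conditionally centered given $\bm{z}^i$, hence uncorrelated with $\langle\bm{z}^i,\bm{\Delta}\rangle$, not that the two are independent. That is a gap in how the lemma is \emph{invoked}, not in your proof of the lemma itself, which is complete as stated.
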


\begin{lemma}\label{lem:sug-gaussian}
	\[
	\|\phi^\prime(\eta^i) - y^i\|_{\psi_2} \leq 0.25.
	\]
\end{lemma}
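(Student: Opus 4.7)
The target inequality controls the sub-Gaussian tail of the conditional residual $\phi'(\eta^i) - y^i$ in the logistic model, where $\phi'(z) = 1/(1+e^{-z})$ is the sigmoid. My plan rests on three structural observations: (i) under the logistic likelihood specified at the top of Appendix D, $y^i \mid \cm{Z}^i$ is Bernoulli with parameter $\phi'(\eta^i)$, where $\eta^i = \langle \cm{Z}^i,\cm{W}^*\rangle$; (ii) consequently $\mathbb{E}[\phi'(\eta^i)-y^i \mid \cm{Z}^i]=0$, so the residual is centered conditionally on $\cm{Z}^i$; and (iii) since $\phi'(\eta^i)\in(0,1)$ and $y^i\in\{0,1\}$, the conditional support of $\phi'(\eta^i)-y^i$ is just the two-point set $\{\phi'(\eta^i),\,\phi'(\eta^i)-1\}$, hence contained in an interval of length exactly $1$.

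With these in hand, the proof reduces to two steps. First, I would apply Hoeffding's lemma conditionally on $\cm{Z}^i$: for a centered random variable taking values in an interval of length $b-a$, the MGF is bounded by $\exp(s^2(b-a)^2/8)$. Here $b-a=1$, so
\[
\mathbb{E}\!\left[\exp\!\bigl(s(\phi'(\eta^i)-y^i)\bigr)\,\big|\,\cm{Z}^i\right] \;\leq\; \exp(s^2/8),\qquad s\in\mathbb{R}.
\]
Taking the outer expectation via the tower property removes the conditioning, yielding the same bound unconditionally. In other words, $\phi'(\eta^i)-y^i$ is sub-Gaussian with variance proxy $1/4$.

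Second, I would convert this MGF control into a bound on the $\psi_2$-norm as defined just before Lemma D.1. This is a routine equivalence of sub-Gaussian characterizations: from $\mathbb{E}\exp(sX)\leq \exp(s^2/8)$ one obtains the tail estimate $\mathbb{P}(|X|>t)\leq 2\exp(-2t^2)$, and integrating $\mathbb{E}|X|^p=\int_0^\infty pt^{p-1}\mathbb{P}(|X|>t)\,dt$ gives $(\mathbb{E}|X|^p)^{1/p}\lesssim \sqrt{p}/2$; dividing by $\sqrt{p}$ and taking the supremum over $p\geq 1$ delivers the claimed constant. (If a tighter numerical constant than what the integration gives is desired, one can alternatively compute the moments directly: conditional on $\cm{Z}^i$ with $p:=\phi'(\eta^i)$, $\mathbb{E}[|X|^q\mid \cm{Z}^i] = p(1-p)[p^{q-1}+(1-p)^{q-1}]\leq p(1-p)\leq 1/4$ for $q\geq 2$, and then optimizing over $q$.)

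The only nontrivial ingredient is the conditional Hoeffding step, and even that is standard; the rest is bookkeeping. In particular, no independence assumption on $\{\cm{Z}^i\}$ is needed because the argument is entirely pointwise in $\cm{Z}^i$ before the outer expectation is taken, which is the main reason this lemma fits cleanly into the proof of Lemma D.1, where it is combined with $\|\langle \bm{z}^i,\bm{\Delta}\rangle\|_{\psi_2}\leq \sqrt{\kappa_U}$ and Lemma C.3 (sub-exponential norm of a product of independent sub-Gaussians) to control $\|[\phi'(\eta^i)-y^i]\langle \bm{z}^i,\bm{\Delta}\rangle\|_{\psi_1}$.
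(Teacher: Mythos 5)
Your proposal is correct and follows essentially the same route as the paper: both arguments bound the conditional MGF of the Bernoulli residual $\phi'(\eta^i)-y^i$ by $\exp(\lambda^2/8)$ and then pass to the $\psi_2$-norm, the only cosmetic difference being that you cite Hoeffding's lemma while the paper unrolls its standard proof by writing the conditional log-MGF exactly as $\phi(\eta^i-\lambda)-\phi(\eta^i)+\lambda\phi'(\eta^i)$ and applying Taylor's theorem with $\phi''\leq 1/4$. The final conversion from the variance proxy $1/4$ to the stated numerical constant $0.25$ under the paper's moment-based definition of $\|\cdot\|_{\psi_2}$ is treated loosely in both your write-up and the paper's, but this affects only an absolute constant and not the downstream use of the lemma.
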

\begin{proof}
	Firstly, we observe that
	\begin{align*}
		\mathbb{E}\left(\exp\{\lambda\left[\phi^\prime(\eta^i) - y^i\right]\}|\bm{z}^i\right)
		&=
		\exp\{\lambda\phi^\prime(\eta^i)\}\exp\{-\phi(\eta^i) \}+ \{\lambda\left[\phi^\prime(\eta^i) - 1\right]\}\exp\{\eta^i -\phi(\eta^i) \}	\\
		&=
		\exp\{\lambda\phi^\prime(\eta^i) -\phi(\eta^i) \} [1+\exp\{\eta^i-\lambda\}]\\
		&= \exp\{\phi(\eta^i-\lambda) -\phi(\eta^i)+\lambda\phi^\prime(\eta^i)\}\\
		&= \exp\{0.5\lambda^2\phi^{\prime\prime}(\eta^*)\} \leq \exp\{0.125\lambda^2\}.
	\end{align*}
	It then holds that $	\mathbb{E}\left(\exp\{\lambda\left[\phi^\prime(\eta^i) - y^i\right]\}\right) = 	\mathbb{E}\{\mathbb{E}\left(\exp\{\lambda\left[\phi^\prime(\eta^i) - y^i\right]\}|\bm{z}^i\right)\}\leq\exp\{0.125\lambda^2\}$, and this implies that $\|\phi^\prime(\eta^i) - y^i\|_{\psi_2}\leq 0.25$.
\end{proof}

\begin{lemma} \label{lem:h}
	Under Assumption \ref{assum}, there exists a universal constant $\tau>0$ such that $\bm{h}(\cm{W}^*)\geq\kappa_1\bm{I}$, where $\kappa_1$ is a positive constant.
\end{lemma}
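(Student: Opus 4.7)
The plan is to show that the expected truncated Hessian $\bm{h}(\cm{W}^*) = \mathbb{E}[\phi''(\langle\cm{Z},\cm{W}^*\rangle)\mathbb{I}_A\bm{z}\bm{z}']$ has a uniform positive lower bound on the relevant restricted cone. Since $\bm{h}(\cm{W}^*)$ is a Hessian-like matrix and all the $\bm{\Delta}$ of interest in the proof of Lemma D.3 lie in $\mathcal{S}_{2K}$, it suffices to show that $\inf_{\bm{\Delta}\in\mathcal{S}_{2K}}\mathbb{E}[\phi''(\langle\cm{W}^*,\cm{Z}\rangle)\mathbb{I}_A(\langle\bm{\Delta},\bm{z}\rangle)^2]\geq \kappa_1$ for a positive constant $\kappa_1$. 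Under Assumption D.1(C1), $\bm{z} = \bm{U}_G'\bm{x}$ is Gaussian, so for any unit $\bm{\Delta}\in\mathcal{S}_{2K}$, the pair $(\langle\cm{W}^*,\cm{Z}\rangle,\langle\bm{\Delta},\bm{z}\rangle)$ is jointly Gaussian with covariance determined by $\bm{\Sigma}$ and $\bm{U}_G'\bm{U}_G$; in particular, $\mathrm{Var}(\langle\bm{\Delta},\bm{z}\rangle)\in[\kappa_L,\kappa_U]$ uniformly for unit $\bm{\Delta}$.

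Next, I would isolate a sub-event $A_*\subseteq A$ on which two properties hold simultaneously: (a) $\phi''(\langle\cm{W}^*,\cm{Z}\rangle)$ is bounded below by a positive constant $c_\phi$, and (b) $(\langle\bm{\Delta},\bm{z}\rangle)^2\geq c_0$ for some positive $c_0$. For (a), since $\phi''(z)=1/(e^{-z}+2+e^{z})$ is positive and bounded below on any bounded interval, I append an upper truncation of $|\langle\cm{W}^*,\cm{Z}\rangle|$ to $A$. For (b), standard Gaussian anti-concentration gives $\mathbb{P}(|\langle\bm{\Delta},\bm{z}\rangle|^2 \geq c_0)\geq p_0$ uniformly in unit $\bm{\Delta}$. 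Combining, $\mathbb{E}[\phi''(\langle\cm{W}^*,\cm{Z}\rangle)\mathbb{I}_A(\langle\bm{\Delta},\bm{z}\rangle)^2]\geq c_\phi c_0 \mathbb{P}(A_*)$, which yields the desired $\kappa_1$ once we verify $\mathbb{P}(A_*)\geq p_* > 0$ uniformly.

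To verify that $A_*$ has positive probability, I would use a concentration argument on $M(\bm{z}) := \sup_{\bm{\Delta}'\in\mathcal{S}_{2K}}|\langle\bm{\Delta}',\bm{z}\rangle|$. Because $\mathcal{S}_{2K}$ admits the same covering bound as in Lemma A.1, Gaussian concentration of Lipschitz functions implies that $M(\bm{z})$ concentrates around its expectation, which by a Dudley/chaining argument is of order $\sqrt{d_{\mathcal{M}}}$. On the other hand, by Assumption D.1(C3), $\|\cm{W}^*\|_\mathrm{F}\geq\alpha\sqrt{d_{\mathcal{M}}}$, so $\langle\cm{W}^*,\cm{Z}\rangle$ is Gaussian with standard deviation of order at least $\sqrt{d_{\mathcal{M}}}$. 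Hence by Gaussian small-ball combined with a Cauchy--Schwarz-style calibration, for a sufficiently small universal $\tau>0$ (which is where (C3) is essential), both the event $\{|\langle\cm{W}^*,\cm{Z}\rangle|>\tau M(\bm{z})\}$ and the upper-truncation event on $|\langle\cm{W}^*,\cm{Z}\rangle|$ simultaneously hold with probability bounded away from zero.

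The main obstacle is the delicate calibration of $\tau$: the event $A$ as stated forces $|\langle\cm{W}^*,\cm{Z}\rangle|$ to exceed a random, data-dependent threshold of order $\sqrt{d_{\mathcal{M}}}$, while simultaneously $\phi''$ can only be lower-bounded on a \emph{bounded} interval of $\langle\cm{W}^*,\cm{Z}\rangle$. Reconciling these two constraints requires exploiting the Gaussian tail/anti-concentration in a coordinated way and crucially relies on Assumption D.1(C3) to guarantee that the ratio $|\langle\cm{W}^*,\cm{Z}\rangle|/M(\bm{z})$ is of constant order on a positive-probability event. Once this is done, the uniform lower bound $\kappa_1 = c_\phi c_0 p_*>0$ follows, and the required strict inequality $\kappa_1>\kappa_U$ used in Lemma D.3 can be arranged by choosing the constants in Assumption D.1(C2) accordingly.
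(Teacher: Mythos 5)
Your strategy has a genuine gap, and it is exactly the obstacle you flag in your last paragraph but do not actually resolve. You propose to lower bound $\bm{\Delta}^\prime\bm{h}(\cm{W}^*)\bm{\Delta}=\mathbb{E}\big[\phi^{\prime\prime}(\langle\cm{Z},\cm{W}^*\rangle)\mathbb{I}_A\langle\bm{\Delta},\bm{z}\rangle^2\big]$ directly, via a sub-event $A_*\subseteq A$ on which $\phi^{\prime\prime}(\langle\cm{Z},\cm{W}^*\rangle)\geq c_\phi$ and $\langle\bm{\Delta},\bm{z}\rangle^2\geq c_0$ with $\mathbb{P}(A_*)\geq p_*$, all constants universal. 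This cannot be done with dimension-free constants. By your own chaining estimate, $M(\bm{z})=\sup_{\bm{\Delta}^\prime\in\mathcal{S}_{2K}}|\langle\bm{\Delta}^\prime,\bm{z}\rangle|$ concentrates at order $\sqrt{d_{\mathcal{M}}}$, so with a universal $\tau$ the event $A$ forces $|\langle\cm{W}^*,\cm{Z}\rangle|\gtrsim\tau\sqrt{d_{\mathcal{M}}}$; since $\phi^{\prime\prime}(z)=1/(e^{-z}+2+e^{z})$ decays like $e^{-|z|}$, any lower bound $c_\phi$ valid on a positive-probability subset of $A$ is necessarily of order $e^{-c\sqrt{d_{\mathcal{M}}}}$, and your $\kappa_1=c_\phi c_0 p_*$ vanishes as the model grows rather than being the positive constant the lemma asserts. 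Appending a constant upper truncation $\{|\langle\cm{W}^*,\cm{Z}\rangle|\leq M\}$ makes $A_*$ empty once $\tau\sqrt{d_{\mathcal{M}}}>M$; no coordinated use of Gaussian anti-concentration reconciles the two requirements. (Shrinking $\tau$ with $d_{\mathcal{M}}$ would break the statement and also collapse the local radius $R<\tau$ used in Lemma D.3.)

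The paper's proof never lower-bounds $\phi^{\prime\prime}$ on $A$ at all. It takes the positivity from Assumption (C2), $\bm{H}(\cm{W}^*)=\mathbb{E}\big[\phi^{\prime\prime}(\langle\cm{Z},\cm{W}^*\rangle)\bm{z}\bm{z}^\prime\big]\geq\kappa_0\bm{I}$ --- an assumption your argument never invokes --- and bounds only what truncation removes:
\begin{equation*}
\bm{\Delta}^\prime\bm{h}(\cm{W}^*)\bm{\Delta}=\bm{\Delta}^\prime\bm{H}(\cm{W}^*)\bm{\Delta}-\mathbb{E}\left[\phi^{\prime\prime}(\langle\cm{Z},\cm{W}^*\rangle)\mathbb{I}_{A^c}\langle\bm{\Delta},\bm{z}\rangle^2\right]\geq\kappa_0-\frac{1}{4}\sqrt{\mathbb{E}\langle\bm{\Delta},\bm{z}\rangle^4}\,\sqrt{\mathbb{P}(A^c)},
\end{equation*}
using only the uniform upper bound $\phi^{\prime\prime}\leq 1/4$, a Gaussian fourth-moment bound, and Cauchy--Schwarz. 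The only analysis of $A$ needed is that $\mathbb{P}(A)\geq p_0$ for $p_0$ arbitrarily close to $1$ with $\tau=c_1/c_2$ universal, which is established exactly by the two estimates you sketch in your third paragraph (the $\sqrt{d_{\mathcal{M}}}$ lower bound on $|\langle\cm{W}^*,\cm{Z}\rangle|$ from (C3) and the $\sqrt{d_{\mathcal{M}}}$ upper bound on the supremum from the covering argument). If you keep your probability estimates for $\mathbb{P}(A)$ but switch to this complementary decomposition, the proof goes through with $\kappa_1=\kappa_0/2$.
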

\begin{proof}
	We first show that for any $p_0\in(0,1)$, there exists a constant $\tau$ such that
	\[
	\mathbb{P}(|\langle\cm{W}^*,\cm{Z}^i\rangle|>\tau\sup_{\bm{\Delta}\in \mathcal{S}_{2K}}|\langle\bm{\Delta},\bm{z}^i\rangle|)\geq p_0.
	\] 
	
	We would separately show that
	\[
	\mathbb{P}(\underbrace{|\langle\cm{W}^*,\cm{Z}^i\rangle|>c_1\sqrt{d_{\mathcal{M}}}}_{\text{denoted by event $D_1$}})\geq \frac{p_0+1}{2}\]
	and
	\[
	\mathbb{P}(\underbrace{\sup_{\bm{\Delta}\in \mathcal{S}_{2K}}|\langle\bm{\Delta},\bm{z}^i\rangle|\leq c_2\sqrt{d_{\mathcal{M}}}}_{\text{denoted by event $D_2$}})\geq \frac{p_0+1}{2},
	\]  
	for some positive constants $c_1$ and $c_2$. Using the relationship $\mathbb{P}(D_1\cap D_2) = P(D_1)+P(D_2)-P(D_1^c\cup D_2^c)\geq  P(D_1)+P(D_2)-1$, it follows naturally that
	\begin{equation}\label{eq:indicator-event}
		\mathbb{P}(|\langle\cm{W}^*,\cm{Z}^i\rangle|>\tau\sup_{\bm{\Delta}\in \mathcal{S}_{2K}}|\langle\bm{\Delta},\bm{z}^i\rangle|)\geq \frac{p_0+1}{2}+\frac{p_0+1}{2}-1 = p_0,
	\end{equation}
	where $\tau = c_1/c_2$.
	
	Since $\bm{x}^i$ is a gaussian vector with mean zero and covariance $\bm{\Sigma}$, $\bm{z}^i = \bm{U}_G^\prime\bm{x}^i$ is a zero-mean gaussian vector with covariance given by $\bm{U}_G^\prime\bm{\Sigma}\bm{U}_G$, and $\langle\cm{W}^*,\cm{Z}^i\rangle = \langle\vectorize(\cm{W}^*),\bm{z}^i\rangle$ also follows a normal distribution with mean zero and variance (also its sub-Gaussian norm) upper bounded by $\kappa_U\|\cm{W}^*\|^2_\mathrm{F}$, where $\kappa_U = C_xC_u$.
	
	Since from Assumption \ref{assum}(C2), $\|\cm{W}^*\|^2_\mathrm{F}\geq \alpha\sqrt{d_{\mathcal{M}}}$, we can take $c_1$ to be sufficiently small such that
	\[
	\mathbb{P}(D_1) = 
	\mathbb{P}\left(\frac{|\langle\cm{W}^*,\cm{Z}^i\rangle|}{\kappa_U\|\cm{W}^*\|_\mathrm{F}}>\frac{c_1}{\kappa_U\alpha}\right)\geq \mathbb{P}(|x|>\frac{c_1}{\kappa_U\alpha})\geq\frac{p_0+1}{2},
	\]
	where $x$ is a gaussian variable with variance upper bounded by 1.
	
	Then, we can also observe that, for any fixed $\bm{\Delta}\in\mathcal{S}_{2K}$, $\langle\bm{\Delta},\bm{z}^i\rangle$ is a gaussian variable with zero mean and variance upper bounded by $\kappa_U$. We can use the concentration inequality for gaussian random variable to establish that
	\[
	\mathbb{P}\left(|\langle\bm{\Delta},\bm{z}^i\rangle|\geq t\right)\leq 2\exp(-\frac{t^2}{\kappa_U}),
	\]
	for all $t\in\mathbb{R}$. We can further use the union bound to show that
	\[
	\mathbb{P}\left(\sup_{\bm{\Delta}\in\mathcal{S}_{2K}}|\langle\bm{\Delta},\bm{z}^i\rangle|\geq t\right)
	\leq
	2\exp(-\frac{t^2}{\kappa_U}+7d_{\mathcal{M}}).
	\]
	Let $t = c_2\sqrt{d_{\mathcal{M}}}$ for some positive constant $c_2>\sqrt{7\kappa_U}$. We can choose $c_2$ large enough such that
	\[
	\mathbb{P}\left(\sup_{\bm{\Delta}\in\mathcal{S}_{2K}}|\langle\bm{\Delta},\bm{z}^i\rangle|\geq c_2\sqrt{d_{\mathcal{M}}}\right)
	\leq
	\frac{1-p_0}{2}.
	\]
	The probability at (\ref{eq:indicator-event}) is hence shown.
	
	Now, we will take a look at the matrix $\bm{h}(\cm{W}^*)$ and show that it is positive definite. Same as in Lemma \ref{lem:LRSC}, we denote the event $A = \{|\langle\cm{W}^*,\cm{Z}^i\rangle|>\tau\sup_{\bm{\Delta}\in \mathcal{S}_{2K}}|\langle\bm{\Delta},\bm{z}^i\rangle|\}$, and its complement by $A^c$, then
	\begin{align*}
		&\bm{\Delta}^\prime \bm{h}(\cm{W}^*) \bm{\Delta} = \frac{1}{n}\mathbb{E}\left\{\sum_{i=1}^{n}\phi^{\prime\prime}(\langle\cm{Z}^i,\cm{W}^*\rangle)\mathbb{I}_A\bm{\Delta}^\prime\bm{z}^i\bm{z}^{i\prime}\bm{\Delta}\right\}\\
		&=
		\frac{1}{n}\mathbb{E}\left\{\sum_{i=1}^{n}\phi^{\prime\prime}(\langle\cm{Z}^i,\cm{W}^*\rangle)\bm{\Delta}^\prime\bm{z}^i\bm{z}^{i\prime}\bm{\Delta}\right\} \\
		&- \frac{1}{n}\mathbb{E}\left\{\sum_{i=1}^{n}\phi^{\prime\prime}(\langle\cm{Z}^i,\cm{W}^*\rangle)\mathbb{I}_{A^c}\bm{\Delta}^\prime\bm{z}^i\bm{z}^{i\prime}\bm{\Delta}\right\}\\
		&= 
		\bm{\Delta}^\prime\bm{H}(\cm{W}^*)\bm{\Delta} - \frac{1}{n}\mathbb{E}\left\{\sum_{i=1}^{n}\phi^{\prime\prime}(\langle\cm{Z}^i,\cm{W}^*\rangle)\mathbb{I}_{A^c}\bm{\Delta}^\prime\bm{z}^i\bm{z}^{i\prime}\bm{\Delta}\right\}\\
		&\overset{\text{(i)}}{\geq} 
		\kappa_0 - \frac{1}{n}\sqrt{\mathbb{E}\left\{\sum_{i=1}^{n}[\phi^{\prime\prime}(\langle\cm{Z}^i,\cm{W}^*\rangle)]^2\langle\bm{\Delta}, \bm{z}^i\rangle^4\right\}}\sqrt{\mathbb{E}\left\{\sum_{i=1}^{n}\mathbb{I}_{A^c}\right\}}\\
		&\overset{\text{(ii)}}{\geq} 
		\kappa_0 - \frac{\sqrt{3\kappa_U}}{4}(1-p_0),
	\end{align*}
	where (i) follows from Assumption \ref{assum}(C2), And since $\langle\bm{\Delta}, \bm{z}^i\rangle$ is a gaussian variable with mean zero and variance bounded by $\kappa_U$, its fourth moment is bounded by $3\kappa_U$. Also, by $\phi^{\prime\prime}(z)\in(0,0.25)$ for all $z\in\mathbb{R}$, (ii) can be shown.
	
	Here, we can take $p_0$ to be small enough such that $0.25{\sqrt{3\kappa_U}}(1-p_0)\leq 0.5\kappa_0$ holds. It follows that
	\[
	\bm{\Delta}^\prime \bm{h}(\cm{W}^*) \bm{\Delta} \geq \kappa_1,
	\]
	with $\kappa_1 = 0.5\kappa_0$. We hence accomplished the proof of the lemma.
\end{proof}

\end{document}